\def\eqref#1{equation~\ref{#1}}
\def\1{\bm{1}}
\DeclareMathAlphabet{\mathsfit}{\encodingdefault}{\sfdefault}{m}{sl}
\SetMathAlphabet{\mathsfit}{bold}{\encodingdefault}{\sfdefault}{bx}{n}
\DeclareMathOperator*{\argmin}{arg\,min}
\DeclareMathOperator{\vect}{vec}
\DeclareMathOperator{\poly}{poly}
\newcommand{\<}{\left\langle}
\renewcommand{\>}{\right\rangle}
\theoremstyle{plain}
\newtheorem{theorem}{Theorem}[section]
\newtheorem{proposition}[theorem]{Proposition}
\newtheorem{lemma}[theorem]{Lemma}
\theoremstyle{definition}
\newtheorem{definition}[theorem]{Definition}
\newtheorem{assumption}[theorem]{Assumption}
\theoremstyle{remark}
\newtheorem{remark}[theorem]{Remark}
\newtheorem{example}{Example}
\title{From Optimization Dynamics to Generalization Bounds via Łojasiewicz Gradient Inequality}
\author{\name Fusheng Liu \email fusheng@u.nus.edu \\
      \addr Institute of Data Science\\
      National University of Singapore
      \AND
      \name Haizhao Yang \email hzyang@umd.edu \\
      \addr Department of Mathematics\\
      University of Maryland College Park
      \AND
      \name Soufiane Hayou \email hayou@nus.edu.sg\\
      \addr Department of Mathematics\\
      National University of Singapore
      \AND
      \name Qianxiao Li \email qianxiao@nus.edu.sg\\
      \addr Department of Mathematics\\
      National University of Singapore
      }
\begin{document}

\maketitle

\begin{abstract}
Optimization and generalization are two essential aspects of statistical machine learning.
In this paper, we propose a framework to connect optimization with generalization by
analyzing the generalization error based on the optimization trajectory under the gradient flow algorithm.
The key ingredient of this framework is the Uniform-LGI,
a property that is generally satisfied when training machine learning models.
Leveraging the Uniform-LGI, we first derive convergence rates for gradient flow algorithm, then we
give generalization bounds for a large class of machine learning models.
We further apply our framework to three distinct machine learning models: linear regression, kernel regression, and two-layer neural networks.
Through our approach, we obtain generalization estimates that match or extend previous results.

\end{abstract}

\section{Introduction}
From the perspective of statistical learning theory, the goal of machine learning is to find a predictive function that can give accurate predictions on new data. For supervised learning problems, empirical risk minimization (ERM) is a common practice to achieve this goal. The idea of ERM is to minimize a cost function on observed data using an optimization algorithm. Therefore, a fundamental question is:
\textit{given a training algorithm, does it produce a solution with good generalization}?
This question has been the subject of a substantial body of literature, which answers this question in terms of the implicit bias of optimization methods such as stochastic gradient descent (SGD).
For example,
one line of works considered the case where gradient methods converge to minimal norm solutions on kernel regression \citep{bartlett2020benign, tsigler2020benign, liang2020just, liang2020multiple}, and then analyzed the generalization properties of those minimal norm solutions by bias–variance tradeoff.
Another line of works focused on the Neural Tangent Kernel (NTK) regime \citep{allen2019learning, arora2019fine, cao2020generalization, Ji2020Polylogarithmic, chen2021how}
where SGD iterates converges 
to a global minimum with a short distance from initialization.
They suggested to use norm-based measures to theoretically derive generalization bounds.
Specifically, these papers studied the generalization of (deep) neural networks on a norm-constrained parameter space $\mathcal{W} = \{W : W \in \mathcal{B}(0, R)\}$, where $W$ is the collection of weight matrices for all layers, and $\mathcal{B}(0, R)$ is a ball with radius $R$.
On the NTK regime, SGD with random initialization is proved to be able to find a global minimum in the parameter space $\mathcal{B}(0, R)$ with $R = \sqrt{y^\top \left(\Theta^{(L)}\right)^{-1} y}$, where $y$ is the label and $\Theta^{(L)}$ is the NTK matrix defined on the training input data \citep{arora2019fine, cao2020generalization}.
All of these works have made significant progress on the interplay of optimization and generalization.
However, the settings that they focused on are specific in the sense that
(1) the phenomenon of norm minimization has only been proven to occur with the quadratic loss with an appropriate initialization scheme;
(2) $R$ has been theoretically obtained only in the NTK regime;
(3) these works has considered the models after convergence and the generalization analysis of models during training are not completely understood.
Therefore, the connection between optimization and generalization still remains incomplete understandings in general scenarios.

In this paper, we aim to tackle these issues by studying the connection between optimization and generalization of a wide class of machine learning models.
Thus, a more precise analysis of how the parameter 
evolves when the training time varies is needed. 
For this purpose, inspired by the Łojasiewicz gradient inequality (LGI) condition in \citet{bolte2007lojasiewicz}, which states that the gradient norm is lower bounded by some power of the value function, we consider a uniform version of LGI that extends this condition to a non-vanishingly small set.
Specifically, we propose the \textit{Uniform-LGI} (\cref{U-LGI}), which is a modified version of the LGI. This assumption plays a critical role in connecting optimization and generalization through norm-based generalization bounds.
The first section is dedicated to the numerical validation of the Uniform-LGI on different machine learning models.
By introducing the \emph{local} Uniform-LGI condition along the optimization path,
we derive convergence rates and generalization bounds that yield bias-variance tradeoffs during training.
Our framework can be applied to a broad class of machine learning models to obtain optimization results and generalization estimates.

\textbf{Contributions.}  Our contributions are three-fold:
\begin{itemize}
    \item First, we design a finite sample test algorithm to verify the Uniform-LGI condition along the training path. Our numerical results suggest the Uniform-LGI condition is generally satisfied  when training machine learning models and is more general compared to the Polyak-Łojasiewicz (PL) condition \citep{Polyak1963GradientMF}.
    Then, we propose a framework
    for connecting optimization dynamics and generalization performance based on the Uniform-LGI condition.

    \item
    Specifically,
    we first analyze the convergence rate for the loss functions that satisfy the Uniform-LGI condition (Theorem \ref{opt}) under the gradient flow algorithm.
    Through Rademacher complexity theory, we
    derive a generalization bound (Theorem \ref{gen}) for a wide class of hypothesis spaces that holds during the training process.
    The generalization bound
    exhibits bias–variance tradeoff pattern.

    \item We illustrate different use cases of our framework, showing how we obtain generalization estimates for a linear regression problem (Theorem \ref{th3}), kernel regression (Theorem \ref{th4}), and two-layer neural networks (Theorem \ref{shallow nn} for shallow networks \& Theorem \ref{th5} for overparameterized case). These bounds are derived in a unified way, and either match existing results derived for individual cases or expand upon the scenarios where we can rigorously establish the phenomenon of benign overfitting.
\end{itemize}

\section{Main Results}\label{section3}

In this section, we present our main results.
We first introduce the notations and the problem setting in Section \ref{setup}.
We then in investigate the key ingredient, the Uniform-LGI condition of our framework in Section \ref{investigation on lgi}.
Lastly, we derive optimization results and generalization bounds for the gradient flow trajectory in Section \ref{opt and gen} under the Uniform-LGI condition.

\subsection{Setup and Notations}\label{setup}

Consider a hypothesis space $\mathcal{F} = \left\{f (w, \cdot) : \mathbb{R}^d \xrightarrow{} \mathbb{R} \mid w \in \mathcal{W}\right\}$, where $\mathcal{W}$ is a parameter set in Euclidean space. Given a loss function $\ell : \mathbb{R} \times \mathbb{R} \xrightarrow{} \mathbb{R}$, and a training set $S=\left\{\left({x}_{i}, y_{i}\right)\right\}_{i=1}^{n} \subseteq \mathbb{R}^d \times \mathbb{R}$ with $n$  independent and identically distributed (i.i.d.) samples from a joint distribution $\mathcal{D}$, the goal of ERM is to optimize the empirical loss function $\mathcal{L}_n (w)$ on $S$:
\begin{equation}\label{empirical loss}
    \min_{w} \mathcal{L}_n (w) := \frac{1}{n} \sum_{i=1}^n \ell \left(f (w, x_i), y_i\right).
\end{equation}
\textbf{Notations.} \ We use $\left\|\cdot\right\|$ to denote the $\ell_2$ norm of a vector or the spectral norm of a matrix, and use $\left\|\cdot\right\|_F$ to denote the Frobenius norm of a matrix. For two vectors, we use $\< , \>$ to denote their inner product. For a symmetric matrix $A$, we use $\lambda_{\min} (A)$, resp. $\lambda_{\max} (A)$ to denote the smallest, resp. the largest, eigenvalue of $A$.
For any non-negative integer $n$, let $[0: n] = \{0, 1, \ldots, n\}$.
We use $\mathcal{O} (\cdot)$ to denote the Big-O bound.

\subsection{Investigation on Uniform-LGI}\label{investigation on lgi}

Now we introduce the key component of our framework: the Uniform-LGI,
a condition that holds for a wide class of loss functions. We give several examples of loss functions that satisfy the Uniform-LGI. Our numerical results suggest that the Uniform-LGI is generally satisfied when training neural network models.


The classical LGI gives a lower bound on the gradient of a differentiable function based on its value above its minimum. Many functions, e.g., real analytic functions and subanalytic functions, satisfy this property, at least locally \citep{bolte2007lojasiewicz}. Here, we require a uniform version of this inequality as a condition to control the optimization trajectory. Let us define this notion below.
\begin{definition}[Uniform-LGI]\label{U-LGI}
A loss function $\mathcal{L} (w)$ satisfies Uniform-LGI on a set $\mathcal{S}$ with constants $\theta \in [1/2, 1)$ and $c > 0$, if
\begin{equation}\label{Uniform-LGI equation}
    \left\|\nabla \mathcal{L} (w)\right\| \geq c \left(\mathcal{L} (w) - \min_{v\in \mathcal{S}} \mathcal{L} (v) \right)^{\theta}, \, \forall  w \in \mathcal{S}.
\end{equation}
\end{definition}
The Uniform-LGI is a more general condition than the well-known PL-condition \citep{karimi2016linear} in the sense that 1) The $\mu$-PL condition is a special case for the Uniform-LGI when $c = \sqrt{2\mu}, \theta = 1/2$. The Uniform-LGI contains a wider class of functions than the PL-condition. For instance, consider $\mathcal{L}(w) = w^{2k}$, this function satisfies the PL-condition only when $k = 1$. When $k \geq 2$, it satisfies the general Uniform-LGI with $\theta = 1 - 1/2k$. Figures \ref{fig: synthetic}(a), \ref{fig: synthetic}(b), Figure \ref{fig: LGI}(a), \ref{fig: LGI}(b), \ref{fig: LGI}(c) show that many nontrivial setups do not have $\theta = 1/2$ in practice. Besides, the well-known PL-condition in \citep{karimi2016linear} is a defined in a neighborhood of a \textit{global} minimum, indicating that every stationary point is a global minimum, while the Uniform-LGI is more general that is defined in an arbitrary set, where there may exist no global minimum. 
This yields that the Uniform-LGI covers more scenarios than the PL-condition when analyzing optimization properties as
it is difficult for an optimization algorithm to find a global minimum in practice.

Note that this is an inequality, so the pair of $(c, \theta)$ such that the Uniform-LGI holds is not unique. 
Usually the smallest $\theta$ is defined as the Uniform-LGI exponent.
The classical LGI states that for any subanalytic function \citep{bolte2007lojasiewicz} including a.e. differentiable models with squared loss, cross-entropy loss and hinge loss, the exponent $\theta$ is strictly less than $1$.
In the following, we give some examples of loss functions satisfying the Uniform-LGI globally over their entire domains or locally along the optimization path.

\textbf{Global Uniform-LGI.} \
Let the loss function $\mathcal{L}(w_L, \ldots, w_1) = (w_L \cdots w_1)^2$, which can be viewed as a one-dimensional $L$-layer linear neural network model with squared loss on the data $(1, 0)$. Then $\left\|\nabla \mathcal{L}\right\|^2 = 4(w_L \cdots w_1)^4 \sum_{i=1}^L 1 / w_i^2 \geq 4 L (w_L \cdots w_1)^{4 - 2/L} = 4 L( \mathcal{L})^{2-1/L}$. Therefore, $\mathcal{L}$ satisfies Uniform-LGI condition globally on $R^L$ with $c = 2\sqrt{L}$ and $\theta = 1 - 1/2L$ for $L \geq 1$.

Apart from the global Uniform-LGI, there exists a large number of loss functions satisfying the Uniform-LGI locally within a given range.
Since we want to study the optimization and generalization
of a training algorithm, next we investigate the Uniform-LGI condition along the optimization path during training.

\textbf{(Local) Uniform-LGI along the optimization path.} \
Given an initialization $w^{(0)}$,
a gradient-based optimization algorithm $\mathcal{A}$ (e.g. GD, SGD, etc.) produces a series of parameters $w^{(0)}, w^{(1)}, w^{(2)}, \ldots, w^{(k)}$ for the first $k$ steps.
If $\mathcal{A}$ converges, $k$ can be infinite and the limiting parameter $w^{(\infty)}$ is a stationary  point of $\mathcal{L}(w)$.
To numerically verify the Uniform-LGI condition (find $\theta, c$) along the entire optimization path $\left\{w^{(i)}\right\}_{i=0}^{\infty}$, i.e., find $c, \theta$ such that $\log \left\|\nabla \mathcal{L} (w)\right\| \geq \log c + \theta \log \left(\mathcal{L} (w) - \min_{v\in \mathcal{S}} \mathcal{L} (v) \right)$ holds on $S = \{w^{(i)}\}_{i=1}^\infty$, we design an algorithm for finite sample test.
First, for the first $k$ collected data points, we consider to use linear regression to fit these $k$ data points and get the slope $\theta_k$, then we adjust $c$ to catch the worst case, i.e., set $c_k = \min_{i \in [0 : k-1]} \left\|\nabla \mathcal{L} (w^{(i)})\right\| / \left(\mathcal{L} (w^{(i)}) - \mathcal{L} (w^*) \right)^{\theta_k}$. Then the obtained $c_k, \theta_k$ are the constants such that the Uniform-LGI holds for the first $k$ data points.
While numerical experiments can never \textit{prove} an inequality, it is useful in estimating the constants by looking at the limiting values as the number of points increases. This is called finite-size scaling analysis \citep{privman1990finite} and is frequently used in fields such as statistical physics to investigate numerically the behaviour of infinite-size systems (e.g. phase transitions).

\begin{algorithm}
\caption{Finite sample test for Uniform-LGI}
\label{finite sample test}
    \begin{algorithmic}[1]

\Require
loss function $\mathcal{L}(w)$;
a collection of parameters $w^{(0)}, w^{(1)}, w^{(2)}, \ldots, w^{(K)}$; the optimal loss value $\mathcal{L}(w^*)$;
start point $K_0$;
step $s$

\For{$k=0$ to $K$}
\State
calculate the gradient norm $\left\|\nabla \mathcal{L}(w^{(k)})\right\|$
\EndFor

\For{$k=K_0$ to $K$ step $s$}
\State
collect the $k$ data $\left\{\left(\log \left(\mathcal{L}(w^{(i)}) - \mathcal{L}(w^*)\right), \log \left\|\nabla \mathcal{L}(w^{(i)})\right\|\right)\right\}_{i=0}^{k-1}$
\State
fit the data by linear regression, and return the slope $\theta$
\State
$\theta_k \gets \theta$
\State
$c_k \gets \min_{i \in [0 : k-1]} \frac{\left\|\nabla \mathcal{L} (w^{(i)})\right\|}{\left(\mathcal{L} (w^{(i)}) - \mathcal{L} (w^*) \right)^{\theta_k}}$

\EndFor

\State
fit the data $\left\{\left(\log \left(\mathcal{L}(w^{(i)}) - \mathcal{L}(w^*)\right), \log \left\|\nabla \mathcal{L}(w^{(i)})\right\|\right)\right\}_{i=0}^{k}$ by linear regression, and return the slope $\theta$
\State
$\theta_{K+1} \gets \theta$
\State
$c_{K+1} \gets \min_{i \in [0 : K]} \frac{\left\|\nabla \mathcal{L} (w^{(i)})\right\|}{\left(\mathcal{L} (w^{(i)}) - \mathcal{L} (w^*) \right)^{\theta_{K+1}}}$

\Ensure the estimated Uniform-LGI constants $(\theta_{K_0}, c_{K_0}), (\theta_{K_0+s}, c_{K_0+s}), \ldots, (\theta_{K+1}, c_{K+1})$
\end{algorithmic}
\end{algorithm}
\par

Note that the pair of $(c, \theta)$ such that the Uniform-LGI holds is not unique, and the output $(\theta_{K+1}, c_{K+1})$ is \textit{one} possible pair of constants such that the Uniform-LGI holds on the first $K$ collected data points.
Intuitively, if Algorithm \ref{finite sample test} outputs a series of estimated Uniform-LGI constants that converge to
some $(\theta^*, c^*)$, then we have good evidence to support the assumption that $\mathcal{L}(w)$ satisfies equation (\ref{Uniform-LGI equation}) along the optimization path $\left\{w^{(i)}\right\}_{i=0}^{\infty}$ with $\theta^*, c^*$.
Moreover, if $\theta \in [1/2, 1)$ and $c > 0$, then $\mathcal{L}(w)$ satisfies the Uniform-LGI on $\left\{w^{(i)}\right\}_{i=0}^{\infty}$.
In the following, we present numerical experiments that confirms the validity of the Uniform-LGI along GD/SGD paths on synthetic models and
neural network models under Algorithm \ref{finite sample test}.

\textbf{Synthetic models.} \
We consider training three synthetic models by GD with optimal loss values $\mathcal{L}(w^*) = 0$, for which we can \textit{provably} estimate the Uniform-LGI constants:
(a) differentiable but non-analytic \footnote{A real function is said to be analytic if it possesses derivatives of all orders and agrees with its Taylor series in a neighborhood of every point.}  loss function $\mathcal{L}(w) = e^{-\frac{1}{|w|}}$ for $w \neq 0$, $0$ for $w = 0$.
For the non-analytic model, by the classical LGI property we know that there exists no $\theta$ and $c$ such that the Uniform-LGI holds around the minimum $w=0$, which is consistent with Figure \ref{fig: synthetic}(a);
(b) differentiable loss function $\mathcal{L}(w) = \frac{1}{4}w^{\frac{4}{3}}+\frac{1}{2} w^2$.
For this loss function, we have $\frac{|\nabla \mathcal{L}(w)|}{\mathcal{L}(w)^\theta} \sim \mathcal{O}(w^{\frac{1-4\theta}{3}})$. Therefore, the Uniform-LGI constant $\theta$ should be at least $1/4$, which is consistent with the result in Figure \ref{fig: synthetic}(b) that the estimated $\theta = 0.308$;
(c) undetermined linear regression model $\mathcal{L}(w) = \frac{1}{2n} \sum_{i=1}^n (w^\top x_i - y_i)^2$ on a synthetic dataset.
We know that the squared loss is always strongly convex, thus it satisfies the PL-condition ($\theta = 0.5$), which is consistent with Figure \ref{fig: synthetic}(c) that the estimated $\theta \approx 0.5$.
More details about the experiments are provided in Appendix \ref{experiment}.

We report the estimated Uniform-LGI constants $(\theta_{K_0}, c_{K_0}), (\theta_{K_0+s}, c_{K_0+s}), \ldots, (\theta_{K+1}, c_{K+1})$ by finite sample test (Algorithm \ref{finite sample test}), and denote by $(\theta^*, c^*)$ the final estimation $(\theta_{K+1}, c_{K+1})$.
Figure \ref{fig: synthetic} shows three different cases: (a) $(\theta, c)$ do not converge;
(b) $(\theta, c)$ converge with $\theta^* < 1/2$ and $c^* > 0$;
(c) $(\theta, c)$ converge
with $\theta^*\approx 0.5$.
This demonstrates that our method is robust in determining whether a model satisfies the Uniform-LGI.
\begin{figure}[ht]
     \centering
     \subfigure[]{\includegraphics[width=.32\textwidth]{./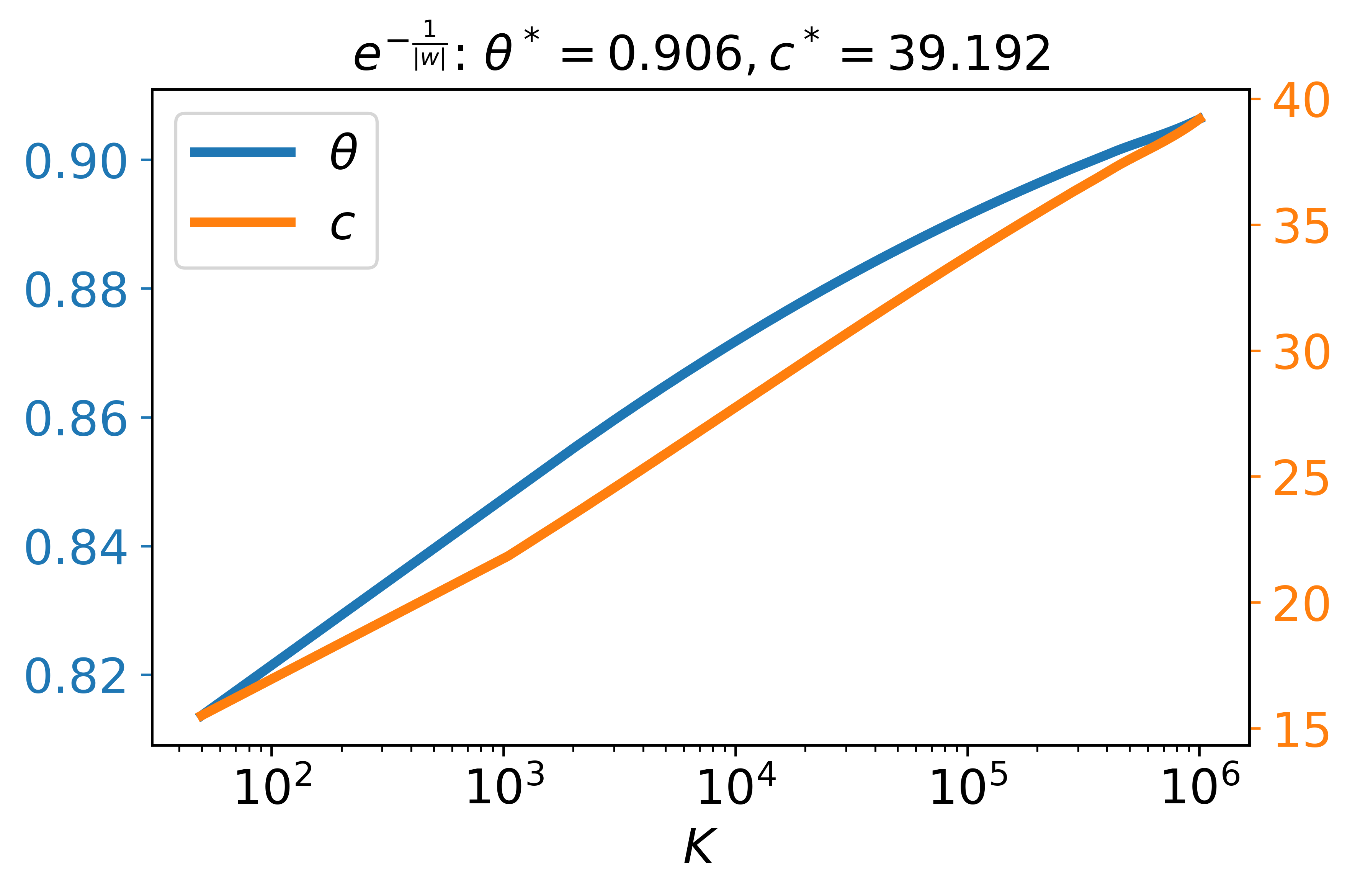}}
     \subfigure[]{\includegraphics[width=.32\textwidth]{./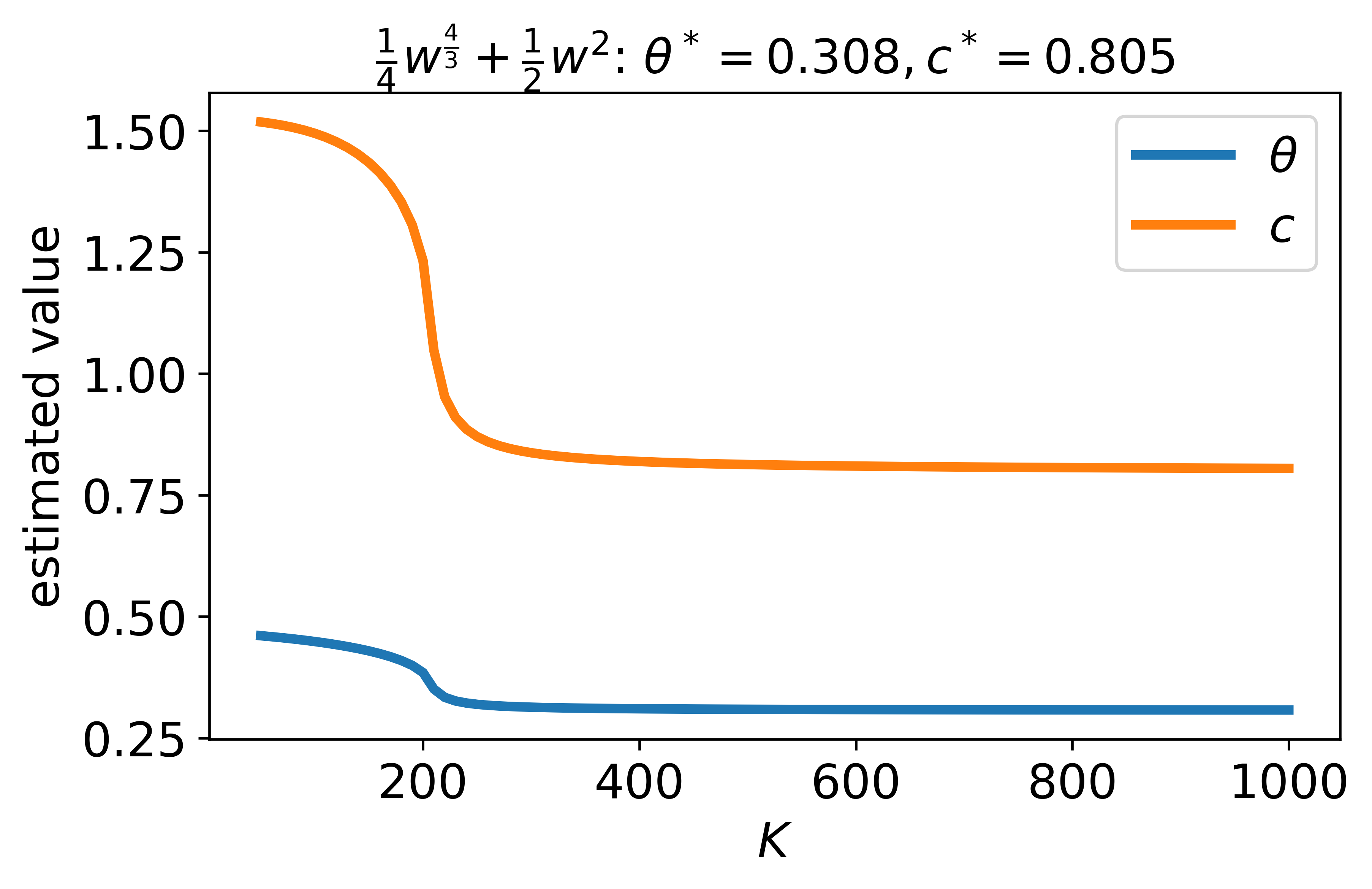}}
     \subfigure[]{\includegraphics[width=.32\textwidth]{./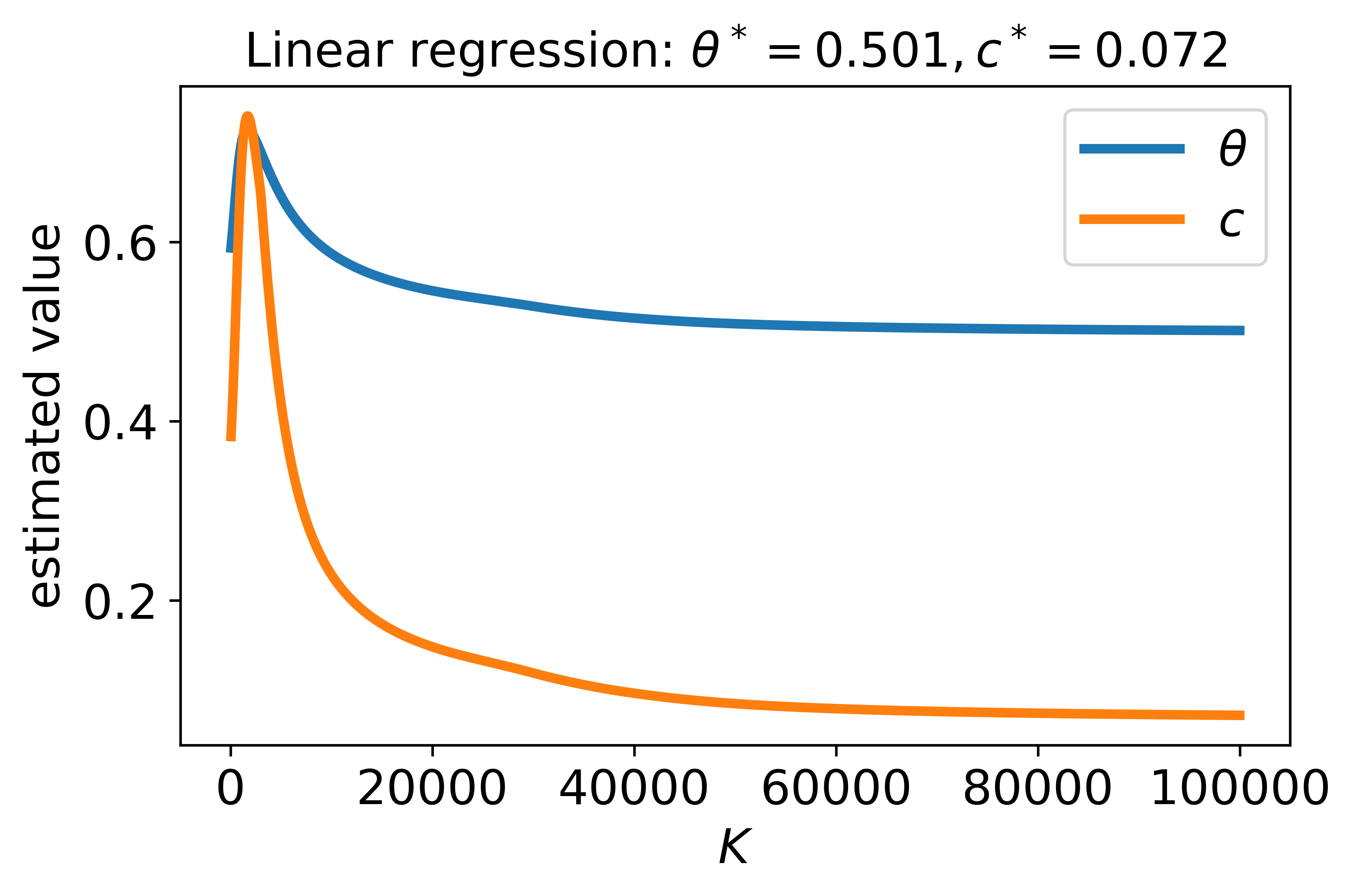}}
    \caption
    {
    Investigation of the Uniform-LGI on synthetic models by finite sample test (Algorithm \ref{finite sample test}).
    For the non-analytic model (a), both $\theta$ and $c$ do not converge.
    For the synthetic model (b),
    $c$ converges to a positive number but
    $\theta$ converges to a number that is $< 1/2$.
    For the linear regression model (c), the Uniform-LGI holds along the training path with $\theta^*\approx 0.5$ as expected since linear regression satisfies the PL condition.
    }
    \label{fig: synthetic}
\end{figure}

\textbf{Neural network models.} \
We train three neural network models: two-layer multilayer perceptron (MLP) with width $100$ (no bias) on the MNIST dataset \citep{lecun1998gradient}; ResNet18 \citep{he2016deep}, Wide-ReseNet-16-8 \citep{zagoruyko2016wide} on the CIFAR10 dataset \citep{krizhevsky2009learning}.
For each experiment, we train the network using SGD (no momentum) with random shuffling, batch size $64$ and fixed learning rate $0.01$.
For the MLP model, we stop the training with $1000$ epochs and set the optimal loss value to be $0$.
For the ResNet models, we stop the training once the cross-entropy loss is less than $0.001$ and estimate the local optimal loss value as $\min_{k \in [0:K]} \mathcal{L}(w^{(k)})$, which is smaller than $10^{-3}$.
To avoid division by zero, we delete the parameter $w^*$ in $w^{(0)}, w^{(1)}, w^{(2)}, \ldots, w^{(K)}$.
Then we apply the finite sample test (Algorithm \ref{finite sample test}) with starting point $K_0 = 50$, step $s=1$ to obtain the estimated Uniform-LGI constants.


We report $(\theta_{K_0}, c_{K_0}), (\theta_{K_0+s}, c_{K_0+s}), \ldots, (\theta_{K+1}, c_{K+1})$, and denote $(\theta^*, c^*)$ by the
average of the last $5$ iterates, i.e., $\theta^* = (\theta_{K-3}+\cdots+\theta_{K+1})/5, c^* = (c_{K-3}+\cdots+c_{K+1})/5$.
As shown in Figure \ref{fig: LGI},
the estimated $(\theta, c)$ converge for both models, and the Uniform-LGI holds along the optimization path.

\begin{figure}
     \centering
     \subfigure[]{\includegraphics[width=.32\textwidth]{./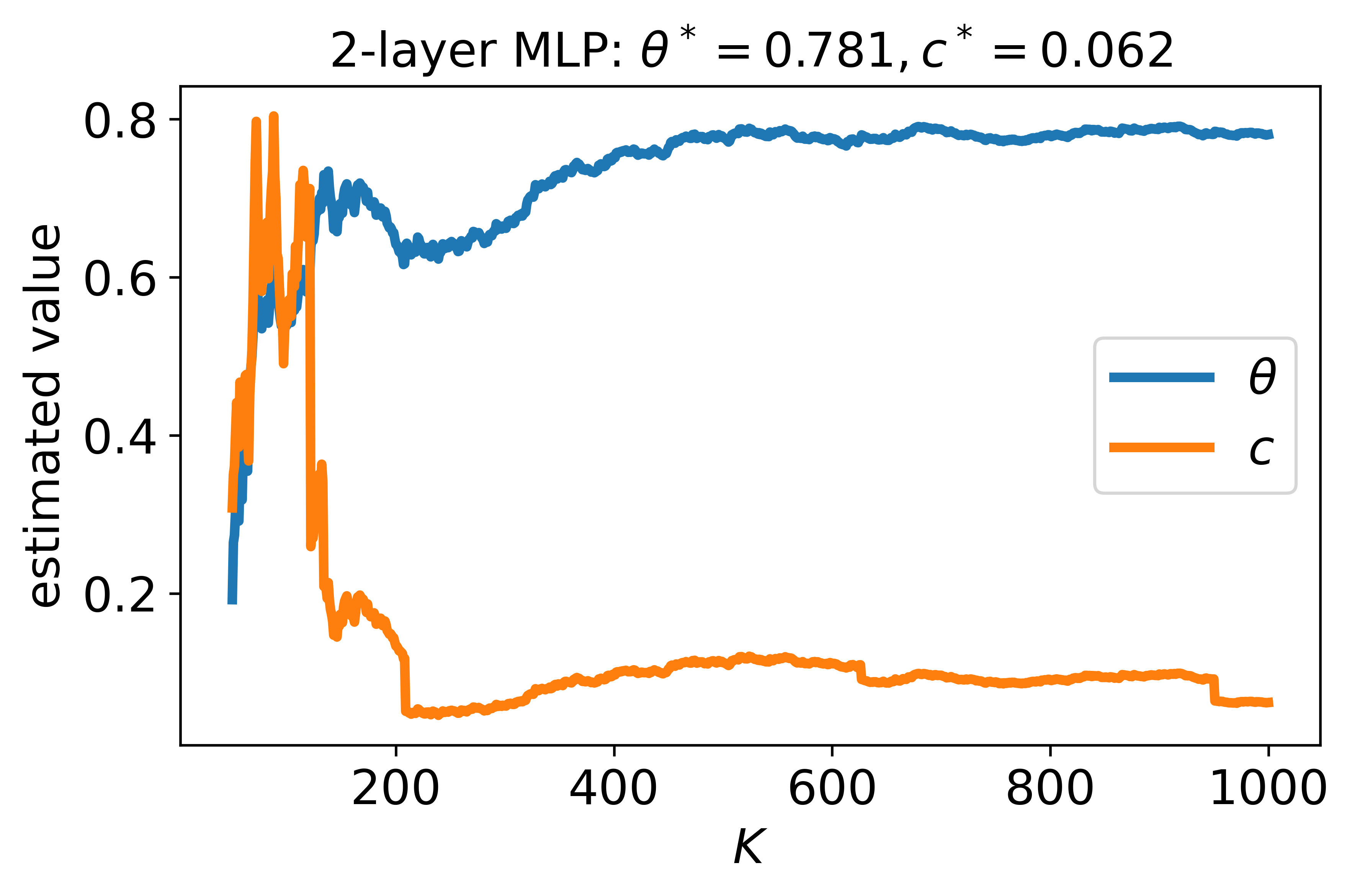}}
    \subfigure[]{\includegraphics[width=.32\textwidth]{./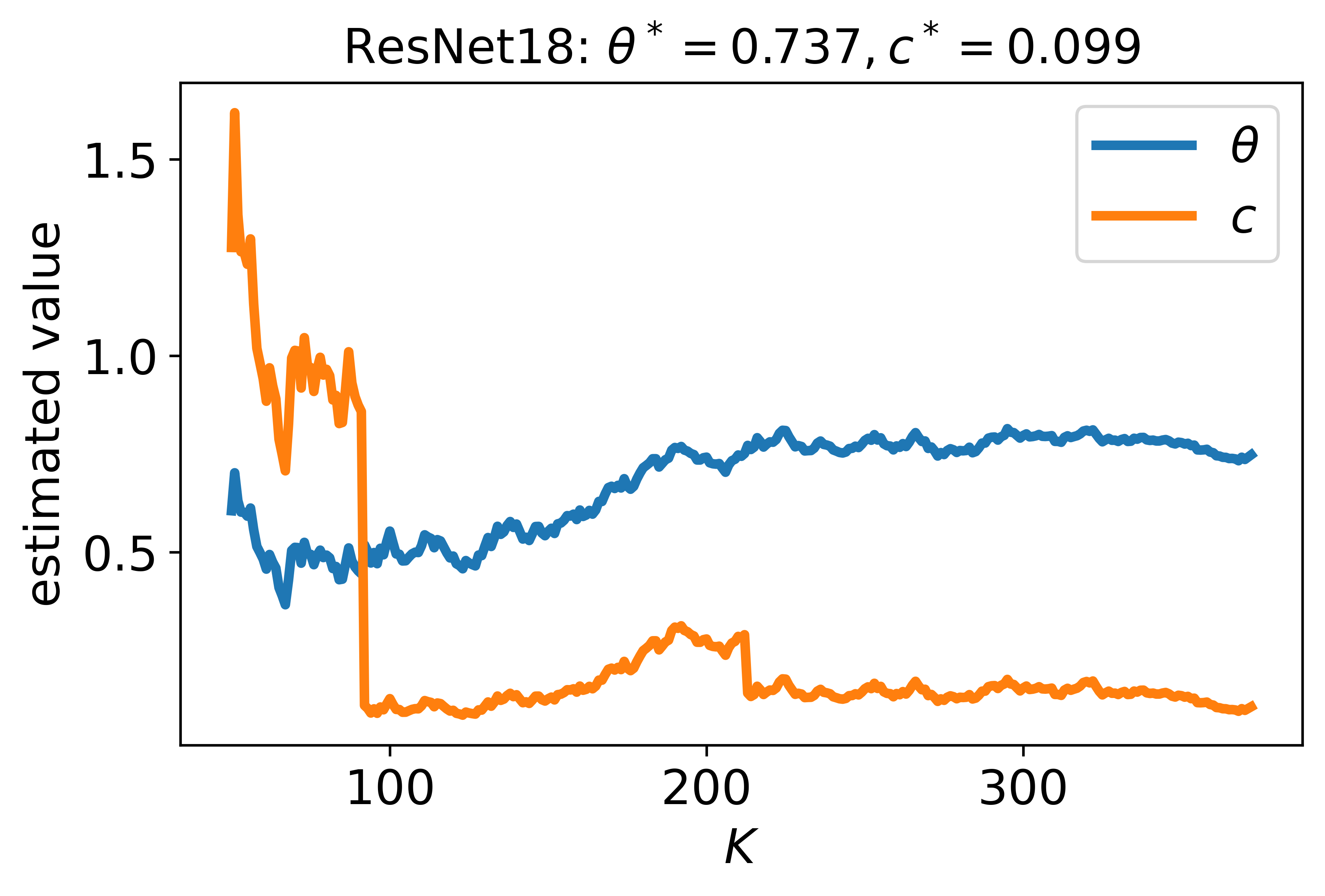}}
    \subfigure[]{\includegraphics[width=.32\textwidth]{./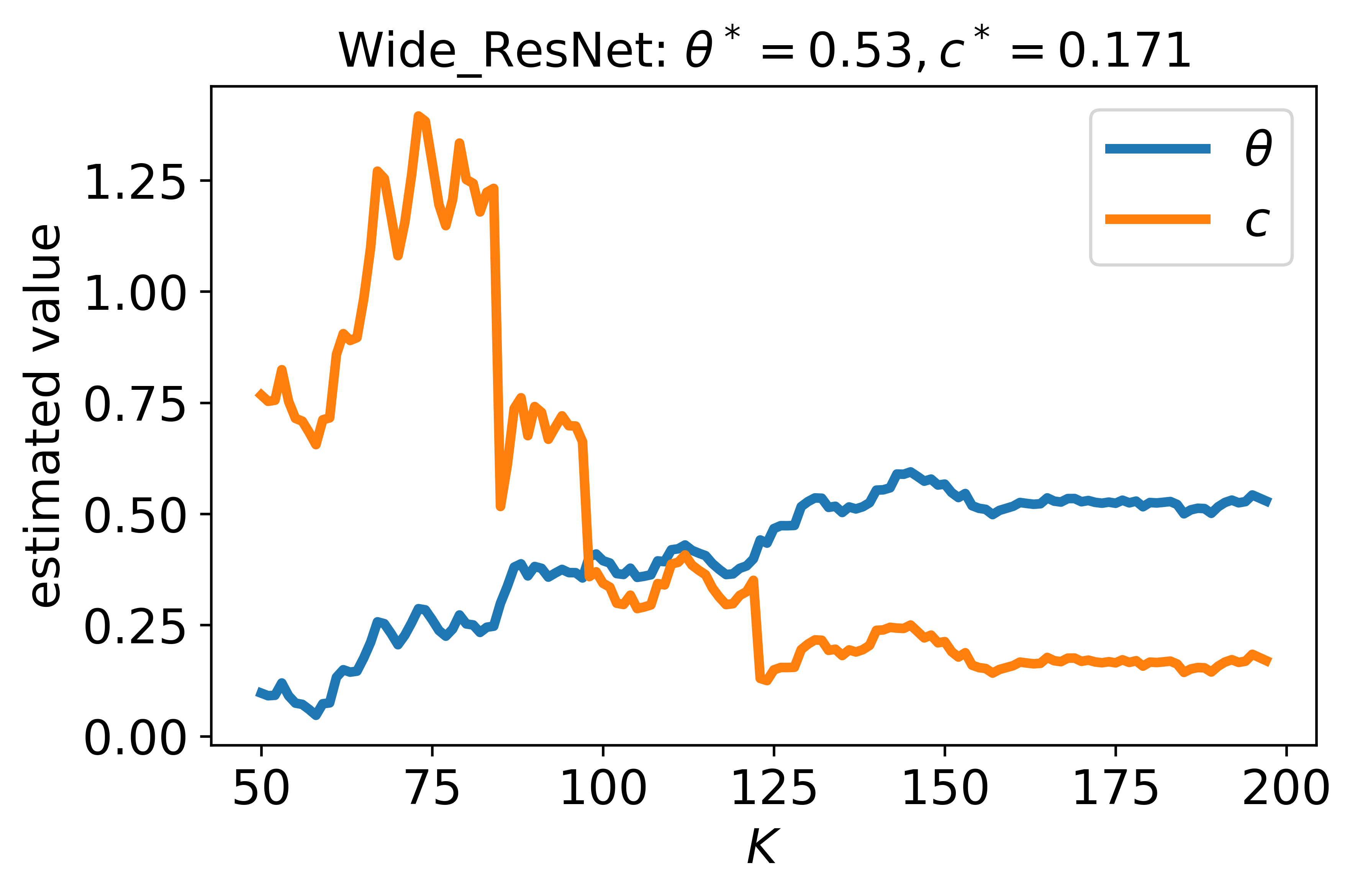}}
    \caption{
    Investigation of the Uniform-LGI condition on neural network models by finite sample test (Algorithm \ref{finite sample test}).
    For all of these models, $\theta$ and $c$ converges to $\theta^* \in [1/2, 1)$ and $c^*>0$ respectively. Hence, these two models satisfy the Uniform-LGI condition along the optimization path with constants $\theta^*$ and $c^*$.
    }
    \label{fig: LGI}
\end{figure}

Table \ref{lgi-table} in Appendix \ref{experiment} reports the confidence intervals of $\theta^*, c^*$ of $10$ independent runs over random initialization and data reshuffling.
One interesting observation is that by increasing the width and depth of ResNet18, the SGD path of Wide-ResNet-16-8 has a smaller Uniform-LGI exponent $\theta^*$ that is close to $1/2$. This is parallel to the phenomenon in \cite{liu2020loss} that overparameterized neural networks satisfy the PL condition. More discussion for the Uniform-LGI on the overparameterized networks are in Appendix \ref{proof th5}.



\subsection{Optimization and Generalization Results Under the Uniform-LGI Condition}\label{opt and gen}

In Section \ref{investigation on lgi}, empirical results suggest that the Uniform-LGI is generally satisfied on the training path
for various machine learning models.
In this section, we derive optimization and generalization results assuming that the Uniform-LGI holds.
We consider to problem of optimizing the empirical loss (\ref{empirical loss}) with gradient flow:
\begin{equation}\label{flow}
    \frac{d w^{(t)}}{d t} = - \nabla \mathcal{L}_n (w^{(t)}), \ t \in [0, + \infty),
\end{equation}
where $w^{(t)}$ is the parameter vector at time $t$, $w^{(0)}$ is the initialization.
Assume $\forall (x, y) \sim \mathcal{D}$, $\|x\| \leq 1, |y| \leq 1$.

First, we give an optimization result under the Uniform-LGI condition.
We show that when $\theta = 1/2$, the convergence rate is linear; when $\theta \in (1/2, 1)$, the convergence rate is sublinear.
Furthermore, we give an explicit estimate for the distance between the initialization and the parameter during the training process.


\begin{theorem}[Optimization]\label{opt}
For a fixed initialization $w^{(0)}$,
suppose that there exist $\theta_{n} \in [1/2, 1)$ and $c_{n} > 0$ such that the loss function $\mathcal{L}_n(w)$ satisfies the Uniform-LGI on $\{w^{(t)} : t \geq 0\}$ with $\theta_{n}, c_{n}$. Then $w^{(t)}$ converges to a stationary point $w^{(\infty)}$ with convergence rate given by
\begin{equation*}
    \begin{aligned}
        \theta_{n} = 1/2: & \quad
        \mathcal{L}_n(w^{(t)}) - \mathcal{L}_n (w^{(\infty)}) \leq e^{- c_{n}^2 t}  \left(\mathcal{L}_n(w^{(0)}) -  \mathcal{L}_n (w^{(\infty)})\right); \\
        \theta_{n} \in (1/2, 1): & \quad
       \mathcal{L}_n(w^{(t)}) -  \mathcal{L}_n (w^{(\infty)}) \leq \left(1 + M t\right)^{- 1 / \left(2 \theta_{n} - 1\right)} \left( \mathcal{L}_n(w^{(0)}) - \mathcal{L}_n (w^{(\infty)})\right),
    \end{aligned}
\end{equation*}

where $M = c_{n}^2 (2 \theta_{n} - 1) \left(\mathcal{L}_n(w^{(0)}) - \mathcal{L}_n (w^{(\infty)})\right)^{2 \theta_{n} - 1}$.
The distance between the initialization $w^{(0)}$ and the parameter $w^{(t)}$ at time $t$ is bounded by
\begin{equation*}
    \|w^{(0)} - w^{(t)}\| \leq \frac{1}{c_{n} (1 - \theta_{n})} \left[\left(\mathcal{L}_n(w^{(0)}) -  \mathcal{L}_n (w^{(\infty)})\right)^{1 - \theta_{n}} - \left(\mathcal{L}_n(w^{(t)}) -  \mathcal{L}_n (w^{(\infty)})\right)^{1 - \theta_{n}}\right].
\end{equation*}

\end{theorem}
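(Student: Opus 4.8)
The plan is to exploit the gradient flow identity $\frac{d}{dt}\mathcal{L}_n(w^{(t)}) = -\|\nabla\mathcal{L}_n(w^{(t)})\|^2$ together with the Uniform-LGI to obtain a differential inequality for the energy gap $E(t) := \mathcal{L}_n(w^{(t)}) - \mathcal{L}_n(w^{(\infty)})$. First I would establish that the limit $w^{(\infty)}$ exists: the loss is nonincreasing along the flow and bounded below, so $\mathcal{L}_n(w^{(t)})$ converges; to get convergence of $w^{(t)}$ itself I would use the Łojasiewicz argument à la bounding the trajectory length $\int_0^\infty \|\dot w^{(t)}\|\,dt < \infty$, which is exactly what the last displayed inequality quantifies — so the distance bound and the existence of $w^{(\infty)}$ should be proved together. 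Concretely, for $\theta_n \in [1/2,1)$ define $G(t) = \frac{1}{1-\theta_n} E(t)^{1-\theta_n}$; then $\frac{d}{dt} G(t) = -E(t)^{-\theta_n}\|\nabla\mathcal{L}_n\|^2 \le -\|\nabla\mathcal{L}_n\| = -\|\dot w^{(t)}\|$, using Uniform-LGI in the form $E(t)^{\theta_n} \le \frac{1}{c_n}\|\nabla\mathcal{L}_n\|$ and keeping only one factor of the gradient norm. Wait — more carefully: $E^{-\theta_n}\|\nabla\mathcal{L}_n\|^2 \ge c_n \|\nabla\mathcal{L}_n\|$, so actually $\frac{d}{dt}\big(\frac{1}{c_n(1-\theta_n)}E^{1-\theta_n}\big) \le -\|\dot w^{(t)}\|$. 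Integrating from $0$ to $t$ and using $\|w^{(0)}-w^{(t)}\| \le \int_0^t \|\dot w^{(s)}\|\,ds$ gives the stated distance bound; letting $t\to\infty$ shows the trajectory has finite length, hence $w^{(t)}$ is Cauchy and converges to some stationary $w^{(\infty)}$ (stationarity from $\|\nabla\mathcal{L}_n(w^{(\infty)})\|^2 \le \liminf \|\nabla\mathcal{L}_n(w^{(t)})\|$-type continuity plus $\int_0^\infty\|\nabla\mathcal{L}_n\|^2 dt<\infty$).

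For the convergence rates I would return to $\dot E = -\|\nabla\mathcal{L}_n\|^2 \le -c_n^2 E^{2\theta_n}$. When $\theta_n = 1/2$ this is $\dot E \le -c_n^2 E$, and Grönwall gives $E(t) \le e^{-c_n^2 t} E(0)$. When $\theta_n \in (1/2,1)$, $2\theta_n > 1$ so $\frac{d}{dt} E^{1-2\theta_n} = (1-2\theta_n) E^{-2\theta_n}\dot E \ge (2\theta_n-1)c_n^2$; integrating yields $E(t)^{1-2\theta_n} \ge E(0)^{1-2\theta_n} + (2\theta_n-1)c_n^2 t$, i.e. $E(t) \le \big(E(0)^{1-2\theta_n} + (2\theta_n-1)c_n^2 t\big)^{-1/(2\theta_n-1)} = E(0)\big(1 + (2\theta_n-1)c_n^2 E(0)^{2\theta_n-1} t\big)^{-1/(2\theta_n-1)}$, which is exactly the claimed bound with $M$ as stated. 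Plugging these rates back into the trajectory-length computation gives the time-dependent form of the distance bound, where $E(t)^{1-\theta_n}$ appears on the right.

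The main obstacle — really the only delicate point — is the rigorous justification that $w^{(t)}$ converges (not merely that $\mathcal{L}_n(w^{(t)})$ converges) and that the limit is stationary; the Łojasiewicz-type finite-length argument handles this cleanly provided one is careful when $E(t) = 0$ at some finite time (in which case $w^{(t)}$ is already stationary and everything is trivial past that time) and provided $\nabla\mathcal{L}_n$ is continuous so that $\|\nabla\mathcal{L}_n(w^{(\infty)})\| = \lim \|\nabla\mathcal{L}_n(w^{(t)})\| = 0$. I would also note that all the manipulations $\frac{d}{dt}E^\alpha$ are valid on intervals where $E(t) > 0$; since $E$ is nonincreasing, $\{t : E(t) > 0\}$ is an interval $[0, T^*)$ and one treats $[T^*, \infty)$ separately. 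Everything else is routine ODE comparison and the chain rule.
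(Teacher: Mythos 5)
Your proposal is correct and takes essentially the same route as the paper: the gradient flow identity $\dot E(t) = -\|\nabla\mathcal{L}_n(w^{(t)})\|^2$ combined with the Uniform-LGI yields both the ODE comparison $\dot E \le -c_n^2 E^{2\theta_n}$ for the rates and the trajectory-length bound $\frac{d}{dt}\frac{E^{1-\theta_n}}{c_n(1-\theta_n)} \le -\|\dot w^{(t)}\|$, from which finite total variation, Cauchy-ness of the flow, and the distance estimate all follow. One small remark: the paper proves convergence of $w^{(t)}$ through a somewhat roundabout argument over arbitrary increasing discrete time sequences and then reconciles competing subsequential limits, whereas your finite-length $\Rightarrow$ Cauchy argument on the continuous flow is a cleaner packaging of the same Łojasiewicz idea; and you are more explicit than the paper about the $E(t)=0$-at-finite-time edge case (once $E$ hits zero the flow is frozen, since $\dot E = 0$ forces $\|\nabla\mathcal{L}_n\| = 0$ thereafter), which the paper's integral manipulations silently assume away.
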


The proof of Theorem \ref{opt} is given in Appendix \ref{proof opt}.
This theorem shows that if the loss function satisfies the Uniform-LGI along the gradient flow path, then it converges to a stationary point with an explicit convergence rate and distance estimate. The estimation of $c_n, \theta_n$ for different $n$ should be analyzed case by case based on the loss function, as shown in Section \ref{application}.
Our optimization result can also be extended to the standard gradient descent algorithm. See Appendix \ref{Discrete time analysis} for details.

Once we have a distance estimate for the parameter during the training process, we can derive generalization bounds for the norm-constrained parameter space during the training process based on the Rademacher complexity theory.
For the generalization error analysis, we assume that there exists an almost everywhere differentiable function $\Psi : \mathbb{R}^{p+q} \xrightarrow{} \mathbb{R}$ such that the model $f (w, \cdot)$ can be represented in the following form,
\begin{equation}\label{1}
    \forall x \in \mathbb{R}^d, \ f(w, x) = \Psi \left(\alpha_1^\top x, \ldots, \alpha_p^\top x, \beta_1, \ldots, \beta_q\right),
\end{equation}
where $\alpha_1, \ldots, \alpha_p \in \mathbb{R}^d$, $\beta_1, \ldots, \beta_q \in \mathbb{R}$, and $w = \vect \left(\left\{\alpha_1, \ldots, \alpha_p, \beta_1, \ldots, \beta_q\right\}\right) \in \mathbb{R}^{pd +q}$. $\vect$ is the vectorization that concatenates all elements into a column vector.
A wide class of functions can be represented in the form (\ref{1}), including linear functions, fully connected neural networks and convolutional neural networks.

\textbf{Additional notations.}
For the loss function $\ell$, we use $L_\ell (\mathcal{S})$ to denote its Lipschitz constant (the maximal gradient norm) on $\mathcal{S}$ with respect to its first argument. For $\Psi$ in (\ref{1}), we define $L_\Psi (\mathcal{S}) := \left(L_\Psi^{(1)} (\mathcal{S}), \cdots, L_\Psi^{(p)} (\mathcal{S}), L_\Psi^{(p+1)} (\mathcal{S}), \cdots, L_\Psi^{(p+q)} (\mathcal{S})\right)^\top$, where $L_\Psi^{(i)} (\mathcal{S})$ is the Lipschitz constant of $\Psi$  on $\mathcal{S}$ with respect to the $i$-th variable.
Let $w^{(0)} := \vect \left(\left\{\alpha_1^{(0)}, \ldots, \alpha_p^{(0)}, \beta_1^{(0)}, \ldots, \beta_q^{(0)}\right\}\right)$ and use $\mathcal{L}_{\mathcal{D}} (w)$ to denote the expected loss $\mathbb{E}_{(x, y) \sim \mathcal{D}} \left[\ell (f (w, x), y)\right]$.
For $a = (a_1, \ldots, a_p)^\top$ and $b = (b_1, \ldots, b_q)^\top$, we define $\mathcal{S}_{a, b} = \left\{w : \forall i \in [p], j \in [q],  \|\alpha_i\| \leq a_i, |\beta_j| \leq b_j\right\}$, and $M_{a, b} = \sup_{w \in \mathcal{S}_{a, b}, \|x\| \leq 1, |y| \leq 1} \ell \left(f (w, x), y\right)$.
Note that for any loss function $\ell : \mathbb{R} \times \mathbb{R} \xrightarrow{} [0, 1]$ that is 1-Lipschitz in the first argument, we have that $L_\ell (\mathcal{S}_{a, b})=M_{a, b}=1$ for any $a,b$.
An example for such a loss function is the ramp loss \citep{huang2014ramp} that is commonly used for classification.

In the next theorem, to simplify the notation, we consider studying the generalization of $w_\varepsilon$ when the loss $\mathcal{L}_n(w_\varepsilon)$ first reaches $\varepsilon \cdot \mathcal{L}_n(w^{(0)})$ for some given $\varepsilon \in [0, 1]$. Then by Theorem \ref{opt}, the distance $\|w^{(0)} - w_\varepsilon\|$ is bounded above by $\frac{\left(\mathcal{L}_n(w^{(0)}) - \mathcal{L}_n(w^{(\infty)})\right)^{1 - \theta_n} - \left(\varepsilon \mathcal{L}_n(w^{(0)}) - \mathcal{L}_n(w^{(\infty)})\right)^{1 - \theta_n}}{c_n (1 - \theta_n)}$. Finally by estimating the initial loss value and the final loss value, we can use the distance estimate to get a generalization bound.
To showcase a bias-variance tradeoff, we bound the test error $\mathcal{L}_{\mathcal{D}} (w_{\varepsilon})$ rather than the generalization gap $\mathcal{L}_{\mathcal{D}} (w_{\varepsilon}) - \mathcal{L}_{n} (w_{\varepsilon})$.

\begin{theorem}[Generalization]\label{gen}
For a fixed initialization $w^{(0)}$,
suppose that there exist $\theta_{n} \in [1/2, 1)$ and $c_{n} > 0$ such that the loss function $\mathcal{L}_n(w)$ satisfies the Uniform-LGI on $\{w^{(t)} : t \geq 0\}$ with $\theta_{n}, c_{n}$, and assume that there exist $M_\delta, \Bar{M}_\delta$ such that with probability at least $1 - \delta/2$ over the training samples $S$,  $\mathcal{L}_n(w^{(0)}) \leq M_\delta$, $\mathcal{L}_n(w^{(\infty)}) \leq \Bar{M}_\delta$. 
Then for any given $\varepsilon, \delta \in [0, 1]$, with probability at least $1-\delta$ over $S$, the generalization bound of any parameter $w_\varepsilon$ with $\mathcal{L}_n(w_{\varepsilon}) = \varepsilon \mathcal{L}_n(w^{(0)})$ is given by
\begin{equation}\label{gen inequality}
    \mathcal{L}_{\mathcal{D}} (w_{\varepsilon}) \leq  \varepsilon \mathcal{L}_n(w^{(0)}) +
    \sup_{\left\|a\right\|^2 + \left\|b\right\|^2 \leq 2 r_{n, \delta, \varepsilon}^2} \frac{2\sqrt{2}  r_{n, \delta, \varepsilon} L_\ell (\mathcal{S}_{a, b}) \left\|L_\Psi (\mathcal{S}_{a, b})\right\|}{\sqrt{n}} + 3 M_{a, b} \sqrt{\frac{3(p+q) + \log (4 / \delta)}{2 n}},
\end{equation}
where $r_{n, \delta, \varepsilon} =  \frac{\left(M_\delta -  \Bar{M}_\delta\right)^{1 - \theta_{n}} - \left(\varepsilon M_\delta -  \Bar{M}_\delta\right)^{1 - \theta_{n}}}{c_{n}(1 - \theta_{n})}$.
\end{theorem}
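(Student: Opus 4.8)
The plan is to combine the distance estimate from Theorem~\ref{opt} with a standard Rademacher-complexity generalization bound restricted to a norm-ball of parameters, and then control the radius of that ball via the high-probability bounds on $\mathcal{L}_n(w^{(0)})$ and $\mathcal{L}_n(w^{(\infty)})$. First I would decompose the test error as
\begin{equation*}
    \mathcal{L}_{\mathcal{D}}(w_\varepsilon) = \mathcal{L}_n(w_\varepsilon) + \big(\mathcal{L}_{\mathcal{D}}(w_\varepsilon) - \mathcal{L}_n(w_\varepsilon)\big),
\end{equation*}
where the first term is exactly $\varepsilon \mathcal{L}_n(w^{(0)})$ by the definition of $w_\varepsilon$, so the whole task reduces to bounding the generalization gap. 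The key observation is that $w_\varepsilon$ lies in the ball $\mathcal{B}(w^{(0)}, r)$ with $r = \|w^{(0)} - w_\varepsilon\|$; by Theorem~\ref{opt} applied with the target loss level $\varepsilon\mathcal{L}_n(w^{(0)})$, this radius is at most $\frac{(\mathcal{L}_n(w^{(0)}) - \mathcal{L}_n(w^{(\infty)}))^{1-\theta_n} - (\varepsilon\mathcal{L}_n(w^{(0)}) - \mathcal{L}_n(w^{(\infty)}))^{1-\theta_n}}{c_n(1-\theta_n)}$, and on the event (probability $\geq 1-\delta/2$) where $\mathcal{L}_n(w^{(0)})\leq M_\delta$ and $\mathcal{L}_n(w^{(\infty)})\leq \bar M_\delta$, monotonicity of $t\mapsto t^{1-\theta_n}$ and of the difference lets us replace these by $M_\delta, \bar M_\delta$, giving $r \leq r_{n,\delta,\varepsilon}$.

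Next I would translate the ball $\mathcal{B}(w^{(0)}, r_{n,\delta,\varepsilon})$ in $w$-space into a box constraint on the blocks $(\alpha_i, \beta_j)$. Writing $w^{(0)} = \vect(\{\alpha_i^{(0)}, \beta_j^{(0)}\})$ and noting $\|w - w^{(0)}\|^2 = \sum_i \|\alpha_i - \alpha_i^{(0)}\|^2 + \sum_j (\beta_j - \beta_j^{(0)})^2$, any $w$ in the ball satisfies $\|\alpha_i\| \leq \|\alpha_i^{(0)}\| + \|\alpha_i - \alpha_i^{(0)}\|$ etc.; taking a supremum, the feasible parameters are contained in $\bigcup \mathcal{S}_{a,b}$ over all $(a,b)$ with $\|a - a^{(0)}\|^2 + \|b - b^{(0)}\|^2 \leq r_{n,\delta,\varepsilon}^2$ where $a^{(0)}_i = \|\alpha_i^{(0)}\|$, $b^{(0)}_j = |\beta_j^{(0)}|$ — and a crude but clean relaxation (using $\|a\|^2 \leq 2\|a-a^{(0)}\|^2 + 2\|a^{(0)}\|^2$, absorbing the initialization into the radius, which is presumably why the factor $2$ and the $2r_{n,\delta,\varepsilon}^2$ appear) gives the constraint $\|a\|^2 + \|b\|^2 \leq 2r_{n,\delta,\varepsilon}^2$ in the theorem statement. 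For each fixed $\mathcal{S}_{a,b}$ I would then invoke a standard uniform-convergence bound: the Rademacher complexity of the composed loss class $\{x \mapsto \ell(f(w,x), y) : w \in \mathcal{S}_{a,b}\}$ is controlled, via Talagrand contraction with constant $L_\ell(\mathcal{S}_{a,b})$ and the linear structure $f(w,x) = \Psi(\alpha_1^\top x, \ldots, \beta_q)$ together with the Lipschitz vector $L_\Psi(\mathcal{S}_{a,b})$, by something like $\frac{\sqrt{2}\, r\, L_\ell(\mathcal{S}_{a,b}) \|L_\Psi(\mathcal{S}_{a,b})\|}{\sqrt n}$ (each $\alpha_i^\top x$ contributes a linear-functional Rademacher term $\lesssim a_i/\sqrt n$ since $\|x\|\leq 1$, and the $\ell_2$-aggregation across the $p+q$ coordinates produces the $\|L_\Psi\|$ factor); the standard symmetrization bound then gives, with probability $\geq 1-\delta/2$, $\sup_{w\in\mathcal{S}_{a,b}}(\mathcal{L}_{\mathcal{D}}(w) - \mathcal{L}_n(w)) \leq 2\cdot(\text{Rademacher}) + 3M_{a,b}\sqrt{\frac{\log(4/\delta) + \text{(complexity term)}}{2n}}$, where the $3(p+q)$ inside the log comes from a covering/union argument over the $p+q$ blocks. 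Taking the supremum over admissible $(a,b)$ and a union bound over the two probability-$\delta/2$ events finishes the argument.

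The main obstacle I expect is getting the Rademacher complexity of the class $\{x \mapsto \Psi(\alpha_1^\top x,\ldots,\alpha_p^\top x, \beta_1,\ldots,\beta_q)\}$ right with the precise constants claimed — in particular, justifying the $\|L_\Psi(\mathcal{S}_{a,b})\|$ aggregation (a vector-contraction / Ledoux–Talagrand-type inequality for multivariate Lipschitz $\Psi$, not just scalar contraction) and correctly propagating the per-block radii $a_i, b_j$ through it, as well as pinning down where the factor $2\sqrt{2}$ and the $3(p+q) + \log(4/\delta)$ come from (the $2\sqrt 2$ presumably being $2$ from symmetrization times $\sqrt 2$ from the contraction/radius bookkeeping, and the $3(p+q)$ from a net over the $(p+q)$-dimensional radius-parameter space so that the sup over $(a,b)$ can be taken inside the high-probability statement). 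Handling the $\beta_j$ block (which enters $\Psi$ directly, not through an inner product, and therefore contributes a bias-type term bounded using $b_j$ and $L_\Psi^{(p+j)}$ but with no $1/\sqrt n$ gain from concentration of $\langle\cdot,x\rangle$) also needs a little care; I would treat the $\alpha$ and $\beta$ coordinates on the same footing by noting that in all cases the relevant quantity is a Lipschitz function of a bounded linear image, and let the $\ell_2$ norm $\|L_\Psi(\mathcal{S}_{a,b})\|$ and the radius constraint do the aggregation uniformly. Everything else — the loss decomposition, the ball-to-box translation, the monotonicity argument for $r_{n,\delta,\varepsilon}$, and the final union bound — is routine bookkeeping.
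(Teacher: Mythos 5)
Your outline follows the paper's overall strategy — decompose $\mathcal{L}_\mathcal{D}(w_\varepsilon)$ into empirical loss plus generalization gap, bound the distance $\|w_\varepsilon - w^{(0)}\|$ by the path length from Theorem~\ref{opt}, pass to $r_{n,\delta,\varepsilon}$ by monotonicity of $t\mapsto t^{1-\theta_n}$ on the event where $M_\delta, \bar M_\delta$ hold, and then run a Rademacher-complexity argument over a box-shaped parameter set via Ledoux--Talagrand-type contraction. That is exactly the paper's skeleton, and you correctly anticipate the need for a contraction inequality that handles $\Psi$ coordinate-by-coordinate (the paper proves a ``shifted contraction inequality'' and telescopes through the $p+q$ arguments of $\Psi$, aggregating the per-coordinate terms via Cauchy--Schwarz to produce $\|L_\Psi(\mathcal S_{a,b})\|$).

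There is, however, a concrete gap in your explanation of where the constraint $\|a\|^2+\|b\|^2 \le 2 r_{n,\delta,\varepsilon}^2$ comes from, and it propagates to the union-bound step. You attribute the factor $2$ to absorbing the initialization via $\|a\|^2 \le 2\|a-a^{(0)}\|^2 + 2\|a^{(0)}\|^2$; but that relaxation yields $\|a\|^2+\|b\|^2 \le 2 r^2 + 2\|w^{(0)}\|^2$, leaving an extra $2\|w^{(0)}\|^2$ that does not appear in the theorem and is not assumed small. The actual source is a covering lemma (the paper's Lemma~\ref{lemma C.4}, from \citet{neyshabur2018the}): the $\ell_2$ ball of radius $\beta$ in $\mathbb{R}^{p+q}$ is covered by $N$ axis-aligned boxes $T_i$ with $\|\alpha^i\| \le (1+\epsilon)\beta$, and the choice $\epsilon = \sqrt{2}-1$ gives $(1+\epsilon)^2 = 2$, hence $\|a^k\|^2+\|b^k\|^2 \le 2\beta^2$. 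The same lemma gives $N = \binom{2(p+q)-1}{(p+q)-1}$, and $\log N \le 3(p+q)$ (via the elementary bound $\binom{n}{k}\le(en/k)^k$), which is where the $3(p+q)$ inside the square root comes from after a union bound over the $N$ boxes (each at level $\delta/2N$). Your ``net over the radius-parameter space'' intuition is in the right spirit, but without this finite covering the $\sup$ over $(a,b)$ in \eqref{gen inequality} cannot be moved inside the probability statement: the set $\{(a,b):\|a\|^2+\|b\|^2\le 2r^2\}$ is a continuum, and a genuine finite cover of the $\ell_2$ ball by boxes is needed to make the union bound valid and to control its cardinality. Replacing your $\|a\|^2 \le 2\|a-a^{(0)}\|^2 + 2\|a^{(0)}\|^2$ step with the covering lemma closes the gap.
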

The proof of Theorem \ref{gen} is in Appendix \ref{proof gen}. We use the distance estimate in Theorem \ref{opt} to get a norm-based parameter space $\left\{w : \left\|w - w^{(0)}\right\| \leq r\right\}$.
Then this allows us to use Rademacher complexity theory to obtain the generalization result.
There are several key terms in (\ref{gen inequality}):
(1) $M_{\delta}$:
    This is a high probability upper bound for loss value at initialization.
    In practice, for commonly used initialization schemes,
    such as Xavier initialization \citep{pmlr-v9-glorot10a} and Kaiming initialization \citep{he2015delving}, $\mathcal{L}_n(w^{(0)})$ is uniformly bounded with high probability;
(2) $\theta_{n}, c_{n}$:
    These two quantities are the Uniform-LGI constants along the optimization path.
    The asymptotic analysis of $\theta_{n}, c_{n}$ is model-dependent,
    and we provide several examples in Section \ref{application};
(3) $L_\ell (\mathcal{S}_{a, b}), M_{a, b}$: These quantities are directly related to the loss function $\ell$. For instance, given a loss function $\ell : \mathbb{R} \times \mathbb{R} \xrightarrow{} [0, 1]$ that is 1-Lipschitz in the first argument, we have that $L_\ell (\mathcal{S}_{a, b}) = M_{a, b} = 1$;
(4) $\left\|L_\Psi (\mathcal{S}_{a, b})\right\|$: This term is related to the properties of $f$.
For example, when $f$ is linear, $\Psi (x, y) = x + y$, then $\left\|L_\Psi (\mathcal{S}_{a, b})\right\| = \sqrt{2}$.
When $f$ is a two-layer neural network, i.e., $f(w, x) =  \sum_{i=1}^m v_i \phi(u_i^\top x)$, we will show in Section \ref{overparameterized} that $\sup_{\left\|a\right\|^2 + \left\|b\right\|^2 \leq 2 r^2} \left\|L_\Psi (\mathcal{S}_{a, b})\right\|$ is bounded above by $\sqrt{2}r$ (equation (\ref{eq: l psi})).
For two-layer neural networks, $p$ and $q$ are both equal to the number of the hidden units, which is usually much smaller than the sample size $n$.
In this case, the bound is dominated by the first two terms, which represent a bias-variance tradeoff pattern during training.
To see this, 
if $\varepsilon = 1$ (no training, high bias, low variance), then $r_{n, \delta, \varepsilon} = 0$, and the bound is dominated by the initial loss;
if $\varepsilon = \Bar{M}_\delta / M_{\delta}$ (convergence model, low bias, high variance), then the generalization bound is dominated by the model complexity $r_{n, \delta, \varepsilon}$.

\begin{remark}
The generalization bound also holds for the converged model by simply setting $\varepsilon$ to be $\Bar{M}_\delta / M_\delta$.
In particular, when the final empirical loss value is $0$, $\varepsilon$ can be $0$.
For the gradient descent algorithm, one can also directly use the optimization result (the distance bound) in Appendix \ref{Discrete time analysis} to derive the generalization result.
The only quantity that needs to be changed is that $r_{n, \delta, \varepsilon} = \frac{\left(M_\delta -  \Bar{M}_\delta\right)^{1 - \theta_{n}} - \left(\varepsilon M_\delta -  \Bar{M}_\delta\right)^{1 - \theta_{n}}}{c_{n}(1 - \theta_{n})(1-\eta \beta_{\mathcal{L}}/2)}$, where $\eta$ is the learning rate, $\beta_{\mathcal{L}}$ is the smoothness constant. See Appendix \ref{Discrete time analysis} for details.
\end{remark}

\section{Applications}\label{application}

In this section, we apply our framework (Theorem \ref{opt} and Theorem \ref{gen}) to three machine learning models. To obtain clean expressions of the generalization bound in terms of the sample size $n$, we consider a range of $n$ related to the dimension $d$.
In particular, we consider underdetermined systems where the ratio $n/d$ remains finite unless stated otherwise: 
\begin{equation*}
    \exists \ \gamma_0, \gamma_1 \in (0, \infty), \ \text{s.t.} \ \forall d, \ \gamma_0 d \leq n = n(d) \leq \gamma_1 d.
\end{equation*}
This setting is non-asymptotic since we do not require $d$ to be sufficiently large.
For each application, we consider regression problems with squared loss $\ell(y, \hat{y}) = (y - \hat{y})^2/2$, which is not \textit{globally} Lipschitz nor globally bounded. Thus, we cannot apply directly Theorem \ref{gen} to the squared loss, since it requires to bound $L_\ell(S_{a, b})$ and $M_{a, b}$ which depend on $a, b$ and the model architecture.
To mitigate this issue, we consider to evaluate the generalization with a new loss $\tilde{\ell}$ that is globally Lipschitz and globally bounded.
Specifically, let $\tilde{\ell} : \mathbb{R} \times \mathbb{R} \xrightarrow{} [0, l_0]$ that is $\sqrt{l_0}$-Lipschitz (on the first argument) for some $l_0>0$ and $\tilde{\ell} (y, y) = 0$. For the squared loss, this can be achieved by truncating $\ell$ at $|y - \hat{y}| = \sqrt{l_0}$, and then using a constant extension past the truncated point to make it continuous. In this case, 
$L_\ell(S_{a, b}) = \sqrt{l_0}$, $M_{a, b} = l_0$ for any $a,b$. 
Finally, to use Theorem \ref{gen}, it remains to estimate the new empirical loss value $\frac{1}{n} \sum_{i=1}^n \tilde{\ell}(f(w_\varepsilon, x_i), y_i)$ in terms of $\mathcal{L}_n(w_\varepsilon)$.
By the Lipschitzness property and Cauchy–Schwarz inequality, one can show that $\frac{1}{n} \sum_{i=1}^n \tilde{\ell}(f(w_\varepsilon, x_i), y_i) \leq \sqrt{2l_0  \mathcal{L}_n(w_\varepsilon)}$ (see the inequality (\ref{loss value transition}) in Appendix \ref{proof th3}). Combining the above derivations, applications of the framework are straightforward.

\begin{remark}
One can also get generalization bounds under the squared loss $\ell(y, \hat{y}) = (y - \hat{y})^2/2$. Note that $M_{a,b} \leq \sup_{w \in S_{a,b}, \|x\| \leq 1} f(w,x)^2+1$ and $L_\ell(S_{a, b}) \leq \sup_{w \in S_{a,b}, \|x\| \leq 1} |f(w,x)|+1$. Then we can get an upper bound for $\mathcal{L}_\mathcal{D}(w_\varepsilon)$ by Theorem \ref{gen}.
\end{remark}

\subsection{Underdetermined \texorpdfstring{$\ell_2$}{lp} Linear Regression}\label{linear regression}

We begin with an underdetermined linear regression model $f(w, x) = w^\top x$ with squared loss:
\begin{equation}\label{lp linear regression}
   \argmin_{w \in \mathbb{R}^d} \mathcal{L}_n (w) := \frac{1}{2n} \sum_{i=1}^n \left(w^\top x_i - y_i\right)^2,
\end{equation}
where the input data matrix $\mathcal{X} = (x_1, \ldots, x_n)^\top \in \mathbb{R}^{n \times d}$ has full row rank ($d > n$). Then the above regression model has at least one global minimum with zero loss.

\textbf{Target function.} \ Suppose the training data is generated from an underlying function $g : \mathbb{R}^d \xrightarrow{} \mathbb{R}$ with $y_i = g(x_i), \ \forall i \in [n]$. Let $\mathcal{Y} = (y_1, \cdots, y_n)^\top$, and assume that there exits $c^* > 0$ such that
\begin{equation}\label{linear regression provably}
  \|\mathcal{Y}\| \leq c^* \sqrt{\lambda_{\max} (\mathcal{X} \mathcal{X}^\top)}.
\end{equation}
(\ref{linear regression provably}) actually indicates that $g$ is Lipschitz with a dimension independent Lipschitz constant.
For example, $g (x) = \phi (x^\top w^*)$, where $w^* \in \mathbb{R}^d$ with $ \left\|w^*\right\|_2 \leq c^*$ for some constant $c^*$, and $\phi (\cdot)$ is Lipschitz with $\phi (0) = 0$.

\begin{assumption}\label{assumption1}
There exists symmetric positive-definite matrices $\Sigma_d \in \mathbb{R}^{d \times d}$ with 
$0 < \lambda_0 \leq \lambda_{\min} (\Sigma_d^2) \leq \lambda_{\max} (\Sigma_d^2) \leq \lambda_1$ for any $d$,
such that the entries of $\mathcal{X} \Sigma_d$ are i.i.d. subgaussian random variables with zero mean and subgaussian moments\footnote{The subgaussian moment of $X$ is defined as $\inf \left\{\mathcal{M} \geq 0 \mid \mathbb{E} e^{t X} \leq e^{\mathcal{M}^2 t^2 / 2}, \ \forall t \in \mathbb{R} \right\}$.} bounded by $1$.
\end{assumption}

This assumption is reasonable in practice in the sense that it only requires the data (rows of the data matrix $\mathcal{X}$) to be i.i.d. and entries of each row of $\mathcal{X}$ (feature vector components) are not necessarily independent.


By applying our framework, we get the following optimization and generalization results for the models during training when the empirical loss function $\mathcal{L}_n(w_\varepsilon) = \varepsilon \mathcal{L}_n(w^{(0)})$.

\begin{theorem}\label{th3}
Consider the undertermined $\ell_2$ linear regression model (\ref{lp linear regression}). Suppose that there exists a constant $c_0 \geq 1$ that is independent of $d$ such that $\left\|w^{(0)}\right\|_2 \leq c_0$. Then the followings hold:
\begin{enumerate}
    \item $\mathcal{L}_n (w)$ satisfies Uniform-LGI along the gradient flow curve $\left\{w^{(t)} : t \geq 0\right\}$ with $c_{n} =   \sqrt{\frac{2\lambda_{\min} (\mathcal{X} \mathcal{X}^\top)}{n}}, \quad \theta_n = 1/2$. 

    \item
    $\mathcal{L}_n (w^{(t)})$ converges to zero linearly, i.e.,
    $\mathcal{L}_n(w^{(t)}) \leq \exp \left(- 2 \lambda_{\min} (\mathcal{X} \mathcal{X}^\top) t / n\right)   \mathcal{L}_n(w^{(0)})$.

    \item
    If $\gamma_1 \in (0, 1)$, then
    under Assumption \ref{assumption1}, for any $\varepsilon \geq 0$ and any target function that satisfies (\ref{linear regression provably}), with probability at least $1 - \delta - \tau^{d-n+1} - \tau^d$ over the training samples $S$,
    the generalization bound of the parameter $w_\varepsilon$ with $\mathcal{L}_n(w_\varepsilon) = \varepsilon \mathcal{L}_n(w^{(0)})$ for some $\varepsilon \in [0, 1]$ is given by
\begin{equation*}
    \mathbb{E}_{(x, y) \sim \mathcal{D}} \left[\tilde{\ell} \left(f (w_\varepsilon, x), y\right)\right] \leq
    \sqrt{2 l_0 \varepsilon \mathcal{L}_n(w^{(0)})} + \frac{4(c_0 + c^*)\sqrt{\frac{2(1-\varepsilon)\lambda_1}{\lambda_0}}  \left(C+\frac{C}{\sqrt{\gamma_0}} + \sqrt{\frac{\log(4 / \delta)}{cn}}\right)}{\frac{\tau}{C_1} \left(\frac{1}{\sqrt{\gamma_1}} -
    1\right) \sqrt{n}} +
    3 l_0 \sqrt{\frac{3 + \log (4 / \delta)}{2n}},
\end{equation*}
where $c, C, C_1 > 0$ and $\tau \in(0,1)$ depend only on the subgaussian moment of the entries.
\end{enumerate}

\end{theorem}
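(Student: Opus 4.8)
\textbf{Proof proposal for Theorem \ref{th3}.}

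The plan is to establish the three claims in sequence, with the first two being direct consequences of the quadratic structure of the loss and the third requiring the bulk of the work: a concentration estimate on the spectrum of $\mathcal{X}\mathcal{X}^\top$ followed by an application of Theorem \ref{gen}. For claim (1), I would compute $\nabla \mathcal{L}_n(w) = \frac{1}{n}\mathcal{X}^\top(\mathcal{X}w - \mathcal{Y})$, so that $\|\nabla \mathcal{L}_n(w)\|^2 = \frac{1}{n^2}\|\mathcal{X}^\top(\mathcal{X}w-\mathcal{Y})\|^2 \geq \frac{\lambda_{\min}(\mathcal{X}\mathcal{X}^\top)}{n^2}\|\mathcal{X}w - \mathcal{Y}\|^2$, where I use that $\mathcal{X}^\top$ restricted to the row space of $\mathcal{X}$ has smallest singular value squared equal to $\lambda_{\min}(\mathcal{X}\mathcal{X}^\top)$, and that $\mathcal{X}w - \mathcal{Y}$ lies in the column space of $\mathcal{X}$ (it stays there along the gradient flow since the residual is only moved within that space, and the global minimum has zero residual). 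Since $\mathcal{L}_n(w) = \frac{1}{2n}\|\mathcal{X}w - \mathcal{Y}\|^2$ and $\min \mathcal{L}_n = 0$, this gives $\|\nabla \mathcal{L}_n(w)\| \geq \sqrt{2\lambda_{\min}(\mathcal{X}\mathcal{X}^\top)/n}\,\sqrt{\mathcal{L}_n(w)}$, which is exactly the Uniform-LGI with $\theta_n = 1/2$ and $c_n = \sqrt{2\lambda_{\min}(\mathcal{X}\mathcal{X}^\top)/n}$. Claim (2) then follows immediately from the $\theta_n = 1/2$ case of Theorem \ref{opt} with $\mathcal{L}_n(w^{(\infty)}) = 0$.

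For claim (3), the strategy is to invoke Theorem \ref{gen} with the truncated loss $\tilde\ell$ (so $L_\ell = \sqrt{l_0}$, $M_{a,b} = l_0$, $p = 1$, $q = 0$), for which $\|L_\Psi(\mathcal{S}_{a,b})\| = 1$ since $f(w,x) = w^\top x$ is linear in the single feature $w^\top x$. The radius $r_{n,\delta,\varepsilon}$ from Theorem \ref{gen} becomes $r_{n,\delta,\varepsilon} = \frac{(M_\delta)^{1/2} - (\varepsilon M_\delta)^{1/2}}{c_n/2} = \frac{2(1-\sqrt{\varepsilon})\sqrt{M_\delta}}{c_n}$ (using $\bar M_\delta = 0$); but to match the stated bound I would instead estimate $\|w^{(0)} - w_\varepsilon\|$ directly via the distance bound in Theorem \ref{opt}, which gives $\|w^{(0)} - w_\varepsilon\| \leq \frac{2}{c_n}\left(\sqrt{\mathcal{L}_n(w^{(0)})} - \sqrt{\varepsilon \mathcal{L}_n(w^{(0)})}\right) = \frac{2(1-\sqrt\varepsilon)}{c_n}\sqrt{\mathcal{L}_n(w^{(0)})}$, and then bound $\mathcal{L}_n(w^{(0)}) = \frac{1}{2n}\|\mathcal{X}w^{(0)} - \mathcal{Y}\|^2 \leq \frac{1}{n}(\|\mathcal{X}w^{(0)}\|^2 + \|\mathcal{Y}\|^2) \leq \frac{\lambda_{\max}(\mathcal{X}\mathcal{X}^\top)}{n}(c_0^2 + (c^*)^2)$ using the hypothesis $\|w^{(0)}\| \leq c_0$ and the target-function assumption (\ref{linear regression provably}). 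Combining, $\|w^{(0)} - w_\varepsilon\| \lesssim (c_0 + c^*)(1-\sqrt\varepsilon)\sqrt{\lambda_{\max}(\mathcal{X}\mathcal{X}^\top)/\lambda_{\min}(\mathcal{X}\mathcal{X}^\top)}$. Since $w_\varepsilon$ lies in the ball $\mathcal{B}(w^{(0)}, r)$ with this $r$, it lies in $\mathcal{B}(0, c_0 + r)$, which sits inside some $\mathcal{S}_{a,b}$ with $\|a\|^2 + \|b\|^2 \leq 2(c_0+r)^2$; feeding this into Theorem \ref{gen} (or its proof via Rademacher complexity of the linear class) yields the middle term, after also translating $\mathcal{L}_n(w_\varepsilon)$ for the truncated loss via $\frac{1}{n}\sum_i \tilde\ell(f(w_\varepsilon, x_i), y_i) \leq \sqrt{2l_0 \mathcal{L}_n(w_\varepsilon)} = \sqrt{2l_0 \varepsilon \mathcal{L}_n(w^{(0)})}$, which accounts for the first term.

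The main obstacle is the spectral concentration step: I need $\lambda_{\max}(\mathcal{X}\mathcal{X}^\top)/\lambda_{\min}(\mathcal{X}\mathcal{X}^\top)$ to be bounded by a dimension-independent constant with high probability, which is where Assumption \ref{assumption1} and the subgaussian matrix theory enter. Writing $\mathcal{X} = (\mathcal{X}\Sigma_d)\Sigma_d^{-1}$, the matrix $\mathcal{X}\Sigma_d \in \mathbb{R}^{n\times d}$ has i.i.d. subgaussian entries, so by standard non-asymptotic random matrix bounds (e.g. the Bai–Yin / Vershynin-type estimate for the extreme singular values of a rectangular subgaussian matrix with $n < d$) its singular values lie in $[\sqrt d - \sqrt n - t, \sqrt d + \sqrt n + t]$ with probability at least $1 - 2e^{-ct^2}$; choosing $t \asymp \sqrt{\log(1/\delta)} + \sqrt n/\sqrt{\gamma_1}$ appropriately and combining with $\lambda_0 \leq \lambda_{\min}(\Sigma_d^2) \leq \lambda_{\max}(\Sigma_d^2) \leq \lambda_1$ converts these into bounds on $\lambda_{\min}(\mathcal{X}\mathcal{X}^\top) \gtrsim \frac{\tau^2}{C_1^2}(1/\sqrt{\gamma_1} - 1)^2 n$ and $\lambda_{\max}(\mathcal{X}\mathcal{X}^\top) \lesssim \lambda_1 (C + C/\sqrt{\gamma_0})^2 \cdot (\text{scaling})$, producing the failure probabilities $\tau^{d-n+1}$ (from the lower tail, which needs $d - n + 1$ degrees of freedom, hence $\gamma_1 < 1$ so that $d > n$) and $\tau^d$ (from the upper tail). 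Assembling the ratio $\sqrt{\lambda_{\max}/\lambda_{\min}}$ and plugging into the displayed bound, together with a union bound over the $\delta/2 + \delta/2$ budget from Theorem \ref{gen} and the two spectral events, gives the stated inequality; the remaining algebra (tracking the constants $c, C, C_1, \tau$ through the subgaussian estimates) is routine bookkeeping.
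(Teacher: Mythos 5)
Your proposal is correct and follows essentially the same route as the paper's own proof: compute $\nabla\mathcal{L}_n(w)=\frac{1}{n}\mathcal{X}^\top(\mathcal{X}w-\mathcal{Y})$ and use $\mathcal{X}\mathcal{X}^\top\succ 0$ to extract the Uniform-LGI with $\theta_n=1/2$, then plug into Theorem~\ref{opt} for the convergence rate, and finally invoke Theorem~\ref{gen} with the truncated loss, bounding $\mathcal{L}_n(w^{(0)})$ via $\lambda_{\max}(\mathcal{X}\mathcal{X}^\top)(c_0^2+(c^*)^2)/n$ and controlling the eigenvalue ratio $\lambda_{\max}/\lambda_{\min}$ through Lemmas~\ref{lemma D.1}--\ref{lemma bound eigenvalue} on the whitened matrix $\mathcal{X}\Sigma_d$. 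A couple of cosmetic remarks: for claim (1) the column-space argument is unnecessary — since $\mathcal{X}$ has full row rank, $\mathcal{X}\mathcal{X}^\top$ is positive definite and $\|\mathcal{X}^\top v\|^2=v^\top\mathcal{X}\mathcal{X}^\top v\ge\lambda_{\min}(\mathcal{X}\mathcal{X}^\top)\|v\|^2$ for every $v\in\mathbb{R}^n$, with no restriction needed. For claim (3), you take $\|L_\Psi\|=1$ (correct for $p=1,q=0$ and $\Psi(s)=s$) whereas the paper uses $\sqrt{2}$; this only shifts the absolute constant in the middle term and is absorbed by the generic constants in the final display. Your radius estimate uses $1-\sqrt{\varepsilon}$ while the paper relaxes to $\sqrt{1-\varepsilon}$; the former is sharper and implies the latter, so either works. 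The description of the subgaussian singular-value tails is paraphrased loosely (the lower tail in Lemma~\ref{lemma D.2} yields the $\tau^{d-n+1}+\tau^d$ failure probability, not a $e^{-ct^2}$ form), but you identify the correct source of each failure event, so the argument goes through.
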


The proof of Theorem \ref{th3} is given in Appendix \ref{proof th3}.
The generalization bound reveals a bias-variance tradeoff for the linear regression model in the high dimension setting.
For $\varepsilon = 1$, the generalization bound is dominated by the initial loss.
For $\varepsilon = 0$ (convergence model), we get a generalization bound $\mathcal{O} \left( \sqrt{\log (1/\delta)/n}\right)$.

\textbf{Comparison.} 
This result is related to \citet{bartlett2020benign} that studied the phenomenon of benign overfitting in high-dimensional $\ell_2$ linear regression.
First, we summarize the different settings and assumptions in our paper and \cite{bartlett2020benign} through an example: the training data $\{(x_i, y_i)\}_{i=1}^n \subset \mathbb{R}^d \times \mathbb{R}$ are i.i.d. drawn from $\mathcal{D}$.
In our case, for any $(x, y) \in \mathcal{D}$, $ y = x^\top w^*$ for some $w^* \in \mathbb{R}^d$ with uniformly bounded norm in $d$, and
entries of $x$ are i.i.d. (thus $\lambda_0 = \lambda_1 = 1$) with mean $0$, variance $\Sigma = \mathbb{E}[x x^\top]$ (diagonal).
Thus $\Sigma$ has a flat spectrum with condition number $1$.
We further assume that $\|x\| \leq 1$ for all $d$.
Let $\mathcal{X} = (x_1, \ldots, x_n)^\top \in \mathbb{R}^{n \times d}, \mathcal{Y} = (y_1, \ldots, y_n) \in \mathbb{R}^n$, then the minimum norm solution $\hat{w}=\mathcal{X}^{\dagger} \mathcal{Y}$ ($\dagger$ is the pseudo inverse).
Let the excess risk $E = \mathbb{E}_x[(x^\top \hat{w} - x^\top w^*)^2]$, then our result states that $E = \mathcal{O}(1/\sqrt{n})$ given that $d$ and $n$ diverge but their ratio remains finite.
Now we consider a new data set $\{(\tilde{x}_i, \tilde{y}_i)\}_{i=1}^n \subset \mathbb{R}^d \times \mathbb{R}$ that are i.i.d. drawn from $\tilde{\mathcal{D}}$.
In \cite{bartlett2020benign}, the setting is that for any $(\tilde{x}, \tilde{y}) \in \tilde{\mathcal{D}}$, $\tilde{y} = \tilde{x}^\top \tilde{w}^*$ for some $\tilde{w}^* \in \mathbb{R}^d$ with uniformly bounded norm in $d$, and entries of $\tilde{x}$ are independent with mean $0$, variance $\tilde{\Sigma} = \mathbb{E}[\tilde{x} \tilde{x}^\top]$.
If the entries of $\tilde{x}$ are independent, then $\tilde{\Sigma}$ is diagonal.
Let the minimum norm solution w.r.t. the new data set $\tilde{w} = \tilde{\mathcal{X}}^{\dagger} \tilde{\mathcal{Y}}$,
then it is shown in \cite{bartlett2020benign} that when $\tilde{\Sigma}$ has a suitable decaying eigenspectrum, the excess risk $\tilde{E} = \mathbb{E}_{\tilde{x}} [(\tilde{x}^\top \tilde{w} - \tilde{x}^\top \tilde{w}^*)^2] \xrightarrow{} 0$ when $n \xrightarrow{} \infty$.
Therefore, we can see that there exist two cases for benign overfitting: the data covariance matrix has a decaying eigenspectrum \citep{bartlett2020benign}; the data covariance matrix has a flat eigenspectrum but each data vector has a bounded norm (Theorem \ref{th3}).
Due to the rescaling of the data vector, even though $\Sigma$ has a flat eigenspectrum, the benign overfitting provably happens when $\hat{w}$ has a bounded norm. 
To have a better understanding of the benign overfitting phenomenon when the data covariance matrix has a flat eigenspectrum, we consider an example in \cite{bartlett2020benign} that we can deduce from our result.
Under the same notation as above, we consider the following data transformation for any $(x, y) \in \mathcal{D}$: $\tilde{x} = \tilde{\Sigma}^{1/2} x \sqrt{d}$, $\tilde{w}^* = \frac{\tilde{\Sigma}^{-1/2} w^* / \sqrt{d}}{\|\tilde{\Sigma}^{-1/2} w^* / \sqrt{d}\|}$, $\tilde{y} = \frac{y}{\|\tilde{\Sigma}^{-1/2} w^* / \sqrt{d}\|}$. Then the induced new data distribution $\tilde{\mathcal{D}}$ satisfies the assumptions and settings in \cite{bartlett2020benign}.
Note that $\tilde{\mathcal{X}} = \mathcal{X} \tilde{\Sigma}^{1/2} \sqrt{d}, \tilde{\mathcal{Y}} = \frac{\mathcal{Y}}{\|\tilde{\Sigma}^{-1/2} w^* / \sqrt{d}\|}$, thus the minimum norm solution $\tilde{w} = \tilde{\mathcal{X}}^{\dagger} \tilde{\mathcal{Y}} = \frac{\tilde{\Sigma}^{-1/2} \hat{w}}{\|\tilde{\Sigma}^{-1/2} w^*\|}$.
Therefore, the excess risk $\tilde{E} = \mathbb{E}_{\tilde{x}} [(\tilde{x}^\top \tilde{w} - \tilde{x}^\top \tilde{w}^*)^2] = \frac{E}{\|\tilde{\Sigma}^{-1/2} w^* / \sqrt{d}\|^2}$.
We can see that there exists an explicit relation between $\tilde{E}$ and $E$. 
In the high dimension setting ($d$ and $n$ diverge but have a finite ratio), our result shows that $E = \mathcal{O}(1/\sqrt{d})$. 
We consider a benign overfitting example in \cite[Part 1 of Theorem 6]{bartlett2020benign}: the eigenvalues of $\tilde{\Sigma}$ decay with a rate given by $\sigma_j = j^{-1} \log^{-\beta}(j+1)$ with $\beta>1$. 
Now we show that $\tilde{E} \xrightarrow{} 0$ when entries of $w^*$ are order $1/\sqrt{d}$.
Notice that $\frac{E}{\|\tilde{\Sigma}^{-1/2} w^* / \sqrt{d}\|^2} \sim \frac{d \sqrt{d}}{\sigma_1^{-1} + \ldots + \sigma_d^{-1}} = \frac{d \sqrt{d}}{\sum_{j=1}^d j \log^{\beta}(j+1)} < \frac{d \sqrt{d}}{\sum_{j=2}^d j} \xrightarrow{} 0$.
In summary, we can transform our assumptions and settings to analyze the case of sharp eigenspectrum for benign overfitting, and get a result that is consistent with \cite{bartlett2020benign}.



\subsection{Kernel Regression}

Consider a positive definite kernel $k : \mathcal{X} \times \mathcal{X} \xrightarrow{} \mathbb{R}$ with a corresponding feature map $\varphi : \mathbb{R}^d \xrightarrow{} \mathcal{F}$ satisfying $\<\varphi (x), \varphi(y)\>_{\mathcal{F}} = k(x, y)$. We assume that $| k(x, x) | \leq 1, \forall x \in \mathcal{X}$. Let $\mathcal{H}$ be the reproducing kernel Hilbert space (RKHS) with respect to $k$. If $\mathcal{F} = \mathbb{R}^s$, then the kernel regression model with $\ell_2$ loss is to solve the following problem
\begin{equation}\label{kernel}
   \argmin_{w \in \mathbb{R}^s} \mathcal{L}_n (w) := \frac{1}{2n} \sum_{i=1}^n \left(w^\top \varphi(x_i) - y_i\right)^2.
\end{equation}
Similar to the $\ell_2$ linear regression case, we consider the following target function:

\textbf{Target function.}
Suppose the training data is generated by an underlying function $g : \mathbb{R}^d \xrightarrow{} \mathbb{R}$ with $y_i = g(x_i), \ \forall i \in [n]$. We further assume that there exists $c^* > 0$ such that
\begin{equation}\label{krr prove}
    \|\mathcal{Y}\| \leq c^* \cdot \sqrt{\lambda_{\max} ( k(\mathcal{X}, \mathcal{X}))},
\end{equation}
where $k(\mathcal{X}, \mathcal{X})$ is the $n \times n$ kernel matrix with $k(\mathcal{X}, \mathcal{X})_{ij} = k(x_i, x_j)$.

For instance, the following functions satisfy  (\ref{krr prove}): $g(x) = \phi (\varphi (x)^\top w^*)$  where $w^* \in \mathbb{R}^s$ with $(\forall s) \left\|w^*\right\|_2 \leq c^*$ for some constant $c^*$, and $\phi (\cdot)$ is Lipschitz with $\phi (0) = 0$.
To get the generalization results of kernel regression, we will discuss two types of kernels separately: radial basis function (RBF) \citep{Broomhead1988MultivariableFI} kernel and inner product kernel.

\textbf{RBF kernel.}  We study the RBF kernel of the form $k(x, y) = \varrho \left(\left\|y - x\right\|\right)$ for a certain RBF $\varrho$. For the input data, we define the \textit{separation distance} of $\mathcal{X}$ as $\textsf{SD} := \frac{1}{2} \min_{i \neq j} \left\|x_i - x_j\right\|, \forall i, j \in [n].$

\textbf{Inner product kernel.} We consider $k(x, y) = \varrho \left(\frac{x^\top \Sigma_d^2 y}{d}\right)$, where $\Sigma_d^2$ is defined in Assumption \ref{assumption1}.

Following \citet{el2010spectrum}, we make the following assumption on the function $\varrho$:
\begin{assumption}\label{assumption2}
$\varrho$ is $C^3$ in a neighborhood of $0$ with $\varrho (0) = 0$, $\varrho (1)  >  \varrho ' (0) \geq 0$, $\varrho ''  (0) \geq 0$.
\end{assumption}

We now apply our framework to get optimization and generalization results for the final convergence model when $\varepsilon = 0$. For the RBF kernel, the generalization bound depends on the separation distance of the samples. For the inner product kernel, we study the spectrum property of the high-dimensional random kernel matrix.

\begin{theorem}\label{th4}
Consider the kernel regression model (\ref{kernel}). Suppose that there exists a  constant $c_0 \geq 1$ that is independent of $d$ such that $\left\|w^{(0)}\right\|_2 \leq c_0$.
Then the followings hold:
\begin{enumerate}
    \item  $\mathcal{L}_n (w)$ satisfies Uniform-LGI along the gradient flow curve $\left\{w^{(t)} : t \geq 0\right\}$ with
    $c_n =   \sqrt{\frac{2\lambda_{\min} (k(\mathcal{X}, \mathcal{X}))}{n}}, \quad \theta_n = 1/2$,
where $c_n$ is controlled by the kernel and input samples.

\item
$\mathcal{L}_n (w^{(t)})$ converges to zero linearly, i.e.,
$\mathcal{L}_n(w^{(t)}) \leq \exp \left(- 2 \lambda_{\min} (k(\mathcal{X}, \mathcal{X})) t / n\right)   \mathcal{L}_n(w^{(0)})$.

    \item
    For any target function that satisfies (\ref{krr prove}) we have:
    \begin{itemize}
        \item For the RBF kernel\footnote{Here $d$ is fixed and $n$ is varied.}, suppose that $\varrho : \mathbb{R}_{\geq 0} \xrightarrow{} \mathbb{R}_{\geq 0}$ is a decreasing function and $\varrho \left(\left\|x\right\|\right) \in L^1 (\mathbb{R}^d)$. If there exists two positive constants $q_{\min}$ and $q_{\max}$ such that $\textsf{SD} \in [q_{\min}, q_{\max}]$ for all $n$, then with probability at least $1 - \delta$ over the training samples,
        the generalization bound of the parameter $w_\varepsilon$ with $\mathcal{L}_n(w_\varepsilon) = \varepsilon \mathcal{L}_n(w^{(0)})$ for some $\varepsilon \in [0, 1]$ is given by
\begin{equation*}
    \mathbb{E}_{(x, y) \sim \mathcal{D}} \left[\tilde{\ell} \left(f (w_\varepsilon, x), y\right)\right]  \leq \sqrt{2 l_0 \varepsilon \mathcal{L}_n(w^{(0)})} + \frac{ C (c_0, c^*, \varrho, d, q_{\min}, q_{\max}) \sqrt{1-\varepsilon}}{ \sqrt{n}} +
    3 l_0 \sqrt{\frac{3 + \log (4 / \delta)}{2n}},
\end{equation*}
where $C (c_0, c^*, \varrho, d, q_{\min}, q_{\max})$ is a constant that  depends only on $c_0, c^*, \varrho, d, q_{\min}, q_{\max}$.

\item For the inner product kernel, under Assumption \ref{assumption1} \& \ref{assumption2}, if $d$ is large enough and $\delta > 0$ is small enough such that $d^{-1/2} \left(\sqrt{3}\delta^{-1/2} + \log^{0.51} d\right) \leq 0.5(\varrho (1) - \varrho ' (0))$, then with probability at least $1 -\delta- d^{-2}$ over the samples, the generalization bound of the parameter $w_\varepsilon$ with $\mathcal{L}_n(w_\varepsilon) = \varepsilon \mathcal{L}_n(w^{(0)})$ for some $\varepsilon \in [0, 1]$ is given by
\begin{equation*}
     \mathbb{E}_{(x, y) \sim \mathcal{D}} \left[\tilde{\ell} \left(f (w_\varepsilon, x), y\right)\right] \leq \sqrt{2 l_0 \varepsilon \mathcal{L}_n(w^{(0)})} +
    \frac{C\left(1+\sqrt{\log(1/\delta)/n}\right)\sqrt{1-\varepsilon}}{\sqrt{n}}  +
    3 l_0 \sqrt{\frac{3 + \log (4 / \delta)}{2n}},
\end{equation*}
    \end{itemize}
\end{enumerate}

where $C$ depends on $c_0, c^*, \gamma_1, \varrho '' (0), \varrho ' (0), \varrho (1)$ and the subgaussian moment of the entries.

\end{theorem}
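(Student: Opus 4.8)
The plan is to prove the three parts in order; parts 1 and 2 are short computations feeding into Theorems \ref{opt} and \ref{gen}, and part 3 is an application of Theorem \ref{gen} whose only real content is a pair of spectral estimates for the Gram matrix $k(\mathcal{X},\mathcal{X})$. Writing $\Phi = (\varphi(x_1),\dots,\varphi(x_n))^\top$, we have $\mathcal{L}_n(w) = \tfrac1{2n}\|\Phi w - \mathcal{Y}\|^2$, $\nabla\mathcal{L}_n(w) = \tfrac1n\Phi^\top(\Phi w - \mathcal{Y})$, and $\Phi\Phi^\top = k(\mathcal{X},\mathcal{X})$, which is positive definite by assumption; hence $\min_w\mathcal{L}_n = 0$ and $\|\nabla\mathcal{L}_n(w)\|^2 = \tfrac1{n^2}(\Phi w - \mathcal{Y})^\top k(\mathcal{X},\mathcal{X})(\Phi w - \mathcal{Y}) \geq \tfrac{2\lambda_{\min}(k(\mathcal{X},\mathcal{X}))}{n}\mathcal{L}_n(w)$, which is exactly the Uniform-LGI with $\theta_n = 1/2$ and the stated $c_n$ on all of $\mathbb{R}^s$, hence along the flow. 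Part 2 is then immediate from Theorem \ref{opt}: the flow converges to a stationary point $w^{(\infty)}$, at which the Uniform-LGI forces $\mathcal{L}_n(w^{(\infty)}) = 0$, and the $\theta_n = 1/2$ branch gives the exponential rate.

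For part 3, follow the recipe of Section \ref{linear regression}: replace the squared loss by its truncation $\tilde\ell$ ($\sqrt{l_0}$-Lipschitz, bounded by $l_0$, so $L_\ell(\mathcal{S}_{a,b}) = \sqrt{l_0}$ and $M_{a,b} = l_0$), and record $\tfrac1n\sum_i\tilde\ell(f(w_\varepsilon,x_i),y_i)\leq\sqrt{2l_0\,\varepsilon\mathcal{L}_n(w^{(0)})}$. The model $f(w,x) = w^\top\varphi(x)$ matches the template (\ref{1}) with $\varphi(x)$ (which satisfies $\|\varphi(x)\|^2 = k(x,x)\leq 1$) as input, $p = 1$, $q = 0$, $\Psi = \mathrm{id}$, so $\|L_\Psi(\mathcal{S}_{a,b})\| = 1$ and $3(p+q) = 3$. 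Take $\bar{M}_\delta = 0$ (part 2) and $M_\delta = (c_0+c^*)^2\lambda_{\max}(k(\mathcal{X},\mathcal{X}))/(2n)$, using $\|\Phi w^{(0)}\|\leq c_0\sqrt{\lambda_{\max}(k(\mathcal{X},\mathcal{X}))}$ and the target condition (\ref{krr prove}). With $\theta_n = 1/2$ the radius simplifies to $r_{n,\delta,\varepsilon} = (1-\sqrt\varepsilon)(c_0+c^*)\sqrt{\lambda_{\max}/\lambda_{\min}}$ (Gram matrices at $\mathcal{X}$), and since $c_n\sqrt n = \sqrt{2\lambda_{\min}}$ and $1-\sqrt\varepsilon\leq\sqrt{1-\varepsilon}$, the Rademacher term of (\ref{gen inequality}) collapses to $\lesssim \tfrac{\sqrt{l_0}(c_0+c^*)\sqrt{1-\varepsilon}}{\sqrt n}\sqrt{\lambda_{\max}(k(\mathcal{X},\mathcal{X}))/\lambda_{\min}(k(\mathcal{X},\mathcal{X}))}$, while the last term is exactly $3l_0\sqrt{(3+\log(4/\delta))/(2n)}$. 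Thus everything reduces to bounding the condition number of $k(\mathcal{X},\mathcal{X})$.

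For the RBF kernel ($d$ fixed), the upper bound uses a row-sum (Gershgorin) estimate $\lambda_{\max}(k(\mathcal{X},\mathcal{X}))\leq\max_i\sum_j\varrho(\|x_i-x_j\|)$ together with $\textsf{SD}\geq q_{\min}$ (so $\mathcal{O}((R/q_{\min})^d)$ points in any radius-$R$ ball) and $\varrho(\|\cdot\|)\in L^1(\mathbb{R}^d)$, giving a bound in $d, q_{\min}, \varrho$ only; the lower bound invokes classical Narcowich--Ward-type lower bounds on the smallest eigenvalue of a radial Gram matrix in terms of the separation distance, which with $\textsf{SD}\in[q_{\min},q_{\max}]$ give $\lambda_{\min}(k(\mathcal{X},\mathcal{X}))\geq c(d,q_{\min},q_{\max},\varrho) > 0$ uniformly in $n$, yielding the first stated bound. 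For the inner product kernel ($n\asymp d$), set $Z = \mathcal{X}\Sigma_d$ (i.i.d. subgaussian rows by Assumption \ref{assumption1}), so $K_{ij} = \varrho(z_i^\top z_j/d)$; by \citet{el2010spectrum}'s spectral analysis of such kernel random matrices, under Assumption \ref{assumption2} one has, with probability $\geq 1 - d^{-2}$, $\|K - (\varrho'(0)\tfrac1d ZZ^\top + (\varrho(1)-\varrho'(0))I_n + \text{rank-one PSD})\| = \mathcal{O}(d^{-1/2}(\delta^{-1/2}+\log^{0.51}d))$. Since $\varrho'(0)\geq 0$ and the rank-one term is PSD, the smallness hypothesis on $d,\delta$ makes $\lambda_{\min}(K)\geq\tfrac12(\varrho(1)-\varrho'(0)) > 0$, while standard non-asymptotic bounds on the top singular value of $\tfrac1{\sqrt d}Z$ under $n\leq\gamma_1 d$ give $\lambda_{\max}(K)^{1/2}\lesssim 1+\sqrt{\log(1/\delta)/n}$; substituting gives the second stated bound.

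The routine parts are the applications of Theorems \ref{opt}--\ref{gen} and the algebra simplifying $r_{n,\delta,\varepsilon}$. The main obstacle is the spectral control of $k(\mathcal{X},\mathcal{X})$: for the inner product kernel, importing and adapting the El Karoui expansion to extract the explicit high-probability lower bound on $\lambda_{\min}(K)$ — the quantity that drives the benign-overfitting conclusion — together with the matching $\lambda_{\max}$ upper bound carrying the precise $1+\sqrt{\log(1/\delta)/n}$ factor; for the RBF case, correctly invoking the separation-distance eigenvalue bounds so that the condition number stays bounded uniformly in $n$.
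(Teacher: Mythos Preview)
Your proposal is correct and follows essentially the same route as the paper: the Uniform-LGI and convergence computations are identical, and for part 3 the paper likewise reduces everything to the condition number of $k(\mathcal{X},\mathcal{X})$, bounding it for the RBF kernel via a row-sum/packing estimate for $\lambda_{\max}$ (Lemma~D.3, \citet{diederichs2019improved}) and a Fourier-based separation-distance bound for $\lambda_{\min}$ (Lemma~D.4, \citet{wendland_2004}, i.e.\ the Narcowich--Ward lineage you name), and for the inner product kernel via the El~Karoui/Liang--Rakhlin linearization (Lemma~D.5) plus a subgaussian operator-norm bound on $\mathcal{X}\Sigma_d$. The only discrepancies are cosmetic constants (the paper writes $\|L_\Psi\|=\sqrt{2}$ and $L_\ell=M_{a,b}=1$ where you more carefully record $1$, $\sqrt{l_0}$, $l_0$), which are absorbed into $C$ anyway.
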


The proof of Theorem \ref{th4} is given in Appendix \ref{proof th4}.

\begin{example}
Kernels satisfying the conditions and assumptions in Theorem \ref{th4} include
(1) RBF Gaussian: $\varrho (r) = e^{- \rho r^2},  \rho > 0$;
(2) RBF Multiquadrics: $\varrho (r) = (\rho + r^2)^{\beta / 2}, \rho > 0,  \beta \in \mathbb{R} \backslash 2 \mathbb{N},  \beta < - d$;
(3) Inner product Polynomial kernel: $\varrho (r) = r^\beta,  \beta \in \mathbb{Z}^+,  \beta \geq 2$;
(4) NTK corresponding to Two-layer ReLU neural networks on $\mathbb{S}^{d-1} (\sqrt{d})$: $\varrho (r) = \frac{r \left(\pi - \arccos \left(r\right)\right)}{2 \pi}$.
\end{example}

\textbf{Comparison.} \
The result of the inner product kernel is related to \citet{liang2020just} who derived generalization bounds for the minimum RKHS norm estimator. They showed that when the data covariance matrix and the kernel matrix enjoy certain decay of the eigenvalues, the generalization bound vanishes as $n$ goes to infinity. For example, for exponential kernel and the covariance matrix $\Sigma := \mathbb{E} [x x^\top]$ with the $j$-th eigenvalue $\lambda_j (\Sigma) = j^{- \alpha}$, the $\ell_2$ generalization bound becomes $\mathcal{O} (n^{- \frac{\alpha}{2 \alpha + 1}})$ when $\alpha \in (0, 1)$ and $n > d$. In comparison, we do not assume the eigenvalue decay property for the covariance matrix and still are able to obtain an optimal generalization bound ${\mathcal{O}} \left(n^{- 1 / 2}\right)$ in the high dimension setting.
Further, we extend the works in \citet{liang2020just} by proving a new result of the RBF kernel. Note that the result of the RBF kernel is not under the high-dimensional setting; thus it is not a direct adaptation of \citet{liang2020just}, and the proof itself is of independent interest.

\subsection{Two-layer Neural Networks}\label{overparameterized}

In this section, we show that our framework can be applied to neural network models. We obtain generalization bounds for shallow neural networks under the Uniform-LGI assumption.
First, define a two-layer ReLU neural network with width $m$:
\begin{equation}\label{eq:two_layer}
    f(w, x) := v^\top \phi(Ux) =  \sum_{i=1}^m v_i \phi(u_i^\top x),
\end{equation}
where $\phi (x) = \max \{0, x\}$, $x \in \mathbb{R}^d$ is the input, $v = (v_1, \ldots, v_m)^\top \in \mathbb{R}^m$, $U = (u_1, \ldots, u_m)^\top \in \mathbb{R}^{m\times d}$ are the parameters,  $w = \operatorname{vec} \left(\left\{v, U\right\}\right)$.  
Trivially, there exists an almost everywhere differentiable function $\Psi : \mathbb{R}^{2m} \xrightarrow{} \mathbb{R}$ with
$\Psi \left(s_1, \ldots, s_m, t_1, \ldots, t_m\right) = \sum_{i=1}^m s_i \phi(t_i)$
such that (\ref{1}) holds. We consider minimizing the quadratic loss
$\mathcal{L}_n(w) := \frac{1}{2 n} \sum_{i=1}^n (f(w, x_i) - y_i)^2$
by gradient flow (\ref{flow}) under a fixed initialization $w^{(0)}$.

The numerical observations in Figure \ref{fig: LGI} show that the Uniform-LGI condition is generally satisfied on \textit{each} optimization path when training the neural network models (including the two-layer neural networks). 
In the next proposition, we theoretically prove that there exist $c_n>0, \theta_n \in [1/2, 1)$ such that the Uniform-LGI holds for the gradient flow path over \textit{each} choice of the training samples.

\begin{proposition}\label{proposition}
For a fixed initialization $w^{(0)}$ and a fixed choice of the training sample $S_n$ with size $n$, there exist two constants $c_n(S_n)>0, \theta_n(S_n) \in [1/2, 1)$ such that the loss function $\mathcal{L}_n(w)$ satisfies the  Uniform-LGI along the gradient flow curve $\{w^{(t)} : t \geq 0\}$ with $c_n(S_n), \theta_n(S_n)$.
Specifically, for the population loss $\mathcal{L}_\mathcal{D}(w)$ and its induced gradient flow curve $\{w_{\mathcal{D}}^{(t)} : t \geq 0\}$, if $\mathcal{L}_\mathcal{D}(w)$ is subanalytic \citep{bolte2007lojasiewicz}, then there exist $c_\mathcal{D}>0, \theta_\mathcal{D} \in [1/2, 1)$ such that $\mathcal{L}_\mathcal{D}(w)$ satisfies the  Uniform-LGI along $\{w_{\mathcal{D}}^{(t)} : t \geq 0\}$ with $c_\mathcal{D}, \theta_\mathcal{D}$.
\end{proposition}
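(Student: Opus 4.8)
The plan is to derive the Uniform-LGI along the flow from the classical (nonsmooth) Łojasiewicz gradient inequality for subanalytic functions \citep{bolte2007lojasiewicz}, upgrading its pointwise-local form to a \emph{single} exponent--constant pair valid along the whole trajectory by a compactness argument. First I would record that $\mathcal{L}_n$ is subanalytic: the ReLU map $\phi(t)=\max\{0,t\}$ is semialgebraic, semialgebraic functions are stable under sums, products and composition, so for each fixed $x$ the map $w\mapsto f(w,x)=\sum_{i=1}^{m}v_i\phi(u_i^\top x)$ is semialgebraic, and hence so is $\mathcal{L}_n(w)=\frac{1}{2n}\sum_{i=1}^{n}(f(w,x_i)-y_i)^2$; in particular $\mathcal{L}_n$ is locally Lipschitz and subanalytic. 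By \citep{bolte2007lojasiewicz}, at every point $\bar w$ there are a neighborhood $U_{\bar w}$, a constant $c_{\bar w}>0$ and an exponent $\theta_{\bar w}\in(0,1)$ with $\|\nabla\mathcal{L}_n(w)\|\ge c_{\bar w}\,|\mathcal{L}_n(w)-\mathcal{L}_n(\bar w)|^{\theta_{\bar w}}$ on $U_{\bar w}$ (for the a.e.\ gradient / Clarke-subgradient selection defining the flow); after shrinking $U_{\bar w}$ so that $\mathcal{L}_n-\mathcal{L}_n(\bar w)$ is bounded there, the exponent can always be pushed up into $[1/2,1)$ at the cost of the constant.

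Next I would exploit the global structure of the flow. The value $\mathcal{L}_n(w^{(t)})$ is nonincreasing, and, assuming the trajectory $\mathcal{S}=\{w^{(t)}:t\ge 0\}$ is bounded (as is standard in this setting, or can be ensured when the relevant sublevel set is bounded), the classical Łojasiewicz convergence theorem together with the previous step shows that $w^{(t)}$ converges to a critical point $w^{(\infty)}$ and that $\overline{\mathcal{S}}=\mathcal{S}\cup\{w^{(\infty)}\}$ is compact. By monotonicity $\inf_{t\ge 0}\mathcal{L}_n(w^{(t)})=\mathcal{L}_n(w^{(\infty)})$, so the claim reduces to exhibiting $c_n,\theta_n$ with $\|\nabla\mathcal{L}_n(w^{(t)})\|\ge c_n(\mathcal{L}_n(w^{(t)})-\mathcal{L}_n(w^{(\infty)}))^{\theta_n}$ for every $t\ge 0$. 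If the flow is stationary after a finite time this is trivial, so I may assume strict decrease, in which case $w^{(\infty)}$ is the only critical point in $\overline{\mathcal{S}}$.

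Then I would patch the local estimates together. Applying the first step at $w^{(\infty)}$ produces a neighborhood $U_\infty$, an exponent $\theta_\infty\in[1/2,1)$ and $c_\infty>0$; since $w^{(t)}\to w^{(\infty)}$, there is $T$ with $w^{(t)}\in U_\infty$ for $t\ge T$, and there $\mathcal{L}_n(w^{(t)})\ge\mathcal{L}_n(w^{(\infty)})$ gives the desired inequality with $(c_\infty,\theta_\infty)$. The remaining compact piece $K=\{w^{(t)}:0\le t\le T\}$ contains no critical point, so $c_K:=\inf_{w\in K}\|\nabla\mathcal{L}_n(w)\|>0$ (using outer semicontinuity of the Clarke subdifferential); and since $\mathcal{L}_n(w^{(t)})-\mathcal{L}_n(w^{(\infty)})\le B:=\mathcal{L}_n(w^{(0)})-\mathcal{L}_n(w^{(\infty)})$ on $K$, one has $\|\nabla\mathcal{L}_n(w)\|\ge c_K\ge c_K\max\{1,B\}^{-\theta_\infty}(\mathcal{L}_n(w)-\mathcal{L}_n(w^{(\infty)}))^{\theta_\infty}$ there. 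Hence $\theta_n(S_n):=\theta_\infty$ and $c_n(S_n):=\min\{c_\infty,\,c_K\max\{1,B\}^{-\theta_\infty}\}>0$ work. For the population statement the same three steps go through once $\mathcal{L}_{\mathcal{D}}$ is assumed subanalytic --- the first step then being a hypothesis rather than a consequence --- and produce $c_{\mathcal{D}},\theta_{\mathcal{D}}$ for the flow $\{w_{\mathcal{D}}^{(t)}:t\ge 0\}$.

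The hard part will be precisely this passage from local to global: the Łojasiewicz inequality supplies a different $(c_{\bar w},\theta_{\bar w})$ near each point, referenced to the \emph{local} value $\mathcal{L}_n(\bar w)$, whereas the Uniform-LGI demands one pair on the whole trajectory referenced to the \emph{single} value $\mathcal{L}_n(w^{(\infty)})$. Bridging this requires the convergence of the flow --- the classical consequence of the Łojasiewicz inequality for gradient trajectories, which itself needs relative compactness of the trajectory --- together with the observation that at noncritical points the crude bound $\|\nabla\mathcal{L}_n\|\ge\text{const}>0$, rather than the Łojasiewicz inequality, is the right tool. The nondifferentiability of ReLU is a comparatively minor point, absorbed into the nonsmooth subanalytic / Clarke-subgradient framework of \citep{bolte2007lojasiewicz}.
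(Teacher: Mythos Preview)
Your proposal is correct and follows essentially the same two-phase argument as the paper: apply the classical LGI at the limit point $w^{(\infty)}$ to handle the tail of the trajectory, use that $\|\nabla\mathcal{L}_n\|$ has a positive infimum on the compact head $\{w^{(t)}:0\le t\le T\}$ to handle the rest, and then combine via the worst constant/exponent pair. Your write-up is in fact more careful than the paper's --- you make explicit the semialgebraicity of $\mathcal{L}_n$, the trajectory-boundedness needed for convergence to $w^{(\infty)}$, the mechanism for pushing $\theta$ into $[1/2,1)$, and the Clarke-subgradient treatment of ReLU --- whereas the paper simply asserts the existence of $w^{(\infty)}$ and the positivity of the head-piece infimum.
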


The proof can be found in Appendix \ref{proof proposition}.
Note that the Uniform-LGI constants 
$c_n(S_n), \theta_n(S_n)$ are random variables that depend on the choice of the training sample $S_n$. Based on the limiting values ($c_\mathcal{D}, \theta_\mathcal{D}$), we make the following assumption that $c_n(S_n)>0, \theta_n(S_n) \in [1/2, 1)$ hold uniformly with high probability over the choice of $S_n$.

\begin{assumption}\label{nn lgi assumption}
For the Uniform-LGI constants in Proposition \ref{proposition} and any $\delta \in (0,1)$,
there exists $c_{n, \delta}>0$, $\theta_{n, \delta} \in [1/2, 1)$ such that with probability at least $1-\delta/3$ over the sample $S_n$, $\theta_n(S_n) \leq \theta_{n, \delta}, c_n(S_n) \geq c_{n, \delta}$.
\end{assumption}
This assumption is also supported by the statistical results in Table \ref{lgi-table}. Indeed, the numerical results suggest that the Uniform-LGI constants satisfy this assumption with high confidence over the training sample. We are now ready to state the main result for the two-layer neural network model.

\begin{theorem}\label{shallow nn}
Consider the two-layer neural network model (\ref{eq:two_layer}) with initialization $w^{(0)}$. Under Assumption \ref{nn lgi assumption}, for any $\delta \in (0, 1)$ we have the following:
\begin{itemize}
    \item With probability at least $1-\delta/3$ over the training sample, the convergence rate is given by
    \begin{align*}
        \theta_{n, \delta} = 1/2: & \quad
        \mathcal{L}_n(w^{(t)}) - \mathcal{L}_n(w^{(\infty)}) \leq e^{- c_{n, \delta}^2 t}  (\mathcal{L}_n(w^{(0)}) - \mathcal{L}_n(w^{(\infty)})); \\
        \theta_{n, \delta} \in (1/2, 1): & \quad
       \mathcal{L}_n(w^{(t)}) - \mathcal{L}_n(w^{(\infty)}) \leq \left(1 + M t\right)^{- 1 / \left(2 \theta_{n} - 1\right)}  (\mathcal{L}_n(w^{(0)}) - \mathcal{L}_n(w^{(\infty)})),
    \end{align*}
    where $M = c_{n, \delta}^2 (2 \theta_{n, \delta} - 1) \left(\mathcal{L}_n(w^{(0)})\right)^{2 \theta_{n, \delta} - 1}$.

    \item
    Under Assumption \ref{assumption1},
    assume that there exist $ \Bar{M}_\delta$ such that with probability at least $1 - \delta/2$ over the training samples $S$,   $\mathcal{L}_n(w^{(\infty)}) \leq \Bar{M}_\delta$.
    for any target function that satisfies (\ref{linear regression provably}), then with probability at least $1-\delta$ over the training samples, the generalization bound of the parameter $w_\varepsilon$ with $\mathcal{L}_n(w_\varepsilon) = \varepsilon \mathcal{L}_n(w^{(0)})$ for some $\varepsilon \in [0, 1]$ is given by
\begin{align*}
    \mathbb{E}_{(x, y)  \sim \mathcal{D}} \left[\tilde{\ell} \left(f (w_\varepsilon, x), y\right)\right]  \leq & \sqrt{2 l_0 \varepsilon \mathcal{L}_n(w^{(0)})} + \frac{4}{\sqrt{n}} \left(\frac{\left( \tilde{C} \left(1 + \log(1 / \delta)/n\right)\right)^{1-\theta_{n, \delta}} - (\varepsilon M_\delta -\Bar{M}_\delta)^{1-\theta_{n, \delta}}}{c_{n, \delta}(1 - \theta_{n, \delta})}\right)^2 +
    \\ &
    3 l_0 \sqrt{\frac{6m + \log (12/\delta)}{2n}},
\end{align*}
where $\tilde{C}$ depends only on $c^*, \gamma_0, \lambda_0, \|w^{(0)}\|$ and the subgaussian moment of the entries.
\end{itemize}
\end{theorem}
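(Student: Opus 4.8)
The plan is to read off both parts of the theorem from the two general results of Section~\ref{opt and gen}, namely Theorem~\ref{opt} for the convergence rate and Theorem~\ref{gen} for the generalization bound, feeding in Assumption~\ref{nn lgi assumption} for the Uniform-LGI constants and Assumption~\ref{assumption1} together with the target condition~(\ref{linear regression provably}) for the model-specific quantities. \emph{Convergence part.} By Assumption~\ref{nn lgi assumption}, with probability at least $1-\delta/3$ over the sample $S_n$ the empirical loss $\mathcal{L}_n$ satisfies the Uniform-LGI along $\{w^{(t)}:t\ge 0\}$ with exponent $\le\theta_{n,\delta}$ and constant $\ge c_{n,\delta}$; on that event Theorem~\ref{opt} applies and gives exactly the two displayed rates (using $\mathcal{L}_n(w^{(0)})-\mathcal{L}_n(w^{(\infty)})\le\mathcal{L}_n(w^{(0)})$ in the constant $M$). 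Nothing new is required here.

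\emph{Generalization part, setup.} First reduce to the truncated loss $\tilde\ell$ exactly as in the discussion preceding Theorem~\ref{th3}: $\tilde\ell$ is $\sqrt{l_0}$-Lipschitz in its first argument, takes values in $[0,l_0]$, and vanishes on the diagonal, so $L_\ell(\mathcal{S}_{a,b})=\sqrt{l_0}$ and $M_{a,b}=l_0$ for all $a,b$; furthermore $\tfrac1n\sum_i\tilde\ell(f(w_\varepsilon,x_i),y_i)\le\sqrt{2l_0\,\mathcal{L}_n(w_\varepsilon)}=\sqrt{2l_0\varepsilon\mathcal{L}_n(w^{(0)})}$ by Lipschitzness and Cauchy--Schwarz, which is the first summand of the claimed bound. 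Next, compute $\|L_\Psi(\mathcal{S}_{a,b})\|$ for the representation~(\ref{1}) of the network~(\ref{eq:two_layer}): the arguments of $\Psi$ are $(u_1^\top x,\dots,u_m^\top x,v_1,\dots,v_m)$, with $\partial\Psi/\partial(u_i^\top x)=v_i\phi'(u_i^\top x)$ and $\partial\Psi/\partial v_i=\phi(u_i^\top x)$; on $\mathcal{S}_{a,b}$ and for $\|x\|\le 1$ these are bounded in absolute value by $b_i$ and $a_i$ respectively (from $|\phi'|\le1$, $0\le\phi(t)\le|t|$, and $|u_i^\top x|\le a_i$). Hence $\|L_\Psi(\mathcal{S}_{a,b})\|^2\le\|a\|^2+\|b\|^2$, so $\sup_{\|a\|^2+\|b\|^2\le 2r^2}\|L_\Psi(\mathcal{S}_{a,b})\|\le\sqrt2\,r$, and the Rademacher term of Theorem~\ref{gen} reduces to a quantity proportional to $r_{n,\delta,\varepsilon}^2/\sqrt n$.

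\emph{Generalization part, the data estimates.} It remains to produce $M_\delta$, a high-probability upper bound on $\mathcal{L}_n(w^{(0)})$. Bound $\mathcal{L}_n(w^{(0)})\le\tfrac1n\sum_i f(w^{(0)},x_i)^2+\tfrac1n\|\mathcal{Y}\|^2$, then use $\sum_i f(w^{(0)},x_i)^2\le\|v^{(0)}\|^2\|U^{(0)}\|_F^2\,\lambda_{\max}(\mathcal{X}\mathcal{X}^\top)$ and $\|\mathcal{Y}\|^2\le(c^*)^2\lambda_{\max}(\mathcal{X}\mathcal{X}^\top)$ by~(\ref{linear regression provably}). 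Under Assumption~\ref{assumption1}, write $\mathcal{X}=(\mathcal{X}\Sigma_d)\Sigma_d^{-1}$ and invoke a non-asymptotic bound on the largest singular value of the subgaussian matrix $\mathcal{X}\Sigma_d$ to obtain $\lambda_{\max}(\mathcal{X}\mathcal{X}^\top)\le C\lambda_0^{-1}(\sqrt n+\sqrt d+t)^2$ with probability $\ge 1-e^{-ct^2}$; taking $t$ of order $\sqrt{\log(1/\delta)}$ and using $\gamma_0 d\le n$ yields $\mathcal{L}_n(w^{(0)})\le\tilde C(1+\log(1/\delta)/n)=:M_\delta$, with $\tilde C$ depending only on $c^*,\gamma_0,\lambda_0,\|w^{(0)}\|$ and the subgaussian moment. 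Finally, insert $M_\delta$, the assumed $\bar M_\delta$, and $(\theta_{n,\delta},c_{n,\delta})$ into $r_{n,\delta,\varepsilon}$ from Theorem~\ref{gen} (monotonicity of the bracket $(\cdot)^{1-\theta}-(\cdot)^{1-\theta}$ in both $M_\delta$ and $\bar M_\delta$ shows the distance estimate of Theorem~\ref{opt} is dominated by this $r_{n,\delta,\varepsilon}$), set $p=q=m$, $L_\ell=\sqrt{l_0}$, $M_{a,b}=l_0$, and union-bound the three failure events — the Uniform-LGI event ($\delta/3$), the event $\mathcal{L}_n(w^{(\infty)})\le\bar M_\delta$ ($\delta/2$), and the random-matrix event for $M_\delta$ — which also turns $\log(4/\delta)$ into $\log(12/\delta)$. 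Collecting terms gives the stated inequality.

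The only genuinely nontrivial step is the bound on $\mathcal{L}_n(w^{(0)})$: getting it into the advertised form $\tilde C(1+\log(1/\delta)/n)$ requires a sharp non-asymptotic deviation bound for $\lambda_{\max}(\mathcal{X}\mathcal{X}^\top)$ under Assumption~\ref{assumption1} and careful propagation of constants and of the three $\delta$-budgets so that $\tilde C$ ends up depending only on the listed quantities. Everything else is bookkeeping on top of Theorems~\ref{opt} and~\ref{gen} and the $\tilde\ell$-reduction already spelled out in the text.
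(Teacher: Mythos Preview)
Your proposal is correct and matches the paper's own proof essentially step for step: the convergence part is read off from Theorem~\ref{opt} under Assumption~\ref{nn lgi assumption}; for the generalization part the paper bounds $\mathcal{L}_n(w^{(0)})$ via $\|v^{(0)}\|^2\|U^{(0)}\|_F^2\lambda_{\max}(\mathcal{X}\mathcal{X}^\top)/n+(c^*)^2\lambda_{\max}(\mathcal{X}\mathcal{X}^\top)/n$ and equation~(\ref{maxeig_covariance}) (exactly your route through Lemma~\ref{lemma D.1} and Lemma~\ref{lemma bound eigenvalue}), establishes $\sup_{\|a\|^2+\|b\|^2\le 2r^2}\|L_\Psi(\mathcal{S}_{a,b})\|\le\sqrt{2}\,r$ by the same partial-derivative computation you outline, and then invokes Theorem~\ref{gen} with $p=q=m$ together with the $\tilde\ell$-reduction (\ref{loss value transition}). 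The only cosmetic difference is that the paper plugs in $L_\ell(\mathcal{S}_{a,b})=M_{a,b}=1$ rather than $\sqrt{l_0},l_0$ in the final step, which is an inconsistency in the paper itself and not a flaw in your argument.
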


The proof of Theorem \ref{shallow nn} is provided in Appendix \ref{proof shallow nn}.
We can see that for shallow neural networks ($m$ much smaller than $n$), the generalization bound is dominated by the first two terms.
When the gradient flow converges to a global minimum with zero training loss ($\Bar{M}_\delta=0$), then the generalization bound for the convergence model ($\varepsilon = 0$) reduces to $\widetilde{\mathcal{O}} \left(
c_{n, \delta}^{-2} (1 - \theta_{n, \delta})^{-2} n^{-1/2}\right)$, where $\widetilde{\mathcal{O}}$ hide the logarithmic factor.

\textbf{Empirical evaluation.} \
To demonstrate that our generalization bound is non-vacuous and can effectively capture the test error, we perform experiments on the CIFAR10 dataset (first two classes). 
First, we get the Uniform-LGI constants $c_{n,\delta}, \theta_{n, \delta}$ by finite sample test (Algorithm \ref{finite sample test}) with varied sample size $n$.
Then, we calculate the generalization bound (up to some constant) $c_{n, \delta}^{-2} (1 - \theta_{n, \delta})^{-2} n^{-1/2} + \sqrt{m / n}$.
Finally, we randomly flip the label and record the Uniform-LGI constants $c_{n,\delta}, \theta_{n, \delta}$ with different ratio of label flip.
See Appendix \ref{experiment} for implementation details.
From Figure \ref{bound evaluation} (a), we observe that $c_{n,\delta}, \theta_{n, \delta}$ have good dependency on the sample size $n$. Figure \ref{bound evaluation} (b) shows that the generalization bound decreases as $n$ increases, indicating that our bound is non-vacuous.
As shown in Figure \ref{bound evaluation} (c), when the ratio of label flip increases, $c_{n,\delta}$ decreases and $\theta_{n,\delta}$ increases.
Our results (Theorem \ref{shallow nn}) show that if $c_{n,\delta}$ decreases and $\theta_{n,\delta}$ increases, then we should expect worse optimization and generalization error, which also correlates with the known findings that random labels negatively affect both optimization and generalization \citep{zhang2016understanding}.

\begin{figure}[ht]
     \centering
     \subfigure[]{\includegraphics[width=.32\textwidth]{./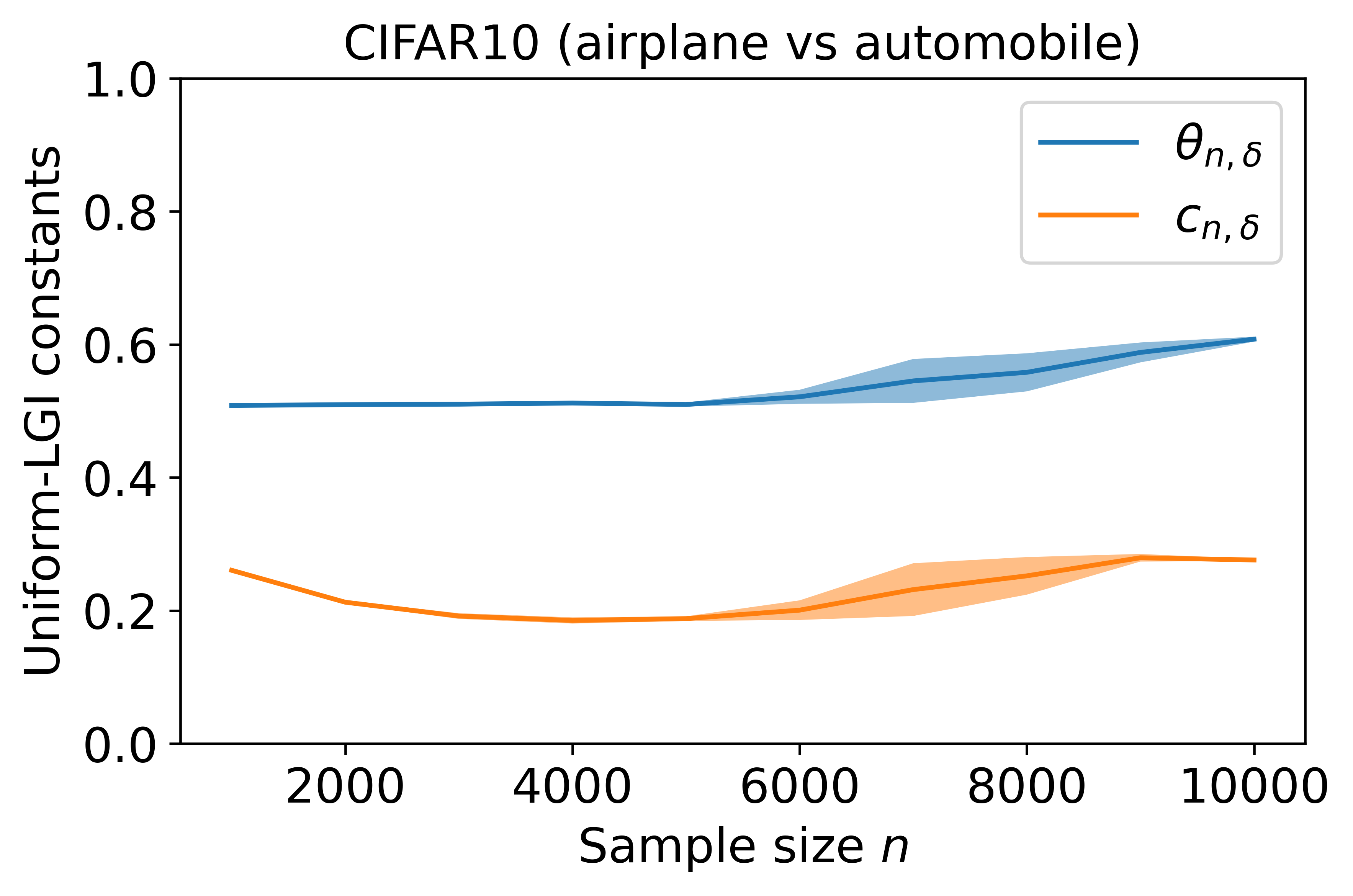}}
    \subfigure[]{\includegraphics[width=.32\textwidth]{./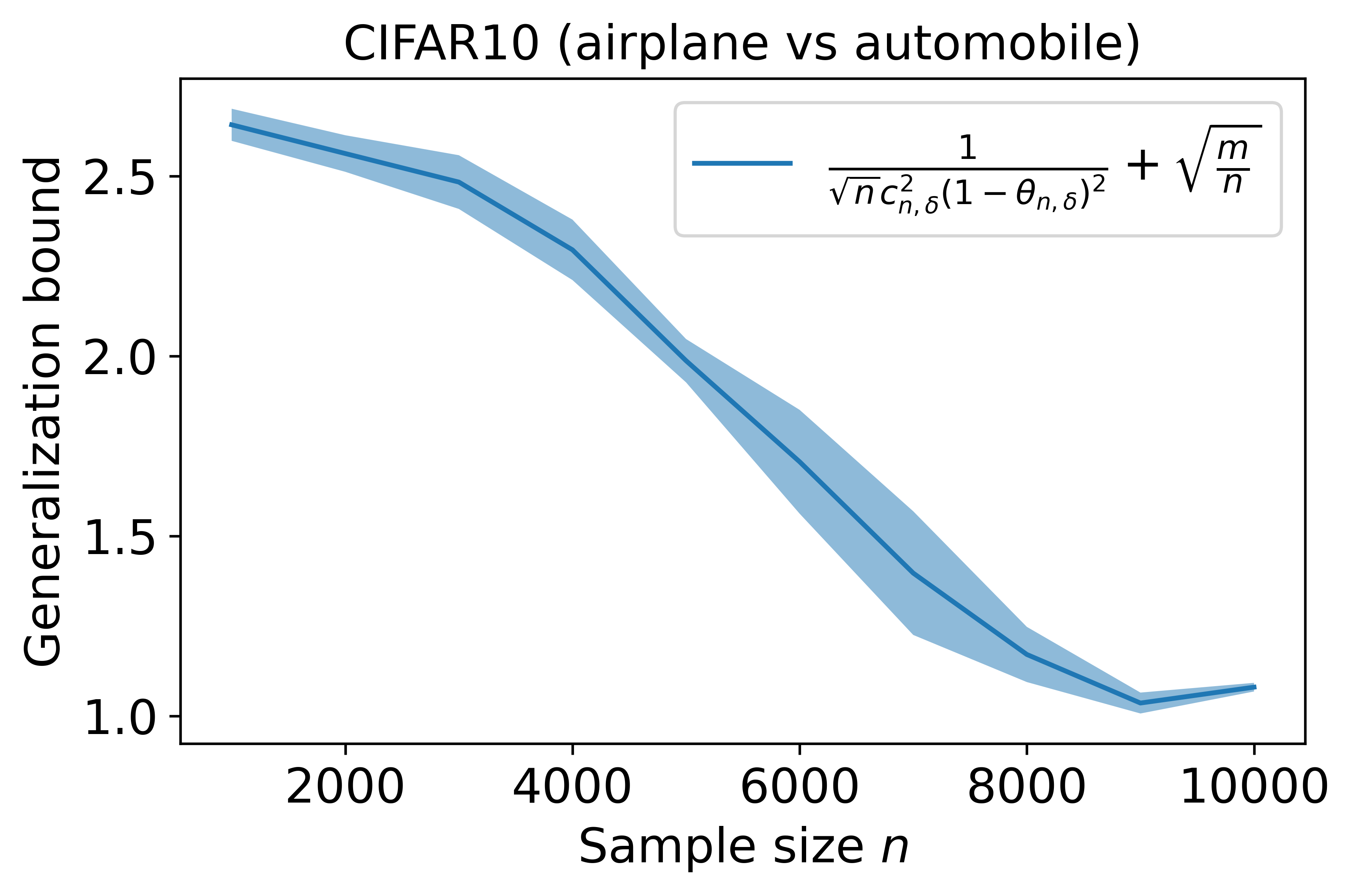}}
    \subfigure[]{\includegraphics[width=.32\textwidth]{./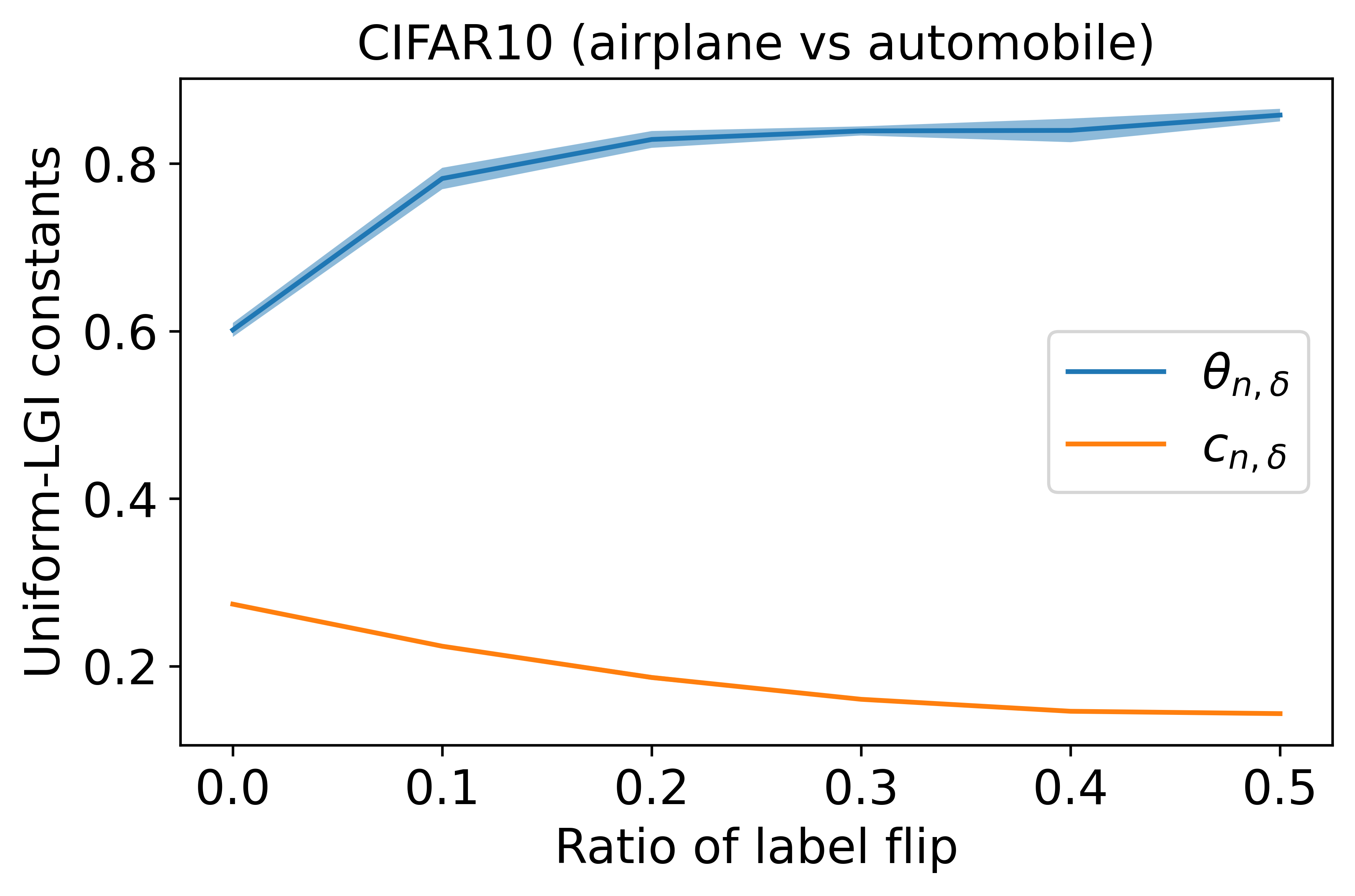}}
    \caption{
    (a) The Uniform-LGI constants $c_{n,\delta}, \theta_{n,\delta}$ obtained by finite sample test (Algorithm \ref{finite sample test}) with respect to sample size $n$.
    (b) Dependence of our generalization bound $c_{n, \delta}^{-2} (1 - \theta_{n, \delta})^{-2} n^{-1/2} + \sqrt{m / n}$ on the sample size $n$.
    (c) The Uniform-LGI constants $c_{n,\delta}, \theta_{n,\delta}$ with respect to the ratio of label flip.
    All Results are reported with the mean and standard deviation over $5$ independent runs.}
    \label{bound evaluation}
\end{figure}

\textbf{Overparameterized neural netowrks.} \
Our generalization bound becomes vacuous when $m$ is large enough. But for overparameterized neural networks, the PL condition ($\theta_{n, \delta} = 1/2$) is proved to hold in the NTK regime \citep{liu2020loss}. To make our framework complete, we derive optimization and generalization results for the overparameterized two-layer neural network.
The generalization result is based on the Rademacher complexity theory in \citet{arora2019fine} in the NTK regime.
In particular, we theoretically show that $\theta_{n, \delta} = 1/2$ and $c_{n, \delta} = \mathcal{O}(1)$ over the training sample and standard random initialization when $m$ is large enough, and we give a generalization bound of order $\mathcal{O} \left(\sqrt{\log (n / \delta)/n}\right)$, which confirms the phenomenon of benign over-fitting in the high dimension setting. More details are provided in Appendix \ref{proof th5}.

\section{Related Works}

\textbf{Optimization.} \
Theoretically analyzing the training process of most machine learning models is a challenging problem as the loss landscapes are generally non-convex. One approach to studying non-convex optimization problems is to use the Polyak-Łojasiewicz (PL) condition \citep{Polyak1963GradientMF}, which characterizes the local geometry of loss landscapes and ensures the existence of global minima.  It is shown in \citet{karimi2016linear} that GD admits linear convergence for a class of optimization objective functions under the PL condition.
In this paper,
we modify the original LGI to obtain convergence rates (not necessarily converges to a global minimum) and generalization estimates\emph{ during} the training process.

\textbf{Generalization.} \
Traditional VC dimension-based generalization bounds depend on the number of parameters and are vacuous for large models such as overparameterized neural networks. To overcome this limitation, several non-vacuous generalization bounds are proposed, e.g. the norm/margin-based generalization bounds \citep{neyshabur2015norm, bartlett2017spectrally, golowich2018size, neyshabur2018the}, and the PAC-Bayes-based bounds \citep{DR17, neyshabur2018a, zhou2018nonvacuous, rivasplata2020pac}.
However, these bounds tend to ignore or focus less on optimization, e.g., norm-based generalization bounds may not discuss how small-norm solutions are obtained through practical training.
In this paper, we connect the optimization and generalization by deriving generalization bounds based on the optimization path length during training model training.

\textbf{Interplay of optimization and generalization.} \
Implicit bias builds the bridge between optimization and generalization, which has been widely studied to explain the generalization ability of machine learning models. Recent works \citep{soudry2018implicit, soudry2018the, nacson2019lexicographic, nacson2019stochastic, Lyu2020Gradient} showed that linear classifiers or deep neural networks trained by GD/SGD maximizes the margin of the separating hyperplanes and therefore generalizes well. Other works \citep{arora2019fine, zou, cao2020generalization, Ji2020Polylogarithmic, chen2021how} considered overparameterized neural networks in the lazy training regime where the minimizer has good generalization due to the low ``complexity'' of the parameter space.
In this work, we focus on specific conditions on loss functions under which we can connect optimization and generalization
on the situations which do not satisfy those above.

\textbf{Comparison with existing works on the LGI condition.} \
Recent work by \cite{frei2021proxy} studies optimization and generalization via general PL-conditions. Their key assumptions, problem settings, and type of results differ from ours on many aspects. In \cite{frei2021proxy}, the authors analyze a loss function $f$ that satisfies the $(g, \xi, \alpha, \mu)$-proxy PL inequality, which is a more general condition than the LGI in this paper. For example, by setting $g = f, \xi = \min_w f(w)$, the proxy PL inequality corresponds to the LGI. However, a key assumption in \cite{frei2021proxy} is that $f(w)$ satisfies a $(g, \xi, \alpha, \mu)$-proxy PL inequality \textit{globally} for all $w \in \mathbb{R}^p$, while our assumption only requires that $f(w)$ satisfies the LGI \textit{locally}, e.g., along the optimization path.
For the problem settings, we cover different situations. \cite{frei2021proxy} considers the online learning setting where samples are observed one-by-one. In contrast, we focus on the classic gradient flow/descent setting, where the training samples are given before training. Thus, these two works provide different types of results. Through the global proxy-PL inequality, \cite{frei2021proxy} gives an upper bound for the \textit{best case} of the test error, i.e., $\min_{t < T} \mathcal{L}_\mathcal{D}(w^{(t)})$. In this work, we obtain bounds for the test error $\mathcal{L}_\mathcal{D}(w^{(t)})$ for any time $t$, which can be used to showcase bias-variance trade-off patterns. For instance, a simple application of our theory yields a new case of benign overfitting on linear/kernel regression in the high dimensional setting (Theorem \ref{th3} \& \ref{th4}).
Moreover, in this work, we propose a finite sample test algorithm that can be applied to verify the Uniform-LGI and estimate the Uniform-LGI constants for standard deep learning settings.
Another work by \cite{foster2018uniform} studies similar LGI conditions. Under the key assumption that the \textit{population risk} $\mathcal{L}_\mathcal{D}(w)$ satisfies the LGI on some given parameter space $\mathcal{W}$, the authors show that the excess risk $\mathcal{L}_\mathcal{D}(w) - \min_{w \in \mathcal{W}} \mathcal{L}_\mathcal{D}(w)$ is bounded above by $\sup_{w \in \mathcal{W}} \|\nabla \mathcal{L}_n(w) - \nabla \mathcal{L}_\mathcal{D}(w)\|$. 
In practice, it is not easy to validate whether $\mathcal{L}_\mathcal{D}$ satisfies the LGI because it depends on the data distribution and requires a very large number of samples. In comparison, we only assume that the \textit{empirical risk} $\mathcal{L}_n(w)$ satisfies the LGI along the training path, and the Uniform-LGI condition can be numerically verified by the finite sample test algorithm.
Under these two different assumptions, both our results and those of \cite{foster2018uniform} yield consistent generalization bounds $\mathcal{O}(1/\sqrt{n})$ for the linear regression models in the high-dimensional setting. A local version of the PL condition has been studied in
\cite{chatterjee2022convergence}. This local PL condition is a particular case of our Uniform-LGI with $\theta = 1/2$. In the case of feed-forward neural networks, \cite{chatterjee2022convergence} show that gradient descent with proper initialization converges to a global minimum given that (1)
the activation functions are smooth and strictly increasing;
(2) the minimum value of the parameters is large enough;
(3) the learning rate is small enough.
In this work, we give both convergence guarantee (to a local minimum) and generalization analysis based on the Uniform-LGI condition.

\section{Conclusion}

In this work, we address the questions of when and why an optimization algorithm finds a minimum with good generalization properties. For this purpose, we propose a framework to bridge the gap between optimization and generalization based on the optimization path. The pivotal component is the Uniform-LGI condition: a condition on the loss function that is generally satisfied during the training of standard machine learning models. Using this assumption, we show that gradient flow converges to a stationary point with an explicit convergence rate and we derive generalization bound that hold during training. Finally, we apply the framework to three widely used machine learning models. By estimating the optimization path length, we get non-vacuous generalization bounds, even in the high dimensional case. 

\section{Acknowledgement}

We thank the anonymous TMLR reviewers for several helpful comments.
H. Yang is partially supported by the US National Science Foundation under award DMS- 2244988, DMS-2206333, and the Office of Naval Research Young Investigator Award. Q. Li is supported by the National Research Foundation, Singapore, under the NRF fellowship (project No. NRF-NRFF13-2021-0005).

\newpage

\bibliography{tmlr}
\bibliographystyle{tmlr}

\newpage

\appendix

\section{Experiments}\label{experiment}

In this section, we provide statistical results for the Uniform-LGI constants obtained by Algorithm \ref{finite sample test} and details of the numerical evaluation of Figure \ref{fig: synthetic} and \ref{bound evaluation}.

\textbf{Experiments details for Figure \ref{fig: synthetic}.} \ 
All the three models are trained with gradient descent with fixed learning rate $\eta$.
Specifically, the following hyper-parameters are used for training: for model (a),
initialization $w^{(0)}=0.1$, $\eta = 0.1$, epoch $K = 10^6$, start point $K_0 = 50$, step $s = 10^3$;
for model (b), $w^{(0)}=1$, $\eta = 0.01$, epoch $K = 10^3$, start point $K_0 = 50$, step $s = 10$;
for the linear regression model (c), the dataset contains $100$ points $\left\{(x_i, y_i)\right\}_{i=1}^{100} \subseteq \mathbb{R}^{200} \times \mathbb{R}$,
where $x_i (\forall i \in [100])$ are uniformly drawn from the $200$-dimensional unit sphere,
and $y_i (\forall i \in [100])$ are generated by a linear target function $y_i = \beta^\top x_i$ for some $\beta \in \mathbb{R}^{200}$ with $\|\beta\| = 1$.
We train with random initialization, $\eta = 0.1$, epoch $K = 10^5$, start point $K_0 = 50$, step $s = 100$.

Next, we list experiments details for Figure \ref{bound evaluation}.

\textbf{Dataset.} \
We use the CIFAR10 dataset in our numerical evaluation.
In particular, we only select the first two classes (airplane versus automobile) with totally $10000$ training images.
For the experiments in Figure \ref{bound evaluation} (a) and (b), we randomly choose the sample from the whole $10000$ training images with size $n = 1000, 2000, \ldots, 10000$, where each class has the same sample size.  
For the experiment in Figure \ref{bound evaluation} (c), we use the whole $10000$ training images.

\textbf{Model.} \
We use two-layer fully connected neural network (no bias) with width $m = 512$, ReLU activation as the training model.

\textbf{Optimizer.} \
We optimize the cross entropy loss by full batch gradient descent with random initialization and leaning rate $0.01$. We stop training once the training loss is less than $0.001$ or epoch reaches $20000$.
We do not use weight decay, dropout or batch normalization.

\textbf{Label flip.} \
We only flip the training labels with different ratio.

\textbf{Uniform-LGI constants.} \
We calculate the Uniform-LGI constants by the finite sample test (Algorithm \ref{finite sample test}) after the training.
For the obtained training parameters $w^{(k)}$ at epoch $k$ for $k \in [0:K]$, we estimate the optimum $w^*$ of $\mathcal{L}(w)$ as $\argmin_{k \in [0:K]}$. To avoid division by zero, we delete the parameter $w^*$ in $w^{(0)}, \ldots, w^{(K)}$.
Then we apply the finite sample test (Algorithm \ref{finite sample test}) with start point $K_0 = 1000$, step $s=100$ to obtain the estimated Uniform-LGI constants.
We set $(c_{n,\delta}, \theta_{n,\delta})$ to be the final estimation $(c_{K+1}, \theta_{K+1})$.
We report the results with the mean and standard deviation over $5$ independent runs.

The following table shows the statistical results of different models' Uniform-LGI constants obtained by Algorithm \ref{finite sample test}.

\begin{table}[ht]
\begin{center}
\begin{tabular}{lll}
\toprule
\multicolumn{1}{c}{\bf Model}  & \multicolumn{1}{c}{$\bm{\theta^*}$} & \multicolumn{1}{c}{$\bm{c^*}$} \\
\midrule
Linear Regression   & $0.496 \pm 0.001$ & $0.063 \pm 0.0015$ \\
2-layer MLP & $0.785 \pm 0.005$ & $0.066 \pm 0.013$ \\
ResNet18  & $0.751 \pm 0.019$ & $0.068 \pm 0.016$ \\
Wide ResNet   & $0.514 \pm 0.011$  & $0.169 \pm 0.037$ \\
\bottomrule
\end{tabular}
\end{center}

\caption{Mean and standard error for the Uniform-LGI constants $\theta^*, c^*$ obtained by finite sample test (Algorithm \ref{finite sample test}) over $10$ independent runs over random initialization and data reshuffling.}
\label{lgi-table}
\end{table}

\section{Proof of Theorem \ref{opt}}\label{proof opt}

In this section we will prove Theorem \ref{opt}. The proof contains two parts, one is the convergence of the gradient flow, and another is the convergence rate derivation.

\begin{proof}

First, by the gradient flow (\ref{flow}), we have
\begin{equation*}
    \frac{d \mathcal{L}_n(w^{(t)})}{d t} = \<\nabla \mathcal{L}_n(w^{(t)}), \frac{d w^{(t)}}{d t}\> = - \left\|\nabla \mathcal{L}_n (w^{(t)})\right\|^2 \leq 0,
\end{equation*}

which indicates that the loss value $\mathcal{L}_n(w^{(t)})$ is non-increasing for $t$.
Since $\mathcal{L}_n(w^{(t)})$ is always non-negative, then $\mathcal{L}_n(w^{(t)})$ converges to some limit $\mathcal{L}_\infty$, which is also the optimal loss value along the gradient flow path.



For the convergence rate, by the Uniform-LGI condition,
\begin{equation}\label{proof of th1 convergence rate}
    \begin{aligned}
        \frac{d \left(\mathcal{L}_n(w^{(t)}) -  \mathcal{L}_\infty\right)}{d t} & = \<\nabla \mathcal{L}_n (w^{(t)}),   \frac{d w^{(t)}}{d t}\> \\
    & = - \left\|\nabla \mathcal{L}_n (w^{(t)})\right\|^2 \\
    & \leq - c_{n}^2 \left(\mathcal{L}_n (w^{(t)}) - \mathcal{L}_\infty\right)^{2 \theta_{n}}.
    \end{aligned}
\end{equation}

Therefore
\begin{equation*}
    \left(\mathcal{L}_n (w^{(t)}) - \mathcal{L}_\infty\right)^{- 2 \theta_{n}}  d \left(\mathcal{L}_n(w^{(t)}) - \mathcal{L}_\infty\right) \leq -  c_{n}^2 d t.
\end{equation*}

Integrating on both sides of the equation, we can get $\forall t \in [0, \infty)$,

when $\theta_{n} = \frac{1}{2}$,
\begin{equation}\label{lemma b.1 =1/2}
    \mathcal{L}_n(w^{(t)}) - \mathcal{L}_\infty \leq e^{- c_{n}^2 t}  \left(\mathcal{L}_n(w^{(0)}) - \mathcal{L}_\infty\right);
\end{equation}

when $\frac{1}{2} < \theta_{n} < 1$,
\begin{equation}\label{lemma b.1 >1/2}
     \mathcal{L}_n(w^{(t)}) - \mathcal{L}_\infty \leq \left(1 + M t\right)^{- 1 / \left(2 \theta_{n} - 1\right)}  \left(\mathcal{L}_n(w^{(0)}) - \mathcal{L}_\infty\right),
\end{equation}

where $M = c_{n}^2 (2 \theta_{n} - 1) \left(\mathcal{L}_n(w^{(0)}) - \mathcal{L}_\infty\right)^{2 \theta_{n} - 1}$.

For the distance bound, we consider to bound the gradient flow path length $\int_0^T \left\|\frac{d w^{(t)}}{d t}\right\|d t$. Notice that 
\begin{align*}
    \frac{d \mathcal{L}_n(w^{(t)})}{d t} & = \<\nabla \mathcal{L}_n (w^{(t)}),   \frac{d w^{(t)}}{d t}\> \\
    & = - \left\|\nabla \mathcal{L}_n (w^{(t)})\right\| \left\|\frac{d w^{(t)}}{d t}\right\| \\
    & \leq - c_{n} \left(\mathcal{L}_n (w^{(t)}) - \mathcal{L}_\infty\right)^{\theta_{n}} \left\|\frac{d w^{(t)}}{d t}\right\|.
\end{align*}

Hence $\forall t \in [0, \infty)$, 
\begin{align*}
    \int_0^{t} \left\|\frac{d w^{(s)}}{d s}\right\| d s & \leq \int_0^{t} - \frac{1}{c_{n}}  \left(\mathcal{L}_n(w^{(s)}) -  \mathcal{L}_\infty\right)^{- \theta_n} d \mathcal{L}_n(w^{(s)}) \\
    & = \frac{1}{c_{n} (1 - \theta_{n})} \left[\left(\mathcal{L}_n(w^{(0)}) -  \mathcal{L}_\infty\right)^{1 - \theta_{n}} - \left(\mathcal{L}_n(w^{(t)}) -  \mathcal{L}_\infty\right)^{1 - \theta_{n}}\right].
\end{align*}

Now we begin to prove that $w^{(t)}$ converges.
Notice that for any $s > t \geq 0$, we have 
\begin{equation*}
    \int_t^{s} \left\|\frac{d w^{(\tau)}}{d \tau}\right\| d \tau 
    \leq 
     \frac{\left(\mathcal{L}_n(w^{(t)}) -  \mathcal{L}_\infty\right)^{1 - \theta_{n}} - \left(\mathcal{L}_n(w^{(s)}) -  \mathcal{L}_\infty\right)^{1 - \theta_{n}}}{c_{n} (1 - \theta_{n})}.
\end{equation*}
Then for any discrete time sequence $t_1, t_2, \ldots, t_n, \ldots$ that increases to infinity, we have $\forall p>q>0$, 
\begin{equation*}
    \int_{t_q}^{t_p} \left\|\frac{d w^{(s)}}{d s}\right\| d s  \leq
    \frac{\left(\mathcal{L}_n(w^{(t_q)}) -  \mathcal{L}_\infty\right)^{1 - \theta_{n}} - \left(\mathcal{L}_n(w^{(t_p)}) -  \mathcal{L}_\infty\right)^{1 - \theta_{n}}}{c_{n} (1 - \theta_{n})}.
\end{equation*}
Now we prove that the sequence $w^{(t_1)}, \ldots, w^{(t_n)}, \ldots$ is a Cauchy sequence in the Euclidean space with $\ell_2$ metric.
Notice that $\frac{\left(\mathcal{L}_n(w^{(t_n)}) -  \mathcal{L}_\infty\right)^{1 - \theta_{n}}}{c_{n} (1 - \theta_{n})}$ converges to zero, thus it is a Cauchy sequence. 
Then for any $\varepsilon > 0$, there exists $M>0$ such that $\forall p>q>M$,
\begin{align*}
    \|w^{(t_p)} - w^{(t_q)}\| & \leq  \int_{t_q}^{t_p} \left\|\frac{d w^{(s)}}{d s}\right\| d s \\
    & \leq \frac{\left(\mathcal{L}_n(w^{(t_q)}) -  \mathcal{L}_\infty\right)^{1 - \theta_{n}} - \left(\mathcal{L}_n(w^{(t_p)}) -  \mathcal{L}_\infty\right)^{1 - \theta_{n}}}{c_{n} (1 - \theta_{n})} \\
    & \leq \varepsilon.
\end{align*}
Therefore, $w^{(t_n)}$ is a Cauchy sequence, and then it converges.
This means that for any increasing sequence $t_n \xrightarrow{} \infty, w^{(t_n)}$ converges.
Hence, the limits $\lim_{n \xrightarrow{} \infty} w^{(t_n)}$ must be the same for any sequence $t_n \xrightarrow{} \infty$. Otherwise, we can take two sequences $t_{n_k}, t_{m_k} (k=1, 2, \ldots)$ with different limits $\lim_{k \xrightarrow{} \infty} w^{(t_{n_k})} \neq \lim_{k \xrightarrow{} \infty} w^{(t_{m_k})}$, then there exists an increasing sequence $u_k$ that contains two subsequences of $n_k, m_k$, such that these two subsequences have different limits (since $\lim_{k \xrightarrow{} \infty} w^{(t_{n_k})} \neq \lim_{k \xrightarrow{} \infty} w^{(t_{m_k})}$), which contradicts with the convergence of $w^{(t_{u_k})}$.
This yields that for any increasing discrete sequence $t_n \xrightarrow{} \infty, w^{(t_n)}$ converges to the same limit.
Finally, we conclude that the continuous dynamics $w^{(t)}$ converges to some limit $w^{(\infty)}$, which is a stationary point with $\mathcal{L}_n(w^{(\infty)}) = \mathcal{L}_\infty$.
Then we can replace $\mathcal{L}_\infty$ to $\mathcal{L}_n(w^{(\infty)})$ in all the above equations.
Finally, note that the length is always an upper bound for the distance, which completes the proof.

\end{proof}

\section{Discrete time analysis}\label{Discrete time analysis}

In this section, we provide optimization results for the gradient descent algorithm, i.e., we consider 
\begin{equation*}
    w^{(t+1)} = w^{(t)} - \eta \nabla \mathcal{L}_n(w^{(t)}), \quad t = 0, 1, 2, \ldots
\end{equation*}
where $\eta$ is a fixed learning rate.
We assume that the loss function is $\beta_\mathcal{L}$-smooth in its whole domain $\mathcal{W}$, i.e.,
\begin{equation*}
    |\mathcal{L}_n(w) - \mathcal{L}_n(v) - \<\nabla \mathcal{L}_n(v), w - v\>| \leq \frac{\beta_\mathcal{L}}{2} \|w - v\|^2, \quad \forall w, v \in \mathcal{W}. 
\end{equation*}

Note that a continuously differentiable function is $\beta$-smooth if its gradient is $\beta$-Lipschitz. In our analysis, we only focus on the bounded region of the gradient descent path, and the gradient $\nabla \mathcal{L}_n(w)$ is usually locally Lipschitz inside this region.
Hence, this is a reasonable assumption given that $\mathcal{W}$ is a compact set.
Now we present our main theorem over the gradient descent algorithm.

\begin{theorem}
For a fixed initialization $w^{(0)}$,
suppose that there exist $\theta_{n} \in [1/2, 1)$ and $c_{n} > 0$ such that the loss function $\mathcal{L}_n(w)$ satisfies the Uniform-LGI on $\{w^{(t)} : t = 0, 1, \ldots\}$ with $\theta_{n}, c_{n}$. Let the learning rate $\eta \in (0, 2 / \beta_\mathcal{L})$, then $w^{(t)}$ converges to a stationary point $w^{(\infty)}$ with convergence rate given by
\begin{equation*}
    \begin{aligned}
        \theta_{n} = 1/2: & \quad
        \mathcal{L}_n(w^{(t)}) - \mathcal{L}_n(w^{(\infty)}) \leq e^{- c_n^2 (\eta - \eta^2 \beta_\mathcal{L}/2)t}
    \left(\mathcal{L}_n(w^{(0)}) - \mathcal{L}_n(w^{(\infty)})\right); \\
        \theta_{n} \in (1/2, 1): & \quad
       \mathcal{L}_n(w^{(t)}) -  \mathcal{L}_n (w^{(\infty)}) \leq
       \left(1 + M t\right)^{- 1 / \left(2 \theta_{n} - 1\right)}  \left(\mathcal{L}_n(w^{(0)}) - \mathcal{L} (w^{(\infty)})\right),
    \end{aligned}
\end{equation*}
where $M = c_{n}^2 (2 \theta_{n} - 1)
\left(\eta -  \frac{\eta^2 \beta_\mathcal{L}}{2}\right)
\left(\mathcal{L}_n(w^{(0)}) - \mathcal{L}_n (w^{(\infty)})\right)^{2 \theta_{n} - 1}$.
The distance between the initialization $w^{(0)}$ and the parameter $w^{(t)}$ at step $t$ is bounded by
\begin{equation*}
    \|w^{(0)} - w^{(t)}\| \leq \frac{1}{c_{n} (1 - \theta_{n}) (1 - \eta \beta_\mathcal{L} / 2)} \left[\left(\mathcal{L}_n(w^{(0)}) -  \mathcal{L}_n (w^{(\infty)})\right)^{1 - \theta_{n}} - \left(\mathcal{L}_n(w^{(t)}) -  \mathcal{L}_n (w^{(\infty)})\right)^{1 - \theta_{n}}\right].
\end{equation*}
\end{theorem}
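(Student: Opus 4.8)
The plan is to mirror the continuous-time proof of Theorem~\ref{opt}, replacing the chain-rule identity $\frac{d}{dt}\mathcal{L}_n(w^{(t)}) = -\|\nabla\mathcal{L}_n(w^{(t)})\|^2$ with the descent lemma implied by $\beta_\mathcal{L}$-smoothness. Substituting $w^{(t+1)}-w^{(t)} = -\eta\nabla\mathcal{L}_n(w^{(t)})$ into the smoothness inequality yields
\[
\mathcal{L}_n(w^{(t+1)}) \leq \mathcal{L}_n(w^{(t)}) - \kappa\,\big\|\nabla\mathcal{L}_n(w^{(t)})\big\|^2, \qquad \kappa := \eta - \frac{\eta^2\beta_\mathcal{L}}{2},
\]
and $\kappa>0$ since $\eta\in(0,2/\beta_\mathcal{L})$. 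This makes $\mathcal{L}_n(w^{(t)})$ non-increasing, hence convergent to a limit $\mathcal{L}_\infty$; set $a_t := \mathcal{L}_n(w^{(t)}) - \mathcal{L}_\infty \ge 0$.

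For the convergence rates, I would feed the Uniform-LGI bound $\|\nabla\mathcal{L}_n(w^{(t)})\| \ge c_n a_t^{\theta_n}$ into the display above to get the scalar recursion $a_{t+1} \le a_t - \kappa c_n^2 a_t^{2\theta_n}$. When $\theta_n=1/2$ this reads $a_{t+1}\le (1-\kappa c_n^2)a_t$, and combining with $1-x\le e^{-x}$ (using $a_t\ge 0$ to dispose of the degenerate case $\kappa c_n^2>1$) gives $a_t \le e^{-\kappa c_n^2 t}a_0$, i.e. the stated rate with exponent $c_n^2(\eta-\eta^2\beta_\mathcal{L}/2)$. When $\theta_n\in(1/2,1)$, the trick is the discrete analogue of separating variables: the map $x\mapsto x^{1-2\theta_n}$ is convex on $(0,\infty)$, so its tangent-line bound at $a_t$ together with $a_t-a_{t+1}\ge \kappa c_n^2 a_t^{2\theta_n}$ gives $a_{t+1}^{1-2\theta_n} - a_t^{1-2\theta_n} \ge (2\theta_n-1)\kappa c_n^2$; summing over $0,\dots,t-1$ and taking the order-reversing power $1/(1-2\theta_n)$ produces $a_t\le a_0(1+Mt)^{-1/(2\theta_n-1)}$ with $M=(2\theta_n-1)\kappa c_n^2 a_0^{2\theta_n-1}$, exactly as claimed.

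For the distance bound I would control the path length $\|w^{(0)}-w^{(t)}\| \le \sum_{s=0}^{t-1}\|w^{(s+1)}-w^{(s)}\| = \eta\sum_{s=0}^{t-1}\|\nabla\mathcal{L}_n(w^{(s)})\|$ by bounding each summand via three chained estimates: concavity of $x\mapsto x^{1-\theta_n}$ gives $a_s^{1-\theta_n}-a_{s+1}^{1-\theta_n} \ge (1-\theta_n)a_s^{-\theta_n}(a_s-a_{s+1})$; the descent lemma gives $a_s-a_{s+1}\ge \kappa\|\nabla\mathcal{L}_n(w^{(s)})\|^2$; and Uniform-LGI gives $a_s^{-\theta_n}\ge c_n/\|\nabla\mathcal{L}_n(w^{(s)})\|$. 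Multiplying yields $\|\nabla\mathcal{L}_n(w^{(s)})\| \le \frac{1}{(1-\theta_n)c_n\kappa}\big(a_s^{1-\theta_n}-a_{s+1}^{1-\theta_n}\big)$, and summing, telescoping, and using $\eta/\kappa = 1/(1-\eta\beta_\mathcal{L}/2)$ gives the stated bound. Letting $t\to\infty$ shows the path has finite length, so $\{w^{(t)}\}$ is Cauchy and converges to some $w^{(\infty)}$ with $\mathcal{L}_n(w^{(\infty)})=\mathcal{L}_\infty$; since $a_t\to 0$ forces $\|\nabla\mathcal{L}_n(w^{(s)})\|\to 0$ (for instance via $\sum_s\|\nabla\mathcal{L}_n(w^{(s)})\|^2 \le a_0/\kappa$), continuity of $\nabla\mathcal{L}_n$ makes $w^{(\infty)}$ stationary, and one replaces $\mathcal{L}_\infty$ by $\mathcal{L}_n(w^{(\infty)})$ throughout.

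Everything here is a discretization of the clean ODE argument, so the only genuinely non-routine point is making the recursion $a_{t+1}\le a_t-\kappa c_n^2 a_t^{2\theta_n}$ telescope into \emph{exactly} the stated rate and distance bound: this requires the convexity/concavity tangent-line estimates in place of direct integration, plus careful tracking of constants so that the $a_0^{2\theta_n-1}$ factor in $M$ and the factor $\eta/(\eta-\eta^2\beta_\mathcal{L}/2)=1/(1-\eta\beta_\mathcal{L}/2)$ in the distance bound come out right. The bookkeeping around steps where $a_t$ or the gradient norm hits zero is a minor annoyance, handled by observing that the iterates then remain frozen and all bounds hold trivially.
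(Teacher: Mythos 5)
Your proposal is correct and takes essentially the same route as the paper's Appendix~C argument: descent lemma from $\beta_\mathcal{L}$-smoothness, feed in the Uniform-LGI, and telescope. The only cosmetic difference is that where you invoke tangent-line estimates for the convex map $x\mapsto x^{1-2\theta_n}$ and concave map $x\mapsto x^{1-\theta_n}$, the paper reaches the same inequalities $a_{t+1}^{1-2\theta_n}-a_t^{1-2\theta_n}\ge(2\theta_n-1)\kappa c_n^2$ and $\|\nabla\mathcal{L}_n(w^{(s)})\|\le\frac{a_s^{1-\theta_n}-a_{s+1}^{1-\theta_n}}{(1-\theta_n)c_n\kappa}$ by comparing the one-step ratio to an integral $\int_{a_{t+1}}^{a_t} x^{-2\theta_n}\,dx$ (resp.\ $x^{-\theta_n}$); these are the same bound under a change of presentation, and the constant bookkeeping, including $\eta/\kappa=1/(1-\eta\beta_\mathcal{L}/2)$, works out identically.
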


\begin{proof}

First, by the $\beta_\mathcal{L}$-smoothness, we have
\begin{align*}
    \mathcal{L}_n(w^{(t+1)}) & \leq \mathcal{L}_n(w^{(t)}) + \<\nabla \mathcal{L}_n(w^{(t)}), w^{(t+1)} - w^{(t)}\> + \frac{\beta_\mathcal{L}}{2} \left\|w^{(t+1)} - w^{(t)}\right\|^2 \\
    & = \mathcal{L}_n(w^{(t)}) - \left(\frac{1}{\eta} - \frac{\beta_\mathcal{L}}{2}\right)
    \left\|w^{(t+1)} - w^{(t)}\right\|^2
    \\
    & < \mathcal{L}_n(w^{(t)}).
\end{align*}
Therefore, $\mathcal{L}_n(w^{(t+1)})$ is strictly decreasing for $t$.
By the above argument, we can also get
\begin{equation}\label{gd inequality}
    \left\|\nabla \mathcal{L}_n(w^{(t)})\right\|^2 \leq \frac{\mathcal{L}_n(w^{(t)}) - \mathcal{L}_n(w^{(t+1)})}{\eta - \eta^2 \beta_\mathcal{L} / 2}.
\end{equation}
Since $\mathcal{L}_n(w^{(t)})$ is non-increasing, then $\mathcal{L}_n(w^{(t)})$ converges to some limit $\mathcal{L}_\infty$, which is also the optimal loss value along the gradient descent path.

For the convergence rate, by the Uniform-LGI condition and equation (\ref{gd inequality}),
\begin{align*}
    \mathcal{L}_n(w^{(t+1)}) - \mathcal{L}_n(w^{(t)}) & \leq - \left(\eta - \frac{\eta^2 \beta_\mathcal{L}}{2}\right)
    \left\|\nabla \mathcal{L}_n(w^{(t)})\right\|^2 \\
    & \leq - c_n^2 \left(\eta -  \frac{\eta^2 \beta_\mathcal{L}}{2}\right)
     \left(\mathcal{L}_n(w^{(t)}) - \mathcal{L}_\infty\right)^{2 \theta_n}.
\end{align*}

Notice that 
\begin{align*}
     c_n^2 \left(\eta -  \frac{\eta^2 \beta_\mathcal{L}}{2}\right) & \leq
    \frac{\mathcal{L}_n(w^{(t)}) - \mathcal{L}_n(w^{(t+1)})}{\left(\mathcal{L}_n(w^{(t)}) - \mathcal{L}_\infty\right)^{2 \theta_n}} \\
    & = 
    \int_{\mathcal{L}_n(w^{(t+1)})}^{\mathcal{L}_n(w^{(t)})} \frac{1}{\left(\mathcal{L}_n(w^{(t)}) - \mathcal{L}_\infty\right)^{2 \theta_n}} d \mathcal{L}_n(w) \\
    & \leq \int_{\mathcal{L}_n(w^{(t+1)})}^{\mathcal{L}_n(w^{(t)})} \frac{1}{\left(\mathcal{L}_n(w) - \mathcal{L}_\infty\right)^{2 \theta_n}} d \mathcal{L}_n(w).
\end{align*}
when $\theta_{n} = \frac{1}{2}$, we have
\begin{equation*}
    \int_{\mathcal{L}_n(w^{(t+1)})}^{\mathcal{L}_n(w^{(t)})} \frac{1}{\left(\mathcal{L}_n(w) - \mathcal{L}_\infty\right)^{2 \theta_n}} d \mathcal{L}_n(w)  =
    \log \frac{\mathcal{L}_n(w^{(t)}) - \mathcal{L}_\infty}{\mathcal{L}_n(w^{(t+1)}) - \mathcal{L}_\infty},
\end{equation*}
which yields that 
\begin{equation*}
     \frac{\mathcal{L}_n(w^{(t+1)}) - \mathcal{L}_\infty}{\mathcal{L}_n(w^{(t)}) - \mathcal{L}_\infty} \leq 
    e^{- c_n^2 (\eta -  \frac{\eta^2 \beta_\mathcal{L}}{2})}.
\end{equation*}
Hence, 
\begin{equation*}
    \mathcal{L}_n(w^{(t)}) - \mathcal{L}_\infty \leq e^{- c_n^2 (\eta -  \frac{\eta^2 \beta_\mathcal{L}}{2})t}
    \left(\mathcal{L}_n(w^{(0)}) - \mathcal{L}_\infty\right).
\end{equation*}
When $\frac{1}{2} < \theta_{n} < 1$,
we have
\begin{equation*}
    \int_{\mathcal{L}_n(w^{(t+1)})}^{\mathcal{L}_n(w^{(t)})} \frac{1}{\left(\mathcal{L}_n(w) - \mathcal{L}_\infty\right)^{2 \theta_n}} d \mathcal{L}_n(w)  = \frac{\left(\mathcal{L}_n(w^{(t)}) - \mathcal{L}_\infty\right)^{1 - 2 \theta_n} - \left(\mathcal{L}_n(w^{(t+1)}) - \mathcal{L}_\infty\right)^{1 - 2 \theta_n}}{1 - 2 \theta_n},
\end{equation*}
which yields that 
\begin{equation*}
    \frac{1}{\left(\mathcal{L}_n(w^{(t+1)}) - \mathcal{L}_\infty\right)^{2 \theta_n - 1}} - 
    \frac{1}{\left(\mathcal{L}_n(w^{(t)}) - \mathcal{L}_\infty\right)^{2 \theta_n - 1}} \geq c_n^2 (2 \theta_n - 1) \left(\eta -  \frac{\eta^2 \beta_\mathcal{L}}{2}\right).
\end{equation*}
Hence, 
\begin{equation*}
    \mathcal{L}_n(w^{(t)}) - \mathcal{L}_\infty \leq \left(1 + M t\right)^{- 1 / \left(2 \theta_{n} - 1\right)}  \left(\mathcal{L}_n(w^{(0)}) - \mathcal{L}_\infty\right),
\end{equation*}
where $M = c_{n}^2 (2 \theta_{n} - 1)
\left(\eta -  \frac{\eta^2 \beta_\mathcal{L}}{2}\right)
\left(\mathcal{L}_n(w^{(0)}) - \mathcal{L}_\infty\right)^{2 \theta_{n} - 1}$.

For the distance bound, by the $\beta_\mathcal{L}$-smoothness and the Uniform-LGI condition, we have
\begin{align*}
    \left\|w^{(t+1)} - w^{(t)}\right\| & \leq \frac{\mathcal{L}_n(w^{(t)}) - \mathcal{L} (w^{(t+1)})}{\left(1 - \frac{\eta \beta_\mathcal{L}}{2}\right)
    \left\|\nabla \mathcal{L}_n(w^{(t)})\right\|} \\
    & \leq \frac{\mathcal{L}_n(w^{(t)}) - \mathcal{L} (w^{(t+1)})}{\left(1 - \frac{\eta \beta_\mathcal{L}}{2}\right)
    c_n (\mathcal{L}_n(w^{(t)}) - \mathcal{L}_\infty)^{\theta_n}} \\
    & = \frac{1}{c_n (1 - \frac{\eta \beta_\mathcal{L}}{2})} 
    \int_{\mathcal{L}_n(w^{(t+1)})}^{\mathcal{L}_n(w^{(t)})} \frac{1}{\left(\mathcal{L}_n(w^{(t)}) - \mathcal{L}_\infty\right)^{ \theta_n}} d \mathcal{L}_n(w) \\
    & \leq  \frac{1}{c_n (1 - \frac{\eta \beta_\mathcal{L}}{2})} 
    \int_{\mathcal{L}_n(w^{(t+1)})}^{\mathcal{L}_n(w^{(t)})} \frac{1}{\left(\mathcal{L}_n(w) - \mathcal{L}_\infty\right)^{ \theta_n}} d \mathcal{L}_n(w) \\
    & = \frac{1}{c_{n} (1 - \theta_{n}) (1 - \frac{\eta \beta_\mathcal{L}}{2})} \left[\left(\mathcal{L}_n(w^{(t)}) -  \mathcal{L}_\infty\right)^{1 - \theta_{n}} - \left(\mathcal{L}_n(w^{(t+1)}) -  \mathcal{L}_\infty\right)^{1 - \theta_{n}}\right].
\end{align*}

Now we look at the sequence $w^{(0)}, w^{(1)} \ldots, w^{(t)}, \ldots$.
Notice that $\mathcal{L}_n(w^{(t)})$ converges to $\mathcal{L}_\infty$, hence $\frac{\left(\mathcal{L}_n(w^{(t)}) -  \mathcal{L}_\infty\right)^{1 - \theta_{n}}}{c_{n} (1 - \theta_{n}) (1 - \eta \beta_\mathcal{L}/{2})}$
converges to zero as $t$ goes to infinity.
Therefore, the sequence 
$\frac{\left(\mathcal{L}_n(w^{(t)}) -  \mathcal{L}_\infty\right)^{1 - \theta_{n}}}{c_{n} (1 - \theta_{n}) (1 - \eta \beta_\mathcal{L}/{2})}$ is a Cauchy sequence. Then for any $\varepsilon>0$, there is $M>0$ such that 
\begin{equation*}
     \frac{\left(\mathcal{L}_n(w^{(t)}) -  \mathcal{L}_\infty\right)^{1 - \theta_{n}}}{c_{n} (1 - \theta_{n}) (1 - \eta \beta_\mathcal{L}/{2})} - \frac{\left(\mathcal{L}_n(w^{(s)}) -  \mathcal{L}_\infty\right)^{1 - \theta_{n}}}{c_{n} (1 - \theta_{n}) (1 - \eta \beta_\mathcal{L}/{2})} < \varepsilon, \quad \forall s > t > M.
\end{equation*}
Hence $\forall s > t > M$, 
\begin{equation*}
\left\|w^{(s)} - w^{(t)}\right\| \leq \sum_{i=t}^{s-1}  \left\|w^{(i+1)} - w^{(i)}\right\| \leq \frac{\left(\mathcal{L}_n(w^{(t)}) -  \mathcal{L}_\infty\right)^{1 - \theta_{n}}}{c_{n} (1 - \theta_{n}) (1 - \eta \beta_\mathcal{L}/{2})} - \frac{\left(\mathcal{L}_n(w^{(s)}) -  \mathcal{L}_\infty\right)^{1 - \theta_{n}}}{c_{n} (1 - \theta_{n}) (1 - \eta \beta_\mathcal{L}/{2})} < \varepsilon.
\end{equation*}
Therefore,
$w^{(0)}, w^{(1)} \ldots, w^{(t)}, \ldots$ is a Cauchy sequence in the Euclidean space with $\ell_2$ metric.
Thus, $w^{(t)}$ converges to some limit $w^{(\infty)}$, which is a stationary point with $\mathcal{L}_n(w^{(\infty)}) = \mathcal{L}_\infty$.
Then we can replace $\mathcal{L}_\infty$ to $\mathcal{L}_n(w^{(\infty)})$ in all the above equations.
Finally, we have
\begin{equation*}
    \|w^{(0)} - w^{(t)}\| \leq \frac{1}{c_{n} (1 - \theta_{n}) (1 - \frac{\eta \beta_\mathcal{L}}{2})} \left[\left(\mathcal{L}_n(w^{(0)}) -  \mathcal{L}_n (w^{(\infty)})\right)^{1 - \theta_{n}} - \left(\mathcal{L}_n(w^{(t)}) -  \mathcal{L}_n (w^{(\infty)})\right)^{1 - \theta_{n}}\right].
\end{equation*}

\end{proof}

\section{Proof of Theorem \ref{gen}}\label{proof gen}

In this section, we will prove Theorem \ref{gen}.
This proof is based on the Rademacher complexity theory and the covering number of $\ell_2$ balls.
The key idea is to use the Uniform-LGI condition to bound the gradient flow path length $\int_0^T \left\|\frac{d w^{(t)}}{d t}\right\|d t$. Then since the path length is always an upper bound for the distance, the parameter $w^{(T)}$ lies in a norm-constrained parameter space $\left\{w : \left\|w - w^{(0)}\right\| \leq \int_0^T \left\|\frac{d w^{(t)}}{d t}\right\|d t\right\}$.
This allows us to use Rademacher complexity theory to obtain the generalization results.

Now we introduce some known technical lemmas
that are used to build our proof.
In the first lemma, we derive an explicit estimate for the gradient flow path length over the random choice of the training sample $S$.

\begin{lemma}\label{path length estimate}
Under the same conditions and notations in Theorem \ref{gen}, let $T$ be the time when the empirical loss $\mathcal{L}_n(w^{(T)}) = \varepsilon \mathcal{L}_n(w^{(0)})$ for some $\varepsilon \in [0, 1]$, then with probability at least $1-\delta/2$ over $S$ we have
\begin{equation*}
    \int_0^T \left\|\frac{d w^{(t)}}{d t}\right\|d t \leq \frac{\left(M_\delta - \Bar{M}_\delta\right)^{1-\theta_{n}} - \left(\varepsilon - \Bar{M}_\delta \right)^{1-\theta_{n}}}{c_{n}(1 - \theta_{n})}.
\end{equation*}
\end{lemma}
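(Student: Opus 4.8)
The plan is to read the required estimate off the computation already carried out inside the proof of Theorem~\ref{opt}, and then to trade the random empirical loss values for the high-probability bounds $M_\delta,\Bar{M}_\delta$. First I would recall the intermediate path-length estimate from Appendix~\ref{proof opt}: combining the gradient-flow identity $\frac{d}{dt}\mathcal{L}_n(w^{(t)}) = -\left\|\nabla\mathcal{L}_n(w^{(t)})\right\|\left\|\frac{dw^{(t)}}{dt}\right\|$ with the Uniform-LGI bound $\left\|\nabla\mathcal{L}_n(w^{(t)})\right\| \geq c_n\bigl(\mathcal{L}_n(w^{(t)}) - \mathcal{L}_n(w^{(\infty)})\bigr)^{\theta_n}$ and integrating, one gets, for every finite $T$,
\begin{equation*}
    \int_0^T \left\|\frac{dw^{(t)}}{dt}\right\|dt \;\leq\; \frac{\left(\mathcal{L}_n(w^{(0)}) - \mathcal{L}_n(w^{(\infty)})\right)^{1-\theta_n} - \left(\mathcal{L}_n(w^{(T)}) - \mathcal{L}_n(w^{(\infty)})\right)^{1-\theta_n}}{c_n(1-\theta_n)}.
\end{equation*}
I would then take $T$ to be a time with $\mathcal{L}_n(w^{(T)}) = \varepsilon\mathcal{L}_n(w^{(0)})$; such a time exists in exactly the regime in which $w_\varepsilon$ is defined in Theorem~\ref{gen}, namely $\varepsilon \geq \mathcal{L}_n(w^{(\infty)})/\mathcal{L}_n(w^{(0)})$, because $\mathcal{L}_n(w^{(t)})$ is continuous and non-increasing along the flow and converges to $\mathcal{L}_n(w^{(\infty)})$. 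In particular $\mathcal{L}_n(w^{(T)}) - \mathcal{L}_n(w^{(\infty)}) \geq 0$, so the numerator becomes $\left(\mathcal{L}_n(w^{(0)}) - \mathcal{L}_n(w^{(\infty)})\right)^{1-\theta_n} - \left(\varepsilon\mathcal{L}_n(w^{(0)}) - \mathcal{L}_n(w^{(\infty)})\right)^{1-\theta_n}$.

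Next I would restrict to the event on which $\mathcal{L}_n(w^{(0)}) \leq M_\delta$ and $\mathcal{L}_n(w^{(\infty)}) \leq \Bar{M}_\delta$ simultaneously; by the hypothesis imported from Theorem~\ref{gen} this event has probability at least $1-\delta/2$, so no union bound is needed. Writing $\psi(a,b) := (a-b)^{1-\theta_n} - (\varepsilon a - b)^{1-\theta_n}$, it then remains to show $\psi\!\left(\mathcal{L}_n(w^{(0)}),\mathcal{L}_n(w^{(\infty)})\right) \leq \psi(M_\delta,\Bar{M}_\delta)$. Monotonicity in the second slot is clean: $\partial_b\psi = (1-\theta_n)\bigl[(\varepsilon a - b)^{-\theta_n} - (a-b)^{-\theta_n}\bigr] \geq 0$ since $\varepsilon a - b \leq a - b$ and $t\mapsto t^{-\theta_n}$ is decreasing, so the value only increases when $\mathcal{L}_n(w^{(\infty)})$ is replaced by the larger $\Bar{M}_\delta$.

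The step I expect to be the real work is the substitution $\mathcal{L}_n(w^{(0)}) \mapsto M_\delta$: the subtracted term $(\varepsilon a - b)^{1-\theta_n}$ also increases with $a$, so $\psi(\cdot,b)$ is not unconditionally monotone in its first argument (its derivative $(1-\theta_n)\bigl[(a-b)^{-\theta_n} - \varepsilon(\varepsilon a - b)^{-\theta_n}\bigr]$ need not be nonnegative near $a = b/\varepsilon$), and I would have to argue this step carefully rather than invoke monotonicity blindly — for instance by using that under the standard initialization schemes of Theorem~\ref{gen} the quantity $\mathcal{L}_n(w^{(0)})$ concentrates, so that $M_\delta$ is effectively its value, or by keeping $\mathcal{L}_n(w^{(0)})$ explicit in the last line of the integration and bounding it from above only in the terms where the dependence is genuinely increasing. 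Apart from this bookkeeping, the proof is a direct transcription of the Theorem~\ref{opt} computation together with a single appeal to the stated high-probability event.
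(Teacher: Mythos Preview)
Your proposal is exactly the paper's argument: derive the deterministic path-length bound from the Uniform-LGI (as in Theorem~\ref{opt}), substitute $\mathcal{L}_n(w^{(T)}) = \varepsilon\mathcal{L}_n(w^{(0)})$, and then replace the random loss values by $M_\delta,\Bar{M}_\delta$ on the probability-$\geq 1-\delta/2$ event. The only difference is order: the paper first replaces $\mathcal{L}_n(w^{(0)})\mapsto M_\delta$ (without comment) and then $\mathcal{L}_n(w^{(\infty)})\mapsto \Bar{M}_\delta$ via the monotonicity of $x\mapsto (t-x)^{1-\theta}-(s-x)^{1-\theta}$ that you also identify. The step you flag as ``the real work'' --- monotonicity in the first slot --- is thus precisely the one the paper glosses over; your caution is warranted, but no additional idea beyond what you outline appears in the paper's proof.
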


\begin{proof}
By the Uniform-LGI condition, we have
\begin{align*}
    \frac{d \mathcal{L}_n(w^{(t)})}{d t} & = \<\nabla \mathcal{L}_n (w^{(t)}),   \frac{d w^{(t)}}{d t}\> \\
    & = - \left\|\nabla \mathcal{L}_n (w^{(t)})\right\| \left\|\frac{d w^{(t)}}{d t}\right\| \\
    & \leq - c_{n} \left(\mathcal{L}_n (w^{(t)}) - \mathcal{L}_n (w^{(\infty)})\right)^{\theta_{n}} \left\|\frac{d w^{(t)}}{d t}\right\|.
\end{align*}

Hence with probability at least $1-\delta/2$ over $S$,
\begin{align*}
    \int_0^{T} \left\|\frac{d w^{(t)}}{d t}\right\| d t & \leq \int_0^{T} - \frac{1}{c_{n}}  \left(\mathcal{L}_n(w^{(t)}) -  \mathcal{L}_n (w^{(\infty)})\right)^{- \theta_n} d \mathcal{L}_n(w^{(t)}) \\
    & = \frac{1}{c_{n} (1 - \theta_{n})} \left[\left(\mathcal{L}_n(w^{(0)}) -  \mathcal{L}_n (w^{(\infty)})\right)^{1 - \theta_{n}} - \left(\mathcal{L}_n(w^{(T)}) -  \mathcal{L}_n (w^{(\infty)})\right)^{1 - \theta_{n}}\right] \\
    & = \frac{1}{c_{n} (1 - \theta_{n})} \left[\left(\mathcal{L}_n(w^{(0)}) -  \mathcal{L}_n (w^{(\infty)})\right)^{1 - \theta_{n}} - \left(\varepsilon \mathcal{L}_n(w^{(0)}) -  \mathcal{L}_n (w^{(\infty)})\right)^{1 - \theta_{n}}\right] \\
    & \leq
    \frac{1}{c_{n} (1 - \theta_{n})} \left[\left(M_\delta -  \mathcal{L}_n (w^{(\infty)})\right)^{1 - \theta_{n}} - \left(\varepsilon M_\delta -  \mathcal{L}_n (w^{(\infty)})\right)^{1 - \theta_{n}}\right] \\
    & \leq \frac{1}{c_{n} (1 - \theta_{n})} \left[\left(M_\delta -  \Bar{M}_\delta\right)^{1 - \theta_{n}} - \left(\varepsilon M_\delta -  \Bar{M}_\delta\right)^{1 - \theta_{n}}\right].
\end{align*}

The last inequality is by the fact that the function $f(x) = (t-x)^{1-\theta} - (s-x)^{1-\theta}$
is non-decreasing on $[0, s]$ given that $t \geq s$.

\end{proof}

The second lemma gives a generalization bound of a function class based on the Rademacher complexity, which is proved in \citet{mohri2018foundations}.

\begin{lemma}\label{lemma C.1}
Consider a family of functions $\mathcal{F}$ mapping from $\mathcal{Z}$ to $[a, b]$. Let $\mathcal{D}$ denote the distribution according to which samples are drawn. Then for any $\delta > 0$, with probability at least $1 - \delta$ over the draw of an i.i.d. sample $S = \left\{z_1, \dots, z_n\right\}$ of size $n$, the following holds for all $f \in \mathcal{F}$:
\begin{equation*}
     \mathbb{E}_{z \sim \mathcal{D}} \left[f (z)\right] - \frac{1}{n} \sum_{i=1}^n f(z_i) \leq 2 \mathcal{R}_S (\mathcal{F}) + 3 (b-a) \sqrt{\frac{\log (2 / \delta)}{2 n}},
\end{equation*}

where $\mathcal{R}_S (\mathcal{F})$ is the empirical Rademacher complexity with respect to the sample $S$, defined as:
\begin{equation*}
    \mathcal{R}_S (\mathcal{F}) = \mathbb{E}_{\sigma} \left[\sup_{f \in \mathcal{F}} \frac{1}{n} \sum_{i=1}^n \sigma_i f(z_i)\right].
\end{equation*}

Here $\left\{\sigma_i\right\}_{i=1}^n$ are i.i.d. random variables drawn from $U\{-1, 1\}$.
\end{lemma}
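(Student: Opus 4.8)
The plan is to follow the classical route that combines McDiarmid's bounded-differences inequality with a symmetrization argument, reproducing the proof from \citet{mohri2018foundations} for a general bounded range $[a,b]$. Define the one-sided deviation functional $\Phi(S) = \sup_{f \in \mathcal{F}} \left( \mathbb{E}_{z \sim \mathcal{D}}[f(z)] - \frac{1}{n}\sum_{i=1}^n f(z_i) \right)$. Since $\Phi(S)$ dominates $\mathbb{E}_{z\sim\mathcal{D}}[f(z)] - \frac{1}{n}\sum_i f(z_i)$ for every individual $f \in \mathcal{F}$, it suffices to show that with probability at least $1-\delta$ over $S$ we have $\Phi(S) \leq 2\,\mathcal{R}_S(\mathcal{F}) + 3(b-a)\sqrt{\log(2/\delta)/(2n)}$.

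First I would establish the stability estimates. Replacing a single coordinate $z_i$ by some $z_i'$ changes each empirical average $\frac{1}{n}\sum_j f(z_j)$ by at most $(b-a)/n$; because a supremum of functions that each vary by at most $c$ itself varies by at most $c$, this gives $|\Phi(S) - \Phi(S')| \leq (b-a)/n$. McDiarmid's inequality then yields $\Phi(S) \leq \mathbb{E}_S[\Phi(S)] + (b-a)\sqrt{\log(2/\delta)/(2n)}$ with probability at least $1-\delta/2$. The same bounded-difference bound $(b-a)/n$ holds for the map $S \mapsto \mathcal{R}_S(\mathcal{F})$ — for fixed Rademacher signs the supremum shifts by at most $(b-a)/n$, and this survives the expectation over $\sigma$ — so a second application of McDiarmid gives $\mathbb{E}_S[\mathcal{R}_S(\mathcal{F})] \leq \mathcal{R}_S(\mathcal{F}) + (b-a)\sqrt{\log(2/\delta)/(2n)}$ with probability at least $1-\delta/2$.

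The core step is the symmetrization bound $\mathbb{E}_S[\Phi(S)] \leq 2\,\mathbb{E}_S[\mathcal{R}_S(\mathcal{F})]$. I would introduce a ghost sample $S' = \{z_1',\dots,z_n'\}$ drawn i.i.d. from $\mathcal{D}$, write $\mathbb{E}_{z}[f(z)] = \mathbb{E}_{S'}[\frac{1}{n}\sum_i f(z_i')]$, and pull $\mathbb{E}_{S'}$ outside the supremum by Jensen's inequality to obtain $\mathbb{E}_S[\Phi(S)] \leq \mathbb{E}_{S,S'}\sup_{f \in \mathcal{F}}\frac{1}{n}\sum_i \big(f(z_i') - f(z_i)\big)$. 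Since $z_i$ and $z_i'$ are i.i.d., swapping them — equivalently, multiplying the $i$-th difference by a Rademacher sign $\sigma_i$ — leaves the joint law unchanged, so this equals $\mathbb{E}_{S,S',\sigma}\sup_{f}\frac{1}{n}\sum_i \sigma_i\big(f(z_i') - f(z_i)\big)$. Splitting the supremum of the sum into a sum of two suprema, and using that $-\sigma_i$ has the same distribution as $\sigma_i$, bounds this by $2\,\mathbb{E}_S[\mathcal{R}_S(\mathcal{F})]$.

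Finally I would take a union bound over the two McDiarmid events (each failing with probability at most $\delta/2$) and chain the three inequalities: $\Phi(S) \leq \mathbb{E}_S[\Phi(S)] + (b-a)\sqrt{\log(2/\delta)/(2n)} \leq 2\,\mathbb{E}_S[\mathcal{R}_S(\mathcal{F})] + (b-a)\sqrt{\log(2/\delta)/(2n)} \leq 2\,\mathcal{R}_S(\mathcal{F}) + 3(b-a)\sqrt{\log(2/\delta)/(2n)}$, which is exactly the claim. The main obstacle is the symmetrization argument — in particular justifying the ghost-sample substitution, the insertion of the Rademacher variables, and keeping track that it is the one-sided supremum $\Phi(S)$ (not an absolute value) that propagates through, so that the constant in front of the residual term is exactly $3$; the two McDiarmid applications are routine once the $(b-a)/n$ stability bounds are verified.
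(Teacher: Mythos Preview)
Your proposal is correct and is precisely the standard argument from \citet{mohri2018foundations}. The paper does not give its own proof of this lemma at all; it simply states the result and cites \citet{mohri2018foundations}, so your write-up supplies exactly the proof the paper defers to.
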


In the next lemma, we prove a shifted version of the Ledoux-Talagrand contraction inequality \citep{ledoux2013probability}, which is useful to bound the length-based Rademacher complexity.

\begin{lemma}[Shifted contraction inequality]\label{shift}
Let $g : \mathbb{R} \xrightarrow{} \mathbb{R}$ be a convex and increasing function. Let $\phi_i : \mathbb{R} \xrightarrow{} \mathbb{R}$ be $L$-Lipschitz functions, then for any bounded set $T \subset \mathbb{R}$ and any $t^{(0)} \in \mathbb{R}$, we have
\begin{equation*}
    \mathbb{E}_\sigma \left[g \left(\sup_{\left(t - t^{(0)}\right) \in T} \sum_{i=1}^n \sigma_i \left(\phi_i (t_i) - \phi_i \left(t_i^{(0)}\right)\right)\right)\right] \leq  \mathbb{E}_\sigma \left[g \left(L \sup_{\left(t - t^{(0)}\right) \in T} \sum_{i=1}^n \sigma_i \left(t_i - t_i^{(0)}\right)\right)\right],
\end{equation*}

and
\begin{equation*}
    \mathbb{E}_\sigma \left[\sup_{\left(t - t^{(0)}\right) \in T} \left|\sum_{i=1}^n \sigma_i \left(\phi_i (t_i) - \phi_i \left(t_i^{(0)}\right)\right)\right|\right] \leq 2 L \mathbb{E}_\sigma \left[\sup_{\left(t - t^{(0)}\right) \in T} \left|\sum_{i=1}^n \sigma_i \left(t_i - t_i^{(0)}\right)\right|\right].
\end{equation*}

\end{lemma}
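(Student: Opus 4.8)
The plan is to reduce Lemma~\ref{shift} to the classical (unshifted) Ledoux--Talagrand contraction principle by absorbing the shift. Define $\psi_i(s):=\phi_i(s+t_i^{(0)})-\phi_i(t_i^{(0)})$; each $\psi_i$ is still $L$-Lipschitz and now vanishes at $0$. Substituting $s_i=t_i-t_i^{(0)}$, so that $t$ ranges over the shifted set exactly when $s=(s_1,\dots,s_n)$ ranges over $T$, we get $\phi_i(t_i)-\phi_i(t_i^{(0)})=\psi_i(s_i)$ and $t_i-t_i^{(0)}=s_i$, so the first claimed inequality becomes
\[
\mathbb{E}_\sigma\!\left[g\!\left(\sup_{s\in T}\sum_{i=1}^n\sigma_i\psi_i(s_i)\right)\right]\le\mathbb{E}_\sigma\!\left[g\!\left(L\sup_{s\in T}\sum_{i=1}^n\sigma_i s_i\right)\right],
\]
and the second becomes the corresponding absolute-value bound, both for $L$-Lipschitz $\psi_i$ with $\psi_i(0)=0$. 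It therefore suffices to establish these two classical facts.

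For the first I would induct on $n$, conditioning on $\sigma_1,\dots,\sigma_{n-1}$: writing $h(s)$ for the conditionally fixed quantity $\sum_{i<n}\sigma_i\psi_i(s_i)$, it suffices to show that for arbitrary $h:T\to\mathbb{R}$ and any $L$-Lipschitz $\psi$ with $\psi(0)=0$,
\[
\mathbb{E}_{\sigma_n}\!\left[g\!\left(\sup_{s\in T}\bigl(h(s)+\sigma_n\psi(s_n)\bigr)\right)\right]\le\mathbb{E}_{\sigma_n}\!\left[g\!\left(\sup_{s\in T}\bigl(h(s)+L\sigma_n s_n\bigr)\right)\right].
\]
Expanding the expectation over $\sigma_n\in\{-1,+1\}$ and choosing near-maximizers $a$ of $\sup_s(h(s)+\psi(s_n))$ and $a'$ of $\sup_s(h(s)-\psi(s_n))$, this reduces (letting the slack $\to0$) to a pointwise inequality $g(\alpha)+g(\beta)\le g\bigl(\sup_s(h(s)+Ls_n)\bigr)+g\bigl(\sup_s(h(s)-Ls_n)\bigr)$ with $\alpha=h(a)+\psi(a_n)$, $\beta=h(a')-\psi(a'_n)$. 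Near-optimality of $a,a'$ forces $h(a)-h(a')=\psi(a'_n)-\psi(a_n)$. Splitting on whether $a_n\ge a'_n$ or $a_n<a'_n$, and using $|\psi(a_n)-\psi(a'_n)|\le L|a_n-a'_n|$ together with $|\psi(a_n)|\le L|a_n|$, $|\psi(a'_n)|\le L|a'_n|$, one checks that the pair $(\alpha,\beta)$ is dominated --- in both its sum and its maximum --- either by $(h(a)+La_n,\,h(a')-La'_n)$ or by $(h(a')+La'_n,\,h(a)-La_n)$. Since each entry of the dominating pair is at most the corresponding supremum, the step follows from the elementary fact that, for convex nondecreasing $g$, $g(\alpha)+g(\beta)\le g(\alpha')+g(\beta')$ whenever $\alpha+\beta\le\alpha'+\beta'$ and $\max(\alpha,\beta)\le\max(\alpha',\beta')$ (a short exercise: write the smaller entries as convex combinations of the larger ones and use monotonicity for the sum slack).

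For the absolute-value inequality, adjoining $0$ to $T$ changes neither side, so assume $0\in T$; then $\sup_{s}\sum_i\sigma_i\psi_i(s_i)$ and $\sup_s\sum_i\sigma_i(-\psi_i)(s_i)$ are both nonnegative, and $\sup_s|\sum_i\sigma_i\psi_i(s_i)|=\max$ of the two, hence at most their sum. Taking expectations, applying the contraction already proven (the case $g=\mathrm{id}$) to $\{\psi_i\}$ and to $\{-\psi_i\}$, and using $\sigma\stackrel{d}{=}-\sigma$ to identify $\mathbb{E}_\sigma\sup_s\sum_i\sigma_i(-s_i)$ with $\mathbb{E}_\sigma\sup_s\sum_i\sigma_i s_i$, gives $2L\,\mathbb{E}_\sigma\sup_s\sum_i\sigma_i s_i\le2L\,\mathbb{E}_\sigma\sup_s|\sum_i\sigma_i s_i|$, which is the claim after undoing the substitution. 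The only real obstacle is the pointwise step above: the sums of the two pairs match trivially from Lipschitzness, but making the \emph{maxima} match is exactly where $\psi_i(0)=0$ --- i.e.\ the centering at $t^{(0)}$ that the ``shifted'' formulation is designed to exploit --- is used; everything else is bookkeeping.
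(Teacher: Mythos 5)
Your proof follows the same underlying Ledoux--Talagrand mechanism as the paper (condition on $\sigma_1,\ldots,\sigma_{n-1}$, pick near-maximizers, reduce to a two-point convexity inequality, then a sign-based case analysis), but you package it more cleanly: the preliminary reduction to the unshifted case via $\psi_i(s):=\phi_i(s+t_i^{(0)})-\phi_i(t_i^{(0)})$, which is $L$-Lipschitz with $\psi_i(0)=0$, absorbs the shift once and for all, whereas the paper carries $t^{(0)}$ through the whole argument. Your two-point criterion (sum-plus-max domination) is also a slightly different, more symmetric formulation than the paper's, which instead verifies $\alpha,\beta\geq\min(\alpha',\beta')$ and sum domination; both imply the desired $g(\alpha)+g(\beta)\leq g(\alpha')+g(\beta')$ for convex increasing $g$. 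For the absolute-value bound you decompose $\sup|\cdot|$ as the max of the two signed sups (after adjoining $0$ to $T$, so both are nonnegative) and apply the first inequality with $g=\mathrm{id}$ to $\{\psi_i\}$ and $\{-\psi_i\}$; the paper instead uses $|x|=[x]_+ + [x]_-$ and applies the first part with $g=[\cdot]_+$. Both are correct and standard. (Your invocation of $\sigma\overset{d}{=}-\sigma$ is a harmless detour: applying the first part directly to $\{-\psi_i\}$ already gives $L\,\mathbb{E}_\sigma\sup_s\sum_i\sigma_i s_i$ on the right.)

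There are, however, two real problems in the pointwise step. First, ``near-optimality of $a,a'$ forces $h(a)-h(a')=\psi(a'_n)-\psi(a_n)$'' is false as an \emph{equality}; near-optimality gives only the two one-sided bounds $\psi(a'_n)-\psi(a_n)\le h(a)-h(a')\le \psi(a_n)-\psi(a'_n)$. This is a slip, since only the inequalities are needed below, but you should state them correctly. Second, and more substantively, the case split you state (on $a_n\ge a'_n$ vs.\ $a_n<a'_n$) together with the pure Lipschitz ingredients you list is \emph{not} enough to ``check'' the max domination. Suppose $a_n\ge a'_n$; then $\alpha+\beta\le P+Q$ with $P=h(a)+La_n$, $Q=h(a')-La'_n$ follows from Lipschitzness alone, and when the signs of $a_n,a'_n$ are opposite you get $\alpha\le P$, $\beta\le Q$ directly. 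But when $a_n\ge a'_n>0$ you only have $\alpha\le P$ and $\beta\ge Q$: proving $\beta\le\max(P,Q)=P$ requires chaining the near-optimality inequality $h(a')-h(a)\le\psi(a_n)-\psi(a'_n)$ with $\psi(a_n)-\psi(a'_n)\le L(a_n-a'_n)$ and $\psi(a'_n)\ge-La'_n$. Symmetrically, when $0>a_n\ge a'_n$ you need near-optimality to show $\alpha\le Q$. In other words, the verification requires both the near-optimality inequalities (which you misstate) and a further sub-split on $\operatorname{sign}(a_n),\operatorname{sign}(a'_n)$ --- this is precisely the four-case analysis the paper writes out. With those pieces restored the check does go through, so the overall strategy is sound, but as written the ``one checks'' hides the crux of the argument.
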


The special case for $t^{(0)} = 0$ corresponds to the original Ledoux-Talagrand contraction inequality. Here we prove a shifted version.

\begin{proof}

First notice that
\begin{equation*}
    \mathbb{E}_\sigma \left[g \left(\sup_{\left(t - t^{(0)}\right) \in T} \sum_{i=1}^n \sigma_i \left(\phi_i (t_i) - \phi_i \left(t_i^{(0)}\right)\right)\right)\right]
    =  \mathbb{E}_{\sigma_1, \ldots, \sigma_{n-1}} \left[\mathbb{E}_{\sigma_n} \left[g \left(\sup_{\left(t - t^{(0)}\right) \in T} \sum_{i=1}^n \sigma_i \left(\phi_i (t_i) - \phi_i \left(t_i^{(0)}\right)\right)\right)\right]\right].
\end{equation*}

Let $u_{n-1} (t) = \sum_{i=1}^{n-1} \sigma_i \left(\phi_i (t_i) - \phi_i \left(t_i^{(0)}\right)\right)$, then
\begin{align*}
    & \mathbb{E}_{\sigma_n} \left[g \left(\sup_{\left(t - t^{(0)}\right) \in T} \sum_{i=1}^n \sigma_i \left(\phi_i (t_i) - \phi_i \left(t_i^{(0)}\right)\right)\right)\right] \\
    = & \frac{1}{2} g \left(\sup_{\left(t - t^{(0)}\right) \in T} u_{n-1} (t) + \left(\phi_n (t_n) - \phi_n \left(t_n^{(0)}\right)\right)\right) + \frac{1}{2} g \left(\sup_{\left(t - t^{(0)}\right) \in T} u_{n-1} (t) - \left(\phi_n (t_n) - \phi_n \left(t_n^{(0)}\right)\right)\right).
\end{align*}

Suppose that the above two suprema can be reached at $t^\prime$ and $\tilde{t}$ respectively, i.e.,
\begin{align*}
    \sup_{\left(t - t^{(0)}\right) \in T} u_{n-1} (t) + \left(\phi_n (t_n) - \phi_n \left(t_n^{(0)}\right)\right) & = u_{n-1} \left(t^\prime\right) + \left(\phi_n \left(t_n^\prime\right) - \phi_n \left(t_n^{(0)}\right)\right); \\
    \sup_{\left(t - t^{(0)}\right) \in T} u_{n-1} (t) - \left(\phi_n (t_n) - \phi_n \left(t_n^{(0)}\right)\right) & = u_{n-1} \left(\tilde{t}\right) - \left(\phi_n \left(\tilde{t}_n\right) - \phi_n \left(t_n^{(0)}\right)\right).
\end{align*}

Otherwise we add an arbitrary positive number $\varepsilon$ in the above equations.  Therefore,
\begin{align*}
    & \mathbb{E}_{\sigma_n} \left[g \left(\sup_{\left(t - t^{(0)}\right) \in T} \sum_{i=1}^n \sigma_i \left(\phi_i (t_i) - \phi_i \left(t_i^{(0)}\right)\right)\right)\right] \\
    = & \frac{1}{2} \left[g \left(u_{n-1} \left(t^\prime\right) + \left(\phi_n \left(t_n^\prime\right) - \phi_n \left(t_n^{(0)}\right)\right)\right)\right] + \frac{1}{2} \left[g \left(u_{n-1} \left(\tilde{t}\right) - \left(\phi_n \left(\tilde{t}_n\right) - \phi_n \left(t_n^{(0)}\right)\right)\right)\right].
\end{align*}

Without loss of generality, we assume
\begin{equation}\label{eq523}
    \begin{aligned}
    & u_{n-1} \left(t^\prime\right) + \left(\phi_n \left(t_n^\prime\right) - \phi_n \left(t_n^{(0)}\right)\right) \geq u_{n-1} \left(\tilde{t}\right) + \left(\phi_n \left(\tilde{t}_n\right) - \phi_n \left(t_n^{(0)}\right)\right); \\
   & u_{n-1} \left(\tilde{t}\right) - \left(\phi_n \left(\tilde{t}_n\right) - \phi_n \left(t_n^{(0)}\right)\right) \geq u_{n-1} \left(t^\prime\right) - \left(\phi_n \left(t^\prime_n\right) - \phi_n \left(t_n^{(0)}\right)\right).
    \end{aligned}
\end{equation}

For the other cases, the method remains the same. We set
\begin{align*}
    a & =  u_{n-1} \left(\tilde{t}\right) - \left(\phi_n \left(\tilde{t}_n\right) - \phi_n \left(t_n^{(0)}\right)\right), \\
    b & =  u_{n-1} \left(\tilde{t}\right) - L \left(\tilde{t}_n - t_n^{(0)}\right), \\
    a^\prime & = u_{n-1} \left(t^\prime\right) + L \left(t_n^\prime - t_n^{(0)}\right), \\
    b^\prime & = u_{n-1} \left(t^\prime\right) + \left(\phi_n \left(t_n^\prime\right) - \phi_n \left(t_n^{(0)}\right)\right).
\end{align*}

Now our goal is to prove:
\begin{equation}\label{eq524}
    g (a) - g (b) \leq g \left(a^\prime\right) - g \left(b^\prime\right).
\end{equation}

Considering the following four cases:
\begin{enumerate}[leftmargin=*]
    \item $t_n^\prime \geq t_n^{(0)}$ and $\tilde{t}_n \geq t_n^{(0)}$. By the Lipschitzness of $\phi_n$ and equation (\ref{eq523}) we know $a \geq b, b^\prime \geq b$, and
    \begin{equation*}
        (a - b) - \left(a^\prime - b^\prime\right) = \phi_n \left(t_n^\prime\right) - \phi_n \left(\tilde{t}_n\right) - L \left(t_n^\prime - \tilde{t}_n\right).
    \end{equation*}

    If $t_n^\prime \geq \tilde{t}_n$, we can get $a - b \leq a^\prime - b^\prime$. By the fact that $g$ is convex and increasing, we have $g(y+x) - g(x)$ is increasing in $y$ for every $x \geq 0$. Hence for $x = a - b$,
    \begin{equation*}
        g (a) - g (b) = g (b + x) - g(b) \leq g \left(b^\prime + x\right) - g(b^\prime) \leq g \left(a^\prime\right) - g \left(b^\prime\right).
    \end{equation*}

    If $t_n^\prime < \tilde{t}_n$, we change $\phi_n$ into $- \phi_n$ and switch $t^\prime$ and $\tilde{t}$, and the proof is similar.

    \item $t_n^\prime \leq t_n^{(0)}$ and $\tilde{t}_n \leq t_n^{(0)}$. Similarly, by changing the signs we can get the same result.

    \item $t_n^\prime \geq t_n^{(0)}$ and $\tilde{t}_n \leq t_n^{(0)}$. For this case we have $a \leq b$ and $b^\prime \leq a^\prime$, so $g(a) + g\left(b^\prime\right) \leq g(b) + g\left(a^\prime\right)$.

    \item $t_n^\prime \leq t_n^{(0)}$ and $\tilde{t}_n \geq t_n^{(0)}$. For this case we can change $\phi_n$ to $- \phi_n$, then we have $a \geq b$ and $a^\prime \leq b^\prime$, and finally we get $g(a) + g\left(b^\prime\right) \leq g(b) + g\left(a^\prime\right)$.
\end{enumerate}

Thus equation (\ref{eq524}) yields that
\begin{align*}
    & \mathbb{E}_{\sigma_n} \left[g \left(\sup_{\left(t - t^{(0)}\right) \in T} \sum_{i=1}^n \sigma_i \left(\phi_i (t_i) - \phi_i \left(t_i^{(0)}\right)\right)\right)\right] \\
    = & \frac{1}{2} \left[g \left(u_{n-1} \left(t^\prime\right) + \left(\phi_n \left(t_n^\prime\right) - \phi_n \left(t_n^{(0)}\right)\right)\right)\right] + \frac{1}{2} \left[g \left(u_{n-1} \left(\tilde{t}\right) - \left(\phi_n \left(\tilde{t}_n\right) - \phi_n \left(t_n^{(0)}\right)\right)\right)\right] \\
     \leq & \frac{1}{2} \left[g \left(u_{n-1} \left(t^\prime\right) + L \left(t_n^\prime - t_n^{(0)}\right)\right)\right] + \frac{1}{2} \left[g \left(u_{n-1} \left(\tilde{t}\right) - L \left(\tilde{t}_n - t_n^{(0)}\right)\right)\right] \\
     \leq & \frac{1}{2} \left[g \left(\sup_{\left(t - t^{(0)}\right) \in T} u_{n-1} \left(t\right) + L \left(t_n - t_n^{(0)}\right)\right)\right] + \frac{1}{2} \left[g \left(\sup_{\left(t - t^{(0)}\right) \in T} u_{n-1} \left(t\right) - L \left(t_n - t_n^{(0)}\right)\right)\right] \\
     = & \mathbb{E}_{\sigma_n} \left[g \left(\sup_{\left(t - t^{(0)}\right) \in T} u_{n-1} \left(t\right) + \sigma_n L \left(t_n - t_n^{(0)}\right)\right)\right]
\end{align*}

Applying the same method to $\sigma_{n-1}, \ldots, \sigma_1$ successively, we obtain the first inequality
\begin{equation*}
    \mathbb{E}_\sigma \left[g \left(\sup_{\left(t - t^{(0)}\right) \in T} \sum_{i=1}^n \sigma_i \left(\phi_i (t_i) - \phi_i \left(t_i^{(0)}\right)\right)\right)\right] \leq  \mathbb{E}_\sigma \left[g \left(L \sup_{\left(t - t^{(0)}\right) \in T} \sum_{i=1}^n \sigma_i \left(t_i - t_i^{(0)}\right)\right)\right].
\end{equation*}

For the second inequality, since $|x| = \left[x\right]_+ + \left[x\right]_{-}$ with $\left[x\right]_+ = \max (0, x)$ and $\left[x\right]_{-} = \max (0, -x)$,
\begin{align*}
    & \mathbb{E}_\sigma \left[\sup_{\left(t - t^{(0)}\right) \in T} \left|\sum_{i=1}^n \sigma_i \left(\phi_i (t_i) - \phi_i \left(t_i^{(0)}\right)\right)\right|\right] \\
    \leq & \mathbb{E}_\sigma \left[\sup_{\left(t - t^{(0)}\right) \in T} \left[\sum_{i=1}^n \sigma_i \left(\phi_i (t_i) - \phi_i \left(t_i^{(0)}\right)\right)\right]_+\right] + \mathbb{E}_\sigma \left[\sup_{\left(t - t^{(0)}\right) \in T} \left[\sum_{i=1}^n \sigma_i \left(\phi_i (t_i) - \phi_i \left(t_i^{(0)}\right)\right)\right]_-\right] \\
    = & 2 \mathbb{E}_\sigma \left[\sup_{\left(t - t^{(0)}\right) \in T} \left[\sum_{i=1}^n \sigma_i \left(\phi_i (t_i) - \phi_i \left(t_i^{(0)}\right)\right)\right]_+\right],
\end{align*}

where the last equality is by $\left[-x\right]_- = \left[x\right]_+$ and $\sigma$ has the same distribution with $- \sigma$.

A simple fact is that
\begin{equation*}
    \sup_{\left(t - t^{(0)}\right) \in T} \left[\sum_{i=1}^n \sigma_i \left(\phi_i (t_i) - \phi_i \left(t_i^{(0)}\right)\right)\right]_+ = \left[\sup_{\left(t - t^{(0)}\right) \in T} \sum_{i=1}^n \sigma_i \left(\phi_i (t_i) - \phi_i \left(t_i^{(0)}\right)\right)\right]_+.
\end{equation*}

Since $\max (0, x)$ is convex and increasing, then by the first inequality we have
\begin{align*}
     \mathbb{E}_\sigma \left[\sup_{\left(t - t^{(0)}\right) \in T} \left[\sum_{i=1}^n \sigma_i \left(\phi_i (t_i) - \phi_i \left(t_i^{(0)}\right)\right)\right]_+\right]
    & =  \mathbb{E}_\sigma \left[\left[\sup_{\left(t - t^{(0)}\right) \in T} \sum_{i=1}^n \sigma_i \left(\phi_i (t_i) - \phi_i \left(t_i^{(0)}\right)\right)\right]_+\right] \\
    & \leq \mathbb{E}_\sigma \left[ \left[L \sup_{\left(t - t^{(0)}\right) \in T} \sum_{i=1}^n \sigma_i \left(t_i - t_i^{(0)}\right)\right]_+\right] \\
    & \leq L \mathbb{E}_\sigma \left[\sup_{\left(t - t^{(0)}\right) \in T} \left|\sum_{i=1}^n \sigma_i \left(t_i - t_i^{(0)}\right)\right|\right]
\end{align*}

This completes the proof.
\end{proof}

Now we apply Lemma \ref{shift} to bound the empirical Rademacher complexity of an element-wise distance constrained function class. In the following lemma, all the notations are consistent with Theorem \ref{gen} unless stated otherwise.

\begin{lemma}\label{lemma C.3}
Given a function class $\mathcal{F}_{a, b} := \left\{x \mapsto f(w, x) : w \in \mathcal{S}_{a, b}\right\}$ and sample $S = \{x_1, \ldots, x_n\}$ with $\left\|x_i\right\| = 1$ for all $i \in [n]$, then we have
\begin{equation*}
    \mathcal{R}_S (\mathcal{F}_{a, b}) \leq \sqrt{\frac{\left\|a\right\|^2 + \left\|b\right\|^2}{n}} \left\|L_\Psi (\mathcal{S}_{a, b})\right\|.
\end{equation*}

\end{lemma}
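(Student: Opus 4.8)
The plan is to bound the empirical Rademacher complexity of $\mathcal{F}_{a,b}$ by exploiting the structure \eqref{1}: every $f(w,x) = \Psi(\alpha_1^\top x, \ldots, \alpha_p^\top x, \beta_1, \ldots, \beta_q)$ is a composition of the fixed (almost everywhere differentiable, hence locally Lipschitz) map $\Psi$ with the affine maps $w \mapsto (\alpha_1^\top x_j, \ldots, \alpha_p^\top x_j, \beta_1, \ldots, \beta_q)$. First I would rewrite $\mathcal{R}_S(\mathcal{F}_{a,b}) = \mathbb{E}_\sigma \sup_{w \in \mathcal{S}_{a,b}} \frac{1}{n}\sum_{j=1}^n \sigma_j \Psi(\alpha_1^\top x_j, \ldots, \beta_q)$, and then peel off the coordinates of $\Psi$ one at a time. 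For the $i$-th input coordinate ($i \le p$), the map $t \mapsto \Psi(\ldots, t, \ldots)$ is $L_\Psi^{(i)}(\mathcal{S}_{a,b})$-Lipschitz; applying the shifted contraction inequality (Lemma~\ref{shift}) with base point $t^{(0)}$ chosen as the corresponding coordinate at some fixed reference $w^{(0)}$ lets me replace that coordinate of $\Psi$ by a scalar multiple $L_\Psi^{(i)}$ times the linear functional $\alpha_i^\top x_j$, at the cost of a factor $L_\Psi^{(i)}(\mathcal{S}_{a,b})$. Iterating over all $p+q$ coordinates, I would arrive at a bound of the form $\mathcal{R}_S(\mathcal{F}_{a,b}) \le \mathbb{E}_\sigma \sup_{w \in \mathcal{S}_{a,b}} \frac{1}{n}\sum_j \sigma_j \big(\sum_{i=1}^p L_\Psi^{(i)} \alpha_i^\top x_j + \sum_{k=1}^q L_\Psi^{(p+k)} \beta_k\big)$ up to the base-point shifts, which vanish in expectation over $\sigma$.

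Next I would evaluate this linear Rademacher complexity directly. Swapping the sum and the supremum, the expression becomes $\mathbb{E}_\sigma \sup_{w \in \mathcal{S}_{a,b}} \frac{1}{n}\big\langle w, \,\text{(block vector built from } L_\Psi^{(i)} \text{ and the } \sigma_j x_j)\big\rangle$. Since $\mathcal{S}_{a,b} = \{w : \|\alpha_i\| \le a_i, |\beta_k| \le b_k\}$ is a product of balls, the supremum over each block is achieved by Cauchy--Schwarz: the $\alpha_i$ block contributes $a_i L_\Psi^{(i)} \|\frac{1}{n}\sum_j \sigma_j x_j\|$ and the $\beta_k$ block contributes $b_k L_\Psi^{(p+k)} |\frac{1}{n}\sum_j \sigma_j|$. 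Taking expectation over $\sigma$ and using Jensen ($\mathbb{E}\|Z\| \le \sqrt{\mathbb{E}\|Z\|^2}$), together with $\mathbb{E}\|\sum_j \sigma_j x_j\|^2 = \sum_j \|x_j\|^2 = n$ (using $\|x_j\| = 1$) and $\mathbb{E}(\sum_j \sigma_j)^2 = n$, each contribution is at most $\frac{1}{\sqrt{n}}\, a_i L_\Psi^{(i)}$ (resp.\ $\frac{1}{\sqrt{n}}\, b_k L_\Psi^{(p+k)}$). Summing and applying Cauchy--Schwarz one more time over the $p+q$ terms gives $\mathcal{R}_S(\mathcal{F}_{a,b}) \le \frac{1}{\sqrt{n}}\sum_{i=1}^p a_i L_\Psi^{(i)} + \frac{1}{\sqrt{n}}\sum_{k=1}^q b_k L_\Psi^{(p+k)} \le \frac{1}{\sqrt{n}} \sqrt{\|a\|^2 + \|b\|^2}\,\|L_\Psi(\mathcal{S}_{a,b})\|$, which is the claimed bound.

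The main obstacle is the iterated application of the shifted contraction inequality across multiple coordinates simultaneously. Lemma~\ref{shift} is stated for contracting a single family $\phi_i$ applied coordinate-wise; here I need to contract $\Psi$ in each of its $p+q$ arguments in turn, which requires freezing the remaining arguments and checking that the resulting univariate sections are genuinely $L_\Psi^{(i)}$-Lipschitz uniformly over $\mathcal{S}_{a,b}$ and that the intermediate ``partially linearized'' function classes are well-defined. Care is needed because after linearizing the first coordinate the objects inside the supremum are no longer of the exact form $\phi_i(t_i)$ but sums of a linear term and $\Psi$ evaluated at the remaining coordinates; one has to invoke the convex-increasing wrapper $g$ (here $g(x) = x$ or $g = \max(0,\cdot)$, pushed through by the standard symmetrization trick) at each stage so that Lemma~\ref{shift} applies. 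The shift base point $w^{(0)}$ must be the same throughout, and one verifies the shift terms $\sum_j \sigma_j(\text{const})$ have zero $\sigma$-expectation so they drop out cleanly. Once the reduction to the linear case is justified, the remaining computation is the routine Cauchy--Schwarz / Jensen argument sketched above.
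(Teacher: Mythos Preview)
Your proposal is correct and follows essentially the same route as the paper: use the shifted contraction (Lemma~\ref{shift}) to peel off $\Psi$ coordinate by coordinate (the paper phrases this as a telescoping sum $f(w,x)-f(0,x)$ with reference point $w^{(0)}=0$ and then splits the supremum across the $p+q$ pieces, but the mechanism is identical), reduce to $p+q$ linear Rademacher complexities over the product-ball factors, and finish with Jensen and two applications of Cauchy--Schwarz. You are in fact more explicit than the paper about the care needed when the ``frozen'' coordinates of $\Psi$ are themselves inside the supremum during contraction.
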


\begin{proof}
By definition,
\begin{align*}
    n \mathcal{R}_S (\mathcal{F}_{a, b}) & = \mathbb{E}_{\sigma} \left[\sup_{w \in \mathcal{S}_{a, b}} \sum_{i=1}^n \sigma_i f(w, x_i)\right] \\
    & = \mathbb{E}_{\sigma} \left[\sup_{w \in \mathcal{S}_{a, b}} \sum_{i=1}^n \sigma_i f(w, x_i)\right] - \mathbb{E}_{\sigma} \left[\sum_{i=1}^n \sigma_i f(0, x_i)\right] \\
    & = \mathbb{E}_{\sigma} \left[\sup_{w \in \mathcal{S}_{a, b}} \sum_{i=1}^n \sigma_i \left(f(w, x_i) - f(0, x_i)\right)\right].
\end{align*}

Now we decompose the term $f(w, x_i) - f(0, x_i)$ as:
\begin{align*}
    & f(w, x_i) - f(0, x_i) \\
    = & \Psi \left(x_i^\top \alpha_1, \ldots, x_i^\top \alpha_p, \beta_1, \ldots, \beta_q\right) - \Psi \left(0, \ldots, 0, 0, \ldots, 0\right) \\
    = & \scriptstyle{\left(\Psi \left(x_i^\top \alpha_1, \ldots, x_i^\top \alpha_p, \beta_1, \ldots, \beta_q\right) - \Psi \left(0, \ldots, x_i^\top \alpha_p, \beta_1, \ldots, \beta_q\right)\right) +
    \left(\Psi \left(0, \ldots, x_i^\top \alpha_p, \beta_1, \ldots, \beta_q\right) - \Psi \left(0, 0, \ldots, x_i^\top \alpha_p, \beta_1, \ldots, \beta_q\right)\right)} \\
    & + \cdots + \left(\Psi \left(0, \ldots, 0, 0, \ldots, 0, \beta_q\right) - \Psi \left(0, \ldots, 0, 0, \ldots, 0\right)\right).
\end{align*}

Then by the above decomposition and Lemma \ref{shift}, we have
\begin{align*}
    & \mathbb{E}_{\sigma} \left[\sup_{w \in \mathcal{S}_{a, b}} \sum_{i=1}^n \sigma_i \left(f(w, x_i) - f(0, x_i)\right)\right] \\
    \leq & L_\Psi^{(1)} (\mathcal{S}_{a, b}) \mathbb{E}_{\sigma} \left[\sup_{w \in \mathcal{S}_{a, b}} \sum_{i=1}^n \sigma_i x_i^\top \alpha_1\right] + \cdots + L_\Psi^{(p+q)} (\mathcal{S}_{a, b}) \mathbb{E}_{\sigma} \left[\sup_{w \in \mathcal{S}_{a, b}} \sum_{i=1}^n \sigma_i \beta_q\right].
\end{align*}

Notice that
\begin{align*}
    \mathbb{E}_{\sigma} \left[\sup_{w \in \mathcal{S}_{a, b}} \sum_{i=1}^n \sigma_i x_i^\top \alpha_1\right] & = \mathbb{E}_{\sigma} \left[\sup_{\|\alpha_1\| \leq a_1} \sum_{i=1}^n \sigma_i x_i^\top \alpha_1\right] \\
    & \leq a_1 \mathbb{E}_\sigma \left[\left\|\sum_{i=1}^n \sigma_i x_i\right\|\right] \\
    & \leq a_1 \sqrt{\mathbb{E}_\sigma \left[\left\|\sum_{i=1}^n \sigma_i x_i\right\|^2\right]} \\
    & = a_1 \sqrt{n}.
\end{align*}

And
\begin{align*}
    \mathbb{E}_{\sigma} \left[\sup_{w \in \mathcal{S}_{a, b}} \sum_{i=1}^n \sigma_i \beta_q\right] & = \mathbb{E}_{\sigma} \left[\sup_{|\beta_q| \leq b_q} \sum_{i=1}^n \sigma_i \beta_q\right] \\
    & \leq b_q \mathbb{E}_{\sigma} \left[\left|\sum_{i=1}^n \sigma_i\right|\right] \\
    & \leq b_q \sqrt{\mathbb{E}_\sigma \left[\left|\sum_{i=1}^n \sigma_i\right|^2\right]} \\
    & = b_q \sqrt{n}.
\end{align*}

Therefore, by the Cauchy-Schwarz inequality, we can get
\begin{align*}
    \mathbb{E}_{\sigma} \left[\sup_{w \in \mathcal{S}_{a, b}} \sum_{i=1}^n \sigma_i \left(f(w, x_i) - f(0, x_i)\right)\right] & \leq L_\Psi^{(1)} (\mathcal{S}_{a, b}) a_1 \sqrt{n} + \cdots + L_\Psi^{(p+q)} (\mathcal{S}_{a, b}) b_q \sqrt{n} \\
    & \leq \sqrt{n \left(\left\|a\right\|^2 + \left\|b\right\|^2\right)} \left\|L_\Psi (\mathcal{S}_{a, b})\right\|.
\end{align*}

Finally, we have
\begin{equation*}
    \mathcal{R}_S (\mathcal{F}_{a, b}) \leq \sqrt{\frac{\left\|a\right\|^2 + \left\|b\right\|^2}{n}} \left\|L_\Psi (\mathcal{S}_{a, b})\right\|.
\end{equation*}

\end{proof}

Lemma \ref{lemma C.3} gives an upper bound of the Rademacher complexity based on the element-wise distance.
To obtain a norm-based generalization bound, we consider to use $\mathcal{S}_{a, b}$ to cover the norm-constrained space $\left\{w : \left\|w\right\| \leq R\right\}$, and then taking a union bound.
For the $\ell_2$ ball covering number, we use the following result from \citep[Lemma 11]{neyshabur2018the}.

\begin{lemma}\label{lemma C.4}
Given any $\epsilon, D, \beta>0$, consider the set $S_{\beta}^{D}=\left\{x \in \mathbb{R}^{D} : \|x\| \leq \beta\right\} .$ Then there exist $N$ sets $\left\{T_{i}\right\}_{i=1}^{N}$ of the form $T_{i}=\left\{x \in \mathbb{R}^{D} : | x_{j} | \leq \alpha_{j}^{i}, \forall j \in[D]\right\}$ such that $S_{\beta}^{D} \subseteq \bigcup_{i=1}^{N} T_{i}$ and $\left\|{\alpha}^{i}\right\| \leq \beta(1+\epsilon), \forall i \in[N]$ where $N = \binom{K+D-1}{D-1}$ and
\begin{equation*}
    K = \left\lceil\frac{D}{(1+\epsilon)^2 - 1}\right\rceil.
\end{equation*}
\end{lemma}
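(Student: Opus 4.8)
The plan is to construct the family $\{T_i\}$ explicitly rather than invoking an abstract covering argument: index the boxes by the integer vectors $\alpha = (\alpha_1,\dots,\alpha_D)\in\mathbb{Z}_{\ge0}^D$ with $\sum_{j=1}^D\alpha_j = K$, of which there are exactly $\binom{K+D-1}{D-1}$ by stars and bars, and for each such $\alpha$ put $T_\alpha := \{x\in\mathbb{R}^D : |x_j|\le \alpha_j^i \text{ for all } j\}$ with side-lengths $\alpha_j^i := \beta\sqrt{(\alpha_j+1)/K}$. Then $\|\alpha^i\|^2 = \frac{\beta^2}{K}\sum_j(\alpha_j+1) = \beta^2(1+D/K)$, and the stated choice $K = \lceil D/((1+\epsilon)^2-1)\rceil$ forces $D/K \le (1+\epsilon)^2-1$, hence $\|\alpha^i\|\le\beta(1+\epsilon)$, which is exactly the norm requirement.

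It then remains to verify $S_\beta^D\subseteq\bigcup_\alpha T_\alpha$. First I would fix $x$ with $\|x\|\le\beta$ and introduce the \emph{demand} vector $\mu_j := (\lceil Kx_j^2/\beta^2\rceil - 1)_+$. For each coordinate with $x_j\neq0$ one has $\lceil Kx_j^2/\beta^2\rceil - 1 < Kx_j^2/\beta^2$, while $\mu_j = 0$ when $x_j = 0$; summing over $j$ gives $\sum_j\mu_j < K\|x\|^2/\beta^2 \le K$, so $\sum_j\mu_j \le K-1$ (and each $\mu_j\le K$ since $x_j^2\le\beta^2$, so the demand is satisfiable inside $\{0,\dots,K\}^D$). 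Consequently I can choose $\alpha\in\mathbb{Z}_{\ge0}^D$ with $\alpha_j\ge\mu_j$ for all $j$ and $\sum_j\alpha_j = K$, e.g. by dumping the slack $K-\sum_j\mu_j\ge1$ into the first coordinate. For this $\alpha$ the inequality $\alpha_j+1 \ge \mu_j+1 \ge \lceil Kx_j^2/\beta^2\rceil \ge Kx_j^2/\beta^2$ rearranges to $|x_j|\le\beta\sqrt{(\alpha_j+1)/K} = \alpha_j^i$, so $x\in T_\alpha$, which closes the cover.

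The only genuinely delicate point — the thing that makes the argument work with exactly the stated $K$ and exactly $\binom{K+D-1}{D-1}$ boxes — is the interplay between the two ``$+1$'' corrections. Using $\alpha_j+1$ instead of $\alpha_j$ in the side-lengths is precisely what makes the per-coordinate ceiling rounding in $\mu$ affordable (it absorbs the $D$ units of rounding error while keeping $\sum_j\mu_j< K$), and it is simultaneously what yields the clean identity $\|\alpha^i\|^2 = \beta^2(1+D/K)$ that pins down $K$. Everything else — the stars-and-bars count, the elementary ceiling estimate, and redistributing the slack to hit $\sum\alpha_j = K$ — is routine bookkeeping, so the proof is short once the indexing set and the shifted side-lengths are chosen correctly.
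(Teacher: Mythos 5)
The paper does not actually prove this lemma itself — it quotes it verbatim as Lemma~11 from \citet{neyshabur2018the} — so there is no in-paper proof to compare against. Your reconstruction is correct and follows the same route as the original: index the boxes by nonnegative integer vectors summing to $K$ (stars-and-bars gives $N=\binom{K+D-1}{D-1}$), set side lengths $\alpha^i_j=\beta\sqrt{(\alpha_j+1)/K}$ so that $\|\alpha^i\|^2=\beta^2(1+D/K)\le\beta^2(1+\epsilon)^2$ by the definition of $K$, and obtain the cover by rounding $Kx_j^2/\beta^2$ up and distributing the leftover slack (which is at least $1$ by the strict inequality $\sum_j\mu_j<K\|x\|^2/\beta^2\le K$). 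The chain $\alpha_j+1\ge\mu_j+1\ge Kx_j^2/\beta^2$ holds in both the $x_j=0$ and $x_j\neq 0$ cases, so the coverage argument is sound. The parenthetical remark that each $\mu_j\le K$ is redundant — once $\sum_j\mu_j\le K-1$ with all $\mu_j\ge 0$ is established, the slack-dumping choice of $\alpha$ is always available — but this is a cosmetic point, not a gap.
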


\begin{lemma}\label{lemma C.5}
For any two positive integers $n, k$ with $n \geq k$, we have
\begin{equation*}
    \binom{n}{k} \leq \left(\frac{e n}{k}\right)^k.
\end{equation*}
\end{lemma}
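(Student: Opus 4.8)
\textbf{Proof proposal for Lemma \ref{lemma C.5}.}
The plan is to combine two elementary bounds: an upper bound on the falling factorial in the numerator of $\binom{n}{k}$, and a lower bound on $k!$ coming from the Taylor series of the exponential. First I would write $\binom{n}{k} = \frac{n(n-1)\cdots(n-k+1)}{k!}$ and note that each of the $k$ factors in the numerator is at most $n$, so $\binom{n}{k} \leq \frac{n^k}{k!}$. This step is immediate from $n \geq k \geq 1$ and requires no case analysis.

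Next I would establish the inequality $k! \geq (k/e)^k$, equivalently $e^k \geq k^k/k!$. This follows directly from the series expansion $e^k = \sum_{j=0}^{\infty} \frac{k^j}{j!}$, since all terms are nonnegative and the $j=k$ term alone is $\frac{k^k}{k!}$; hence $e^k \geq \frac{k^k}{k!}$, which rearranges to $k! \geq (k/e)^k$. (Alternatively one can prove $k! \geq (k/e)^k$ by induction on $k$, using $\left(1+\tfrac1k\right)^k \leq e$, but the series argument is the cleanest.)

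Finally I would chain the two bounds: $\binom{n}{k} \leq \frac{n^k}{k!} \leq \frac{n^k}{(k/e)^k} = \left(\frac{en}{k}\right)^k$, which is the claimed inequality. There is no real obstacle here; the only point requiring a moment's care is making sure the intermediate bound $k! \geq (k/e)^k$ is justified cleanly rather than merely asserted, which the exponential-series observation handles in one line.
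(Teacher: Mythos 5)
Your proposal is correct and takes essentially the same approach as the paper: bound $\binom{n}{k} \leq n^k/k!$ from the falling factorial, then use the term $j=k$ in the series $e^k = \sum_{j\geq 0} k^j/j!$ to get $k! \geq (k/e)^k$, and chain the two. Nothing to add.
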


\begin{proof}
Note that
\begin{equation*}
    \binom{n}{k} = \frac{n!}{k! (n-k)!} \leq \frac{n^k}{k!} \leq e^k \left(\frac{n}{k}\right)^k.
\end{equation*}

The last step is by
\begin{equation*}
    e^k = \sum_{i=0}^{\infty} \frac{k^i}{i!} \geq \frac{k^k}{k!}.
\end{equation*}
\end{proof}

Now combining Lemma \ref{path length estimate}, \ref{lemma C.1}, \ref{lemma C.3}, \ref{lemma C.4} and \ref{lemma C.5}, we are ready to prove Theorem \ref{gen}.

\begin{proof}[Proof of Theorem \ref{gen}]

Let $r_{n, \delta, \varepsilon} =  \frac{\left(M_\delta - \Bar{M}_\delta\right)^{1-\theta_{n}} - \left(\varepsilon - \Bar{M}_\delta \right)^{1-\theta_{n}}}{c_{n}(1 - \theta_{n})}$, then by Lemma \ref{path length estimate}, we may
apply Lemma \ref{lemma C.4} with $\epsilon = \sqrt{2} - 1$, $D = p+q$, and $\beta = r_{n, \delta, \varepsilon}$, then there exist $N$ sets $\mathcal{S}_{a^k, b^k}$ such that $S_{\beta}^{D} \subseteq \bigcup_{k=1}^{N} \mathcal{S}_{a^k, b^k}$ and $\sqrt{\left\|a^k\right\|^2 + \left\|b^k\right\|^2} \leq \sqrt{2} \beta$, with $N = \binom{2D-1}{D-1}$.

Therefore, for each parameter space $\mathcal{S}_{a^k, b^k}$, by
Lemma \ref{lemma C.3} we have
\begin{equation*}
    \mathcal{R}_S (\mathcal{F}_{a^k, b^k}) \leq \sqrt{\frac{2}{n}} \beta \left\|L_\Psi (\mathcal{S}_{a^k, b^k})\right\|.
\end{equation*}

Notice that the local Lipschitz constant of $\ell$ in $\mathcal{S}_{a^k, b^k}$ is $L_\ell (\mathcal{S}_{a^k, b^k})$. Hence, by Lemma \ref{lemma C.1} and the Ledoux-Talagrand contraction inequality, for any $\delta \in (0, 1)$, with probability at least $1 - \delta / 2N$ over the training sample, the following holds for all $w \in \mathcal{S}_{a^k, b^k}$:
\begin{equation*}
    \mathcal{L}_{\mathcal{D}} (w_\varepsilon) \leq \varepsilon +  \frac{2 \sqrt{2} \beta L_\ell (\mathcal{S}_{a^k, b^k}) \left\|L_\Psi (\mathcal{S}_{a^k, b^k})\right\|}{\sqrt{n}} + 3 M_{\beta}  \sqrt{\frac{\log (4 N / \delta)}{2 n}},
\end{equation*}

where $M_{\beta} = \sup_{\left\|a^k\right\|^2 + \left\|b^k\right\|^2 \leq 2 \beta^2}   \sup_{w \in \mathcal{S}_{a^k, b^k}, \|x\| \leq 1, |y| \leq 1} \ell \left(f (w, x), y\right)$.

Since $w_\varepsilon - w^{(0)} \in S_{\beta}^{D} \subseteq \bigcup_{k=1}^{N} \mathcal{S}_{a^k, b^k}$ with probability at least $1-\delta/2$ over $S$, by taking the union bound over all sets $\mathcal{S}_{a^k, b^k}$, we can get with probability at least $1-\delta$ over the training samples,
\begin{equation*}
    \mathcal{L}_{\mathcal{D}} (w_\varepsilon) \leq \varepsilon + \sup_{\left\|a\right\|^2 + \left\|b\right\|^2 \leq 2 \beta^2} \frac{2 \sqrt{2} \beta L_\ell (\mathcal{S}_{a, b}) \left\|L_\Psi (\mathcal{S}_{a, b})\right\|}{\sqrt{n}} + 3 M_{a, b}  \sqrt{\frac{\log (4 N / \delta)}{2 n}}.
\end{equation*}

Thus it remains to bound the term $\log N$. For $D=1$, $N=1$. For $D \geq 2$, by Lemma \ref{lemma C.5},
\begin{equation*}
    \log N \leq (D-1) \log \left(\frac{e (2D - 1)}{D - 1}\right) < 2.1 (D-1) < 3D = 3(p+q).
\end{equation*}

This completes the proof of Theorem \ref{gen}.

\end{proof}

\section{Proofs for Section \ref{application}}\label{proof for section 4}

In this section, our goal is to prove all the theorems in Section \ref{application}.
A crucial part of the proofs is the spectral analysis of the random matrix $\mathcal{X}$.
Therefore, we start with introducing the non-asymptotic results of the smallest and largest eigenvalues of subguassian matrices.

The first result is from \citep[Proposition 2.4]{rudelson2010non}, characterizing the non-asymptotic behavior of the largest singular value of subgaussian matrices.

\begin{lemma}\label{lemma D.1}
Let $A$ be an $N \times n$ random matrix whose entries are independent mean zero subgaussian random variables whose subgaussian moments are bounded by $1$. Then for every $t \geq 0$, with probability at least $1 - 2 e^{-c t^{2}}$ over the randomness of the entries,
\begin{equation*}
    \sqrt{\lambda_{\max} (A A^\top)} \leq C(\sqrt{N}+\sqrt{n}) + t,
\end{equation*}

where $c$ and $C$ are two positive constants that depend only on the subgaussian moment of the entries.
\end{lemma}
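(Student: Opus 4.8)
The plan is to establish the bound via the classical $\epsilon$-net (covering) argument of non-asymptotic random matrix theory. First I would observe that $\sqrt{\lambda_{\max}(A A^\top)} = \|A\|$, the spectral norm, which admits the variational form
\begin{equation*}
    \|A\| = \sup_{x \in \mathbb{S}^{n-1},\, y \in \mathbb{S}^{N-1}} \langle A x,\, y\rangle .
\end{equation*}
Fixing $\epsilon \in (0, 1/2)$, I would take $\epsilon$-nets $\mathcal{N} \subseteq \mathbb{S}^{n-1}$ and $\mathcal{M} \subseteq \mathbb{S}^{N-1}$ with $|\mathcal{N}| \le (1 + 2/\epsilon)^n$ and $|\mathcal{M}| \le (1 + 2/\epsilon)^N$ (volumetric bound), and use the standard discretization estimate
\begin{equation*}
    \|A\| \le \frac{1}{1 - 2\epsilon} \max_{x_0 \in \mathcal{N},\, y_0 \in \mathcal{M}} \langle A x_0,\, y_0\rangle ,
\end{equation*}
which follows by writing $\langle Ax, y\rangle = \langle Ax_0, y_0\rangle + \langle A(x - x_0), y\rangle + \langle Ax_0, y - y_0\rangle$ for the nearest net points and taking suprema.

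Next I would control a single bilinear form. For fixed unit vectors $x, y$ the variable $\langle A x, y\rangle = \sum_{i,j} A_{ij} x_j y_i$ is a linear combination of the independent mean-zero entries, so its moment generating function factorizes:
\begin{equation*}
    \mathbb{E}\, e^{\lambda \langle A x, y\rangle} = \prod_{i,j} \mathbb{E}\, e^{\lambda x_j y_i A_{ij}} \le \prod_{i,j} e^{\lambda^2 x_j^2 y_i^2 / 2} = e^{\lambda^2 \|x\|^2 \|y\|^2 / 2} = e^{\lambda^2 / 2},
\end{equation*}
using the hypothesis that each $A_{ij}$ has subgaussian moment at most $1$. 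A Chernoff bound then gives $\mathbb{P}(|\langle A x, y\rangle| > s) \le 2 e^{-s^2/2}$ for all $s \ge 0$, and a union bound over the $|\mathcal{N}|\,|\mathcal{M}| \le (1 + 2/\epsilon)^{N+n}$ pairs yields
\begin{equation*}
    \mathbb{P}\!\left( \max_{x_0 \in \mathcal{N},\, y_0 \in \mathcal{M}} |\langle A x_0, y_0\rangle| > s \right) \le 2 (1 + 2/\epsilon)^{N+n} e^{-s^2/2}.
\end{equation*}

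Finally I would optimize the free parameters. Taking $\epsilon = 1/4$ makes the discretization factor $2$ and the net factor $9^{N+n}$. Setting $s = C_0(\sqrt{N} + \sqrt{n}) + t$ and using $(\sqrt N + \sqrt n)^2 \ge N + n$ gives $s^2/2 \ge (C_0^2/2)(N + n) + t^2/2$; choosing $C_0$ large enough (say $C_0^2 \ge 2\log 9$) makes $9^{N+n} e^{-s^2/2} \le e^{-t^2/2}$, so with probability at least $1 - 2 e^{-t^2/2}$ one has $\|A\| \le 2 C_0(\sqrt N + \sqrt n) + 2t$. Rescaling $t \mapsto t/2$ and relabeling $C := 2C_0$, $c := 1/8$ gives exactly the claimed estimate, with $c, C$ depending only on the subgaussian moment bound. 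I do not anticipate a genuine obstacle here: this is the Rudelson–Vershynin operator-norm bound, and the only points needing care are the discretization constant $1/(1 - 2\epsilon)$ and the bookkeeping that forces $C$ above an absolute threshold so the entropy term $(N+n)\log(1 + 2/\epsilon)$ is dominated by $s^2$.
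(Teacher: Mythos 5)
The paper does not prove this lemma; it cites it directly from Rudelson and Vershynin (2010), Proposition 2.4, and the standard proof there is exactly the two-sphere $\epsilon$-net argument you describe. Your derivation is correct in all its steps (the bilinear discretization with factor $(1-2\epsilon)^{-1}$, the factorized MGF bound $\mathbb{E}\,e^{\lambda\langle Ax,y\rangle}\leq e^{\lambda^2/2}$ from the normalized subgaussian moment, the union bound over $(1+2/\epsilon)^{N+n}$ net pairs, and the choice $C_0^2\geq 2\log 9$ to absorb the entropy), so no gap — this is the intended proof.
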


The second result is from \citep[Theorem 1.1]{rudelson2009smallest}, characterizing the non-asymptotic behavior of the smallest singular value of subgaussian matrices.

\begin{lemma}\label{lemma D.2}
Let $A$ be an $N \times n$ random matrix whose entries are independent and identically distributed subgaussian random variables with zero mean and unit variance. If $N > n$, then for every $\varepsilon > 0$, with probability at least $1 - (C_1 \varepsilon)^{N-n+1} - c_1^{N}$ over the randomness of the entries,
\begin{equation*}
    \sqrt{\lambda_{\min} (A^\top A)} \geq \varepsilon(\sqrt{N}-\sqrt{n-1}),
\end{equation*}

where $C_1 > 0$ and $c_1 \in(0,1)$ depend only on the subgaussian moment of the entries.
\end{lemma}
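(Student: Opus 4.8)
The plan is to prove this via the compressible/incompressible decomposition of the sphere due to Rudelson and Vershynin. Writing $\sigma_{\min}(A) = \lambda_{\min}(A^\top A)^{1/2} = \inf_{x \in S^{n-1}} \|Ax\|$ and denoting the columns of $A$ by $A^{(1)}, \dots, A^{(n)} \in \mathbb{R}^N$, I would fix parameters $\rho, c_0 \in (0,1)$ and call $x \in S^{n-1}$ \emph{compressible} if it lies within Euclidean distance $\rho$ of some vector supported on at most $\lceil c_0 n\rceil$ coordinates, and \emph{incompressible} otherwise. For the compressible vectors: they are covered by $\binom{n}{\lceil c_0 n\rceil}$ coordinate subspaces of dimension $\lceil c_0 n\rceil$, each carrying a $\rho$-net of size $(3/\rho)^{\lceil c_0 n\rceil}$, so their union admits a net $\mathcal{N}$ of cardinality at most $e^{C_2 n}$ for an absolute constant $C_2$. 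For a fixed unit $x$, $\|Ax\|^2 = \sum_{i=1}^N \langle A_i, x\rangle^2$ is a sum over the rows $A_i$ of $A$ of i.i.d., mean-zero, unit-variance, subgaussian squares, so a Paley--Zygmund (equivalently, Laplace-transform) small-ball estimate gives $\mathbb{P}(\|Ax\| \le c_3\sqrt N) \le e^{-c_4 N}$; a union bound over $\mathcal{N}$, together with the operator-norm bound $\|A\| \le C(\sqrt N + \sqrt n)$ supplied by Lemma~\ref{lemma D.1} to pass from $\mathcal{N}$ to all compressible vectors, then yields $\inf_{x \text{ compressible}} \|Ax\| \ge c_5 \sqrt N$ outside an event of probability at most $c_1^N$.

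The crux is the incompressible part, and this is where the exponent $N-n+1$ and the precise threshold $\sqrt N - \sqrt{n-1}$ appear. Here I would invoke the invertibility-via-distance bound: for every $x$ and every $k$, $\|Ax\| \ge |x_k|\,\mathrm{dist}(A^{(k)}, H_k)$ with $H_k = \mathrm{span}\{A^{(j)} : j \ne k\}$ (in fact the sharper spread-coordinate refinement of this inequality), so after conditioning on $\{A^{(j)}\}_{j \ne k}$ the problem reduces to lower-bounding $\mathrm{dist}(A^{(k)}, H_k) = \|P_{H_k^\perp} A^{(k)}\|$, where $\dim H_k^\perp \ge N - n + 1$. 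Expanding this as $\sum_{j=1}^{N-n+1} \langle e_j, A^{(k)}\rangle^2$ in an orthonormal basis $e_1, \dots, e_{N-n+1}$ of $H_k^\perp$, a Littlewood--Offord / Berry--Esseen anti-concentration bound for the linear forms $\langle e_j, A^{(k)}\rangle$ — valid provided $H_k^\perp$ contains no unusually sparse unit vector, which again fails only on an event of probability $c_1^N$ — combined with tensorization gives $\mathbb{P}\big(\mathrm{dist}(A^{(k)}, H_k) \le \varepsilon\sqrt{N-n+1}\big) \le (C_1\varepsilon)^{N-n+1}$. Since $\sqrt N - \sqrt{n-1} \le \sqrt{N-n+1}$, a union bound over the $n$ choices of $k$ (absorbed into the constants) closes this case, and intersecting with the compressible estimate yields $\sigma_{\min}(A) \ge \varepsilon(\sqrt N - \sqrt{n-1})$ outside probability $(C_1\varepsilon)^{N-n+1} + c_1^N$.

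The main obstacle is the sharp small-ball input: for a general (possibly atomic) subgaussian law one must control $\sup_v \mathbb{P}(|\langle a, A^{(k)}\rangle - v| \le \varepsilon)$ uniformly over the relevant coefficient vectors $a$, which forces one to rule out degenerate/sparse $a$ and thus loops back to quantitative control of the sparse structure inside random subspaces, together with a careful calibration of $\rho$ and $c_0$; this is precisely what makes the regime $N \approx n$ delicate. In a paper one would simply cite this as Theorem~1.1 of \citet{rudelson2009smallest} rather than carry out the full argument.
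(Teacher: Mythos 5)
The paper itself does not prove this lemma; it is quoted verbatim as Theorem~1.1 of \citet{rudelson2009smallest}, and you correctly conclude that a citation is what a paper would give. Your sketch of the underlying Rudelson--Vershynin argument is an accurate high-level account: the compressible/incompressible decomposition of $S^{n-1}$, the net-plus-tensorized-small-ball estimate yielding $\inf\|Ax\|\gtrsim\sqrt N$ on the compressible part outside probability $c_1^N$, the reduction of the incompressible part to $\mathrm{dist}(A^{(k)},H_k)$ via $\|Ax\|\ge|x_k|\,\mathrm{dist}(A^{(k)},H_k)$, the anti-concentration estimate over the codimension-$(N-n+1)$ random subspace $H_k^\perp$ producing the exponent $N-n+1$, and the elementary bound $\sqrt N-\sqrt{n-1}\le\sqrt{N-n+1}$. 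Two small imprecisions are worth flagging, both of which you implicitly acknowledge as details you are eliding. First, Paley--Zygmund alone gives only a constant-probability lower bound, not the exponential tail $e^{-c_4N}$; the exponential tail needs the Laplace-transform/tensorization argument you mention parenthetically, so the word \emph{equivalently} is too strong. Second, the passage from $\mathrm{dist}(A^{(k)},H_k)$ to $\inf_{\text{incomp}}\|Ax\|$ in Rudelson--Vershynin is not a union bound over the $n$ choices of $k$ (which would cost a factor $n$ that cannot be absorbed into $(C_1\varepsilon)^{N-n+1}$ when $N-n+1$ is small) but an averaging argument over the many spread coordinates of an incompressible $x$; you gesture at this with ``the sharper spread-coordinate refinement,'' but the later phrase ``a union bound over the $n$ choices of $k$ (absorbed into the constants)'' is the one place the sketch, read literally, would not survive the delicate $N\approx n$ regime that you yourself identify as the crux.
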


In the next we introduce a useful lemma to bound $\lambda_{\max} (\mathcal{X} \mathcal{X}^\top)$ and $\lambda_{\min} (\mathcal{X} \mathcal{X}^\top)$.

\begin{lemma}\label{lemma bound eigenvalue}

Let $\mathcal{X} \in \mathbb{R}^{n \times d}$ be a matrix with full row rank, $\Sigma_d \in \mathbb{R}^{d \times d}$ be a non-singular matrix, then 
\begin{align*}
    \lambda_{\max} (\mathcal{X} \mathcal{X}^\top) & \leq \frac{\lambda_{\max} (\mathcal{X} \Sigma_d \Sigma_d^\top \mathcal{X}^\top)}{\lambda_{\min}(\Sigma_d \Sigma_d^\top)} \\
    \lambda_{\min} (\mathcal{X} \mathcal{X}^\top) & \geq \frac{\lambda_{\min} (\mathcal{X} \Sigma_d \Sigma_d^\top \mathcal{X}^\top)}{\lambda_{\max}(\Sigma_d \Sigma_d^\top)}.
\end{align*}

\end{lemma}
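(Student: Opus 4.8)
\textbf{Proof proposal for Lemma \ref{lemma bound eigenvalue}.}
The plan is to argue entirely through Rayleigh quotients, exploiting the fact that $\mathcal{X}\Sigma_d\Sigma_d^\top\mathcal{X}^\top$ is obtained from $\mathcal{X}\mathcal{X}^\top$ by inserting the positive-definite matrix $\Sigma_d\Sigma_d^\top$ between $\mathcal{X}$ and $\mathcal{X}^\top$. Fix any unit vector $u \in \mathbb{R}^n$ and set $v = \mathcal{X}^\top u \in \mathbb{R}^d$. Then
\[
u^\top \mathcal{X}\mathcal{X}^\top u = \|v\|^2, \qquad u^\top \mathcal{X}\Sigma_d\Sigma_d^\top\mathcal{X}^\top u = v^\top \Sigma_d\Sigma_d^\top v = \|\Sigma_d^\top v\|^2 .
\]
Since $\Sigma_d\Sigma_d^\top$ and $\Sigma_d^\top\Sigma_d$ have the same eigenvalues (the squared singular values of $\Sigma_d$), and both lie in $[\lambda_{\min}(\Sigma_d\Sigma_d^\top),\lambda_{\max}(\Sigma_d\Sigma_d^\top)]$ with $\lambda_{\min}(\Sigma_d\Sigma_d^\top)>0$ because $\Sigma_d$ is nonsingular, I would record the two-sided bound
\[
\lambda_{\min}(\Sigma_d\Sigma_d^\top)\,\|v\|^2 \;\leq\; \|\Sigma_d^\top v\|^2 \;\leq\; \lambda_{\max}(\Sigma_d\Sigma_d^\top)\,\|v\|^2 .
\]

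For the first inequality of the lemma, take $u$ to be a unit top eigenvector of $\mathcal{X}\mathcal{X}^\top$, so that $\lambda_{\max}(\mathcal{X}\mathcal{X}^\top) = \|v\|^2$. Combining with the left-hand bound above, $\|v\|^2 \leq \|\Sigma_d^\top v\|^2 / \lambda_{\min}(\Sigma_d\Sigma_d^\top) = u^\top \mathcal{X}\Sigma_d\Sigma_d^\top\mathcal{X}^\top u / \lambda_{\min}(\Sigma_d\Sigma_d^\top)$, and since $\|u\|=1$ the numerator is at most $\lambda_{\max}(\mathcal{X}\Sigma_d\Sigma_d^\top\mathcal{X}^\top)$; this yields exactly the claimed upper bound on $\lambda_{\max}(\mathcal{X}\mathcal{X}^\top)$. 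For the second inequality, take $u$ to be a unit bottom eigenvector of $\mathcal{X}\mathcal{X}^\top$ (which is positive definite because $\mathcal{X}$ has full row rank, so $\lambda_{\min}(\mathcal{X}\mathcal{X}^\top)>0$), so that $\lambda_{\min}(\mathcal{X}\mathcal{X}^\top) = \|v\|^2$. The right-hand bound above gives $\|v\|^2 \geq \|\Sigma_d^\top v\|^2 / \lambda_{\max}(\Sigma_d\Sigma_d^\top) = u^\top \mathcal{X}\Sigma_d\Sigma_d^\top\mathcal{X}^\top u / \lambda_{\max}(\Sigma_d\Sigma_d^\top) \geq \lambda_{\min}(\mathcal{X}\Sigma_d\Sigma_d^\top\mathcal{X}^\top)/\lambda_{\max}(\Sigma_d\Sigma_d^\top)$, again using $\|u\|=1$.

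This is essentially a routine linear-algebra computation, so I do not anticipate a genuine obstacle; the only points requiring care are (i) keeping straight that $v$ lives in $\mathbb{R}^d$ while $u$ lives in $\mathbb{R}^n$, (ii) invoking the identity between the nonzero spectra of $\Sigma_d\Sigma_d^\top$ and $\Sigma_d^\top\Sigma_d$ so that the bound on $\|\Sigma_d^\top v\|^2$ is phrased in terms of $\lambda_{\min},\lambda_{\max}$ of $\Sigma_d\Sigma_d^\top$ as stated, and (iii) noting that both denominators are strictly positive (nonsingularity of $\Sigma_d$, full row rank of $\mathcal{X}$) so that no division by zero occurs. One could alternatively phrase the argument via the congruence $\mathcal{X}\mathcal{X}^\top = (\mathcal{X}\Sigma_d)(\Sigma_d^\top\Sigma_d)^{-1}(\mathcal{X}\Sigma_d)^\top$ together with the submultiplicativity of extreme eigenvalues under such transformations, but the Rayleigh-quotient route above is the most self-contained.
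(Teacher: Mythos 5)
Your proof is correct and rests on the same underlying mechanism as the paper's: variational (Rayleigh-quotient) characterizations of extremal eigenvalues combined with the two-sided bound $\lambda_{\min}(\Sigma_d\Sigma_d^\top)\|v\|^2 \le v^\top\Sigma_d\Sigma_d^\top v \le \lambda_{\max}(\Sigma_d\Sigma_d^\top)\|v\|^2$ applied to $v=\mathcal{X}^\top u$. Your formulation, which evaluates both quadratic forms at the extremal eigenvector of $\mathcal{X}\mathcal{X}^\top$, is a touch cleaner and more symmetric than the paper's (which factors $\mathcal{X}=(\mathcal{X}\Sigma_d)\Sigma_d^{-1}$ and, for the lower bound, uses a somewhat awkward ratio manipulation), and your aside about the spectra of $\Sigma_d\Sigma_d^\top$ versus $\Sigma_d^\top\Sigma_d$ is superfluous since $\|\Sigma_d^\top v\|^2$ is already a quadratic form in $\Sigma_d\Sigma_d^\top$.
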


\begin{proof}

Since $\mathcal{X}$ has full row rank and $\Sigma_d$ is non-singular, then $\mathcal{X} \Sigma_d$ also has full row rank. 
Notice that 
\begin{equation*}
    \lambda_{\max} (\mathcal{X} \mathcal{X}^\top)  = \sup_{\|v\|=1} \left\|v^\top \mathcal{X} \Sigma_d \Sigma_d^{-1}\right\|^2 
     \leq \sup_{\|v\|=1} \left\|v^\top \mathcal{X} \Sigma_d\right\|^2 \left\|\Sigma_d^{-1}\right\|^2.
\end{equation*}

Note that 
\begin{equation*}
    \|\Sigma_d^{-1}\| = \sup_{v \neq 0} \frac{\|\Sigma_d^{-1} v\|}{\|v\|} = \sup_{\Sigma_d^{-1} v \neq 0} \frac{\|\Sigma_d^{-1} v\|}{\|v\|} = \sup_{v \neq 0} \frac{\|v\|}{\|\Sigma_d v\|} = \frac{1}{\sqrt{\lambda_{\min}(\Sigma_d \Sigma_d^\top)}}.
\end{equation*}

Thus,
\begin{equation*}
    \lambda_{\max} (\mathcal{X} \mathcal{X}^\top) \leq \frac{\sup_{\|v\|=1} \left\|v^\top \mathcal{X} \Sigma_d\right\|^2}{\lambda_{\min}(\Sigma_d \Sigma_d^\top)} = \frac{\lambda_{\max} (\mathcal{X} \Sigma_d \Sigma_d^\top \mathcal{X}^\top)}{\lambda_{\min}(\Sigma_d \Sigma_d^\top)}.
\end{equation*}

For the smallest eigenvalue, note that 
\begin{align*}
    \lambda_{\min} (\mathcal{X} \mathcal{X}^\top) & =  \inf_{\|v\|=1} \left\|v^\top \mathcal{X} \Sigma_d \Sigma_d^{-1}\right\|^2 \\
    & =  \inf_{\|v\|=1} \frac{\left\|v^\top \mathcal{X} \Sigma_d \Sigma_d^{-1}\right\|^2}{\left\|v^\top \mathcal{X} \Sigma_d \right\|^2}
   \left\|v^\top \mathcal{X} \Sigma_d \right\|^2 \\
   & \geq \inf_{\|v\|=1} \frac{\left\|v^\top \mathcal{X} \Sigma_d \Sigma_d^{-1}\right\|^2}{\left\|v^\top \mathcal{X} \Sigma_d \right\|^2} 
   \inf_{\|v\|=1}
   \left\|v^\top \mathcal{X} \Sigma_d \right\|^2 \\
   & \geq \inf_{v \neq 0} \frac{\|v^\top \Sigma_d^{-1}\|^2}{\|v\|^2} \lambda_{\min} (\mathcal{X} \Sigma_d \Sigma_d^\top \mathcal{X}^\top) \\
   & = \inf_{v \neq 0} \frac{\|v\|^2}{\|\Sigma_d^\top v\|^2} \lambda_{\min} (\mathcal{X} \Sigma_d \Sigma_d^\top \mathcal{X}^\top) \\
   & = \frac{\lambda_{\min} (\mathcal{X} \Sigma_d \Sigma_d^\top \mathcal{X}^\top)}{\lambda_{\max}(\Sigma_d \Sigma_d^\top)}.
\end{align*}

\end{proof}

\subsection{Proof of Theorem \ref{th3}}\label{proof th3}

In this section, we will prove Theorem \ref{th3}.
All the notations are consistent with Theorem \ref{th3} unless stated otherwise.

\begin{proof}[Proof of Theorem \ref{th3}]

For the $\ell_2$ linear regression loss function $\mathcal{L}_n (w)$, notice that
\begin{equation*}
    \nabla \mathcal{L}_n (w) =  \frac{1}{n} \sum_{i=1}^n \left(w^\top x_i - y_i\right) x_i =  \frac{1}{n} \mathcal{X}^\top \left(\mathcal{X} w -  \mathcal{Y}\right).
\end{equation*}

Then since $\mathcal{X}$ has full row rank, we have $\forall w \in \mathbb{R}^d$,
\begin{align*}
    \left\|\nabla \mathcal{L}_n (w)\right\| & = \frac{1}{n} \left\|\mathcal{X}^\top \left(\mathcal{X} w -  \mathcal{Y}\right)\right\| \\
    & \geq \frac{\sqrt{\lambda_{\min} (\mathcal{X} \mathcal{X}^\top)}}{n} \left\|\mathcal{X} w -  \mathcal{Y}\right\| \\
    & =  \sqrt{\frac{2\lambda_{\min} (\mathcal{X} \mathcal{X}^\top)}{n}} \mathcal{L}_n(w)^{1/2}.
\end{align*}

Since the optimal loss value is zero for underdetermined linear regression, we have
\begin{equation*}
    c_n =   \sqrt{\frac{2\lambda_{\min} (\mathcal{X} \mathcal{X}^\top)}{n}}, \quad \theta_{n} = 1/2.
\end{equation*}

The convergence rate can be proved by directly plugging $c_n$ and $\theta_n$ into Theorem \ref{opt}.

Next, we prove the generalization bound for $w_\varepsilon$.
Notice that for the linear hypothesis function $f$ and the Lipschitz function $\tilde{\ell}$, we have $L_\ell (\mathcal{S}_{a, b}) = M_{a, b} = 1$, $\left\|L_\Psi (\mathcal{S}_{a, b})\right\| = \sqrt{2}$, $p=1, q=0$, $\Bar{M}_\delta = 0$.

By the property of the target function, we have for any $w^{(0)}$ that satisfies $\left\|w^{(0)}\right\|_2 \leq c_0$,
\begin{align*}
    \mathcal{L}_n(w^{(0)}) & = \frac{1}{2n} \left\|\mathcal{X}w^{(0)} - \mathcal{Y}\right\|_2^2 \\
    & \leq  \frac{1}{n} \left(\left\|\mathcal{X} w^{(0)}\right\|_2^2 + \left\|\mathcal{Y}\right\|_2^2\right) \\
    & \leq \frac{c_0^2 + (c^*)^2}{n} \lambda_{\max} (\mathcal{X} \mathcal{X}^\top).
\end{align*}

Since in the following, we only need to bound the condition number of the data matrix $\mathcal{X}$, with loss of generality, we may assume that the entries of $\mathcal{X}$ have variance $1$.
Now we apply Lemma \ref{lemma D.1} with $A = \mathcal{X} \Sigma_d$ and $t = \sqrt{\frac{\log(4 / \delta)}{c}}$, then we have with probability at least $1 - \delta / 2$ over the samples,
\begin{equation}\label{equation xsigma_d}
    \sqrt{\lambda_{\max} (\mathcal{X} \Sigma_d \Sigma_d^\top \mathcal{X}^\top)} \leq C(\sqrt{n}+\sqrt{d}) + \sqrt{\frac{\log(4 / \delta)}{c}},
\end{equation}

where $c$ and $C$ are two positive constants that depend only on the subgaussian moment of the entries.

By Lemma \ref{lemma bound eigenvalue}, we have with the same probability, 
\begin{equation}\label{maxeig_covariance}
    \sqrt{\lambda_{\max} (\mathcal{X}  \mathcal{X}^\top)} \leq \frac{C(\sqrt{n}+\sqrt{d}) + \sqrt{\frac{\log(4 / \delta)}{c}}}{\sqrt{\lambda_0}}.
\end{equation}

Thus,
\begin{align*}
    M_\delta & = \frac{c_0^2 + (c^*)^2}{\lambda_0} \left(C\left(1+\sqrt{\frac{d}{n}}\right) + \sqrt{\frac{\log(4 / \delta)}{cn}}\right)^2 \\
    & \leq \frac{c_0^2 + (c^*)^2}{\lambda_0} \left(C\left(1+\frac{1}{\sqrt{\gamma_0}}\right) + \sqrt{\frac{\log(4 / \delta)}{cn}}\right)^2.
\end{align*}

Similarly, let $\tau = c_1 \in (0, 1), \varepsilon = \tau / C_1 > 0$, then Lemma \ref{lemma D.2} implies that with probability at least $1 - \tau^{d-n+1} - \tau^d$ over the samples,
\begin{equation}
    \sqrt{\lambda_{\min} (\mathcal{X} \Sigma_d \Sigma_d^\top \mathcal{X}^\top)} \geq \frac{\tau}{C_1}(\sqrt{d} - \sqrt{n-1}),
\end{equation}

where $C_1 > 0$ and $\tau \in(0,1)$ depend only on the subgaussian moment of the entries.

By Lemma \ref{lemma bound eigenvalue}, we have with the same probability, 
\begin{equation}\label{mineig_covariance}
    \sqrt{\lambda_{\min} (\mathcal{X}  \mathcal{X}^\top)} \geq \frac{\tau}{C_1 \sqrt{\lambda_1}}(\sqrt{d} - \sqrt{n-1}).
\end{equation}

Taking the union bound, we have with probability at least $1 - \delta / 2 - \tau^{d-n+1} - \tau^d$ over the training samples,
\begin{equation}\label{proof th3 eq 16}
    \begin{aligned}
    r_{n, \delta, \varepsilon} & = \sqrt{2n} \frac{\sqrt{M_\delta} - \sqrt{\varepsilon M_\delta}}{\sqrt{\lambda_{\min} (\mathcal{X} \mathcal{X}^\top)}} \\
    & \leq \sqrt{2(1-\varepsilon)n} \frac{\sqrt{M_\delta}}{\frac{\tau}{C_1 \sqrt{\lambda_1}}(\sqrt{d} - \sqrt{n-1})} \\
    & \leq \sqrt{2(1-\varepsilon)\lambda_1}  \frac{\frac{c_0 + c^*}{\sqrt{\lambda_0}} \left(C\left(1+\frac{1}{\sqrt{\gamma_0}}\right) + \sqrt{\frac{\log(4 / \delta)}{cn}}\right)}{\frac{\tau}{C_1} \left(\sqrt{\frac{d}{n}} -
    \sqrt{1-\frac{1}{n}}\right)} \\
    & \leq \sqrt{2(1-\varepsilon)\lambda_1}  \frac{\frac{c_0 + c^*}{\sqrt{\lambda_0}} \left(C\left(1+\frac{1}{\sqrt{\gamma_0}}\right) + \sqrt{\frac{\log(4 / \delta)}{cn}}\right)}{\frac{\tau}{C_1} \left(\frac{1}{\sqrt{\gamma_1}} -
    1\right)}.
    \end{aligned}
\end{equation}

Notice that $\ell$ is $\sqrt{l_0}$-Lipschitz on the first argument, then by Cauchy-Schwarz inequality, we have
\begin{equation}\label{loss value transition}
    \begin{aligned}
        \frac{1}{n} \sum_{i=1}^n \tilde{\ell}(f(w_\varepsilon, x_i), y_i) & \leq \frac{\sqrt{l_0}}{n} \sum_{i=1}^n |f(w_\varepsilon, x_i) - y_i| \\
    & \leq \sqrt{\frac{l_0}{n} \sum_{i=1}^n |f(w_\varepsilon, x_i) - y_i|^2} \\
    & \leq \sqrt{2l_0\mathcal{L}_n(w_\varepsilon)} \\
    & = \sqrt{2l_0 \varepsilon \mathcal{L}_n(w^{(0)})}.
    \end{aligned}
\end{equation}

Combining Theorem \ref{gen} and the inequality (\ref{proof th3 eq 16}), we have with probability at least $1 - \delta - \tau^{d-n+1} - \tau^d$ over the training samples,
\begin{align*}
    \mathbb{E}_{(x, y) \sim \mathcal{D}} \left[\tilde{\ell} \left(f (w_\varepsilon, x), y\right)\right] & \leq \sqrt{2 l_0 \varepsilon \mathcal{L}_n(w^{(0)})} +
    \frac{4 r_{n, \delta, \varepsilon}}{\sqrt{n}} +
    3 l_0 \sqrt{\frac{3 + \log (4 / \delta)}{2n}} \\
    & \leq \sqrt{2 l_0 \varepsilon \mathcal{L}_n(w^{(0)})} + \frac{4(c_0 + c^*)\sqrt{\frac{2(1-\varepsilon)\lambda_1}{\lambda_0}}  \left(C\left(1+\frac{1}{\sqrt{\gamma_0}}\right) + \sqrt{\frac{\log(4 / \delta)}{cn}}\right)}{\frac{\tau}{C_1} \left(\frac{1}{\sqrt{\gamma_1}} -
    1\right) \sqrt{n}} +
    3 l_0 \sqrt{\frac{3 + \log (4 / \delta)}{2n}},
\end{align*}

where $c, C, C_1 > 0$ and $\tau \in(0,1)$ depend only on the subgaussian moment of the entries.
This completes the proof of Theorem \ref{th3}.

\end{proof}

\subsection{Proof of Theorem \ref{th4}}\label{proof th4}

In this section, we will prove Theorem \ref{th4}. First, we present some useful lemmas for proving our results, and then we give the proofs of Theorem \ref{th4} for the RBF kernel and the inner product kernel separately.

For the RBF kernel $k(x, y) = \varrho \left(\left\|y - x\right\|\right)$, the following two lemmas give non-asymptotic bounds for $\lambda_{\max} (k(\mathcal{X}, \mathcal{X}))$ and $\lambda_{\min} (k(\mathcal{X}, \mathcal{X}))$ based on the separation distance \textsf{SD} of $\mathcal{X}$.

The first lemma is from \citep[Lemma 3.1]{diederichs2019improved}, providing an upper bound for $\lambda_{\max} (k(\mathcal{X}, \mathcal{X}))$.

\begin{lemma}\label{lemma D.3}
For the RBF kernel, if $\varrho : \mathbb{R}_{\geq 0} \xrightarrow{} \mathbb{R}_{\geq 0}$ is a decreasing function, then
\begin{equation}\label{max}
    \lambda_{\max} (k(\mathcal{X}, \mathcal{X})) \leq \varrho (0) + 3 d \sum_{t=1}^\infty (t + 2)^{d - 1} \varrho (t \cdot \textsf{SD}),
\end{equation}

and the sum of the infinite series in equation (\ref{max}) is finite if and only if $\varrho \left(\left\|x\right\|\right) \in L^1 (\mathbb{R}^d)$.
\end{lemma}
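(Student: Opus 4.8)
The plan is to bound $\lambda_{\max}(k(\mathcal{X},\mathcal{X}))$ by a maximal row sum via Gershgorin's circle theorem and then control that row sum with a disjoint-ball packing argument driven by the separation distance. Since the kernel matrix is symmetric with nonnegative entries $k(x_i,x_j)=\varrho(\|x_i-x_j\|)$, Gershgorin gives that every eigenvalue $\lambda$ satisfies $\lambda\le k(x_i,x_i)+\sum_{j\ne i}\varrho(\|x_i-x_j\|)$ for some $i\in[n]$, hence
\[
\lambda_{\max}(k(\mathcal{X},\mathcal{X}))\le\varrho(0)+\max_{i\in[n]}\sum_{j\ne i}\varrho(\|x_i-x_j\|),
\]
and it suffices to show $\sum_{j\ne i}\varrho(\|x_i-x_j\|)\le 3d\sum_{t\ge 1}(t+2)^{d-1}\varrho(t\cdot\textsf{SD})$ for each fixed $i$.

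For this, I would fix $i$ and sort the remaining points into spherical shells around $x_i$. Since $\|x_i-x_j\|\ge 2\,\textsf{SD}$ for all $j\ne i$, for each integer $t\ge 2$ let $M_t$ be the number of $j\ne i$ with $t\,\textsf{SD}\le\|x_i-x_j\|<(t+1)\,\textsf{SD}$. The balls $B(x_j,\textsf{SD})$ are pairwise disjoint because the points are $2\,\textsf{SD}$-separated, and by the triangle inequality each such ball with $j$ in shell $t$ lies inside the annulus $\{y:(t-1)\textsf{SD}<\|y-x_i\|<(t+2)\textsf{SD}\}$; comparing Lebesgue volumes gives $M_t\le (t+2)^d-(t-1)^d\le 3d(t+2)^{d-1}$, the last step by the mean value theorem applied to $s\mapsto s^d$. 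Since $\varrho$ is decreasing, every summand coming from shell $t$ is at most $\varrho(t\,\textsf{SD})$, so $\sum_{j\ne i}\varrho(\|x_i-x_j\|)\le\sum_{t\ge 2}3d(t+2)^{d-1}\varrho(t\,\textsf{SD})\le 3d\sum_{t\ge 1}(t+2)^{d-1}\varrho(t\,\textsf{SD})$, which is the claimed inequality.

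For the convergence statement I would compare the series with $\int_0^\infty r^{d-1}\varrho(r)\,dr$, which by passing to polar coordinates equals $\int_{\mathbb{R}^d}\varrho(\|x\|)\,dx$ up to the constant surface area of the unit sphere. Using monotonicity of $\varrho$ in both directions, namely $\varrho((t+1)\textsf{SD})\le\textsf{SD}^{-1}\int_{t\textsf{SD}}^{(t+1)\textsf{SD}}\varrho(r)\,dr\le\varrho(t\,\textsf{SD})$, together with $(t+2)^{d-1}\asymp t^{d-1}$ and the finiteness of $\varrho(0)$ (which makes the region near $r=0$ harmless), one obtains $\sum_{t\ge 1}(t+2)^{d-1}\varrho(t\,\textsf{SD})<\infty$ if and only if $\int_0^\infty r^{d-1}\varrho(r)\,dr<\infty$, i.e. if and only if $\varrho(\|\cdot\|)\in L^1(\mathbb{R}^d)$.

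The main obstacle is the packing count: extracting the clean constant $3d(t+2)^{d-1}$ rather than a looser dimensional factor forces the shell width to be exactly $\textsf{SD}$ and requires precise tracking of which annulus the disjoint $\textsf{SD}$-balls occupy. The integral comparison in the convergence part is routine but must use the monotonicity of $\varrho$ on both sides, and one should keep in mind that the sum over shells starts at $t=2$ while the stated series runs from $t=1$ — this re-indexing only loosens the bound, so it is harmless.
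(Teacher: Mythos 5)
Your proof is correct. The paper does not prove Lemma~\ref{lemma D.3} at all; it is quoted directly from \citet{diederichs2019improved} (their Lemma~3.1), so there is no in-paper argument to compare against. That said, your route---Gershgorin's disc theorem on the symmetric, nonnegative kernel matrix to reduce $\lambda_{\max}$ to a worst-case row sum, followed by a disjoint-ball packing count over spherical shells of width $\textsf{SD}$ around a fixed centre---is exactly the standard way such estimates are obtained, and every step checks out: the $2\,\textsf{SD}$-separation makes the open balls $B(x_j,\textsf{SD})$ pairwise disjoint; the triangle inequality places each such ball, for $x_j$ in shell $t$, inside the open annulus of radii $(t-1)\textsf{SD}$ and $(t+2)\textsf{SD}$; the volume ratio then gives $M_t \le (t+2)^d-(t-1)^d \le 3d(t+2)^{d-1}$ by the mean value theorem applied to $s\mapsto s^d$ (valid for all $d\ge 1$); and re-indexing the shell sum from $t\ge 2$ to $t\ge 1$ only loosens the bound, recovering the stated inequality with the same constant. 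The $L^1$ equivalence via integral comparison, using the monotonicity of $\varrho$ on both sides, the change of variables $r=s\,\textsf{SD}$, and boundedness of $\varrho$ near the origin, is also the standard argument and is sound.
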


The next lemma is from \citep[Theorem 12.3]{wendland_2004}, giving a lower bound for $\lambda_{\min} (k(\mathcal{X}, \mathcal{X}))$.

\begin{lemma}\label{lemma D.4}
Suppose that $k$ is a positive-definite RBF kernel. If $\varrho \left(\left\|x\right\|\right) \in L^1 (\mathbb{R}^d)$, one can define the Fourier transform of $\varrho$ as $\hat{\varrho} (\omega) := (2 \pi)^{-d / 2} \int_{\mathbb{R}^{d}} \varrho(\omega) e^{-i x^\top \omega} d \omega$. With a decreasing function $\varrho_0 (M)$ and two constants $M_d, C_d$ defined as
\begin{equation*}
    \varrho_0 (M) := \inf_{\left\|x\right\| \leq 2 M} \hat{\varrho} (x), \quad M_{d} = 6.38 d, \quad C_{d} = \frac{1}{2 \Gamma(d / 2+1)}\left(\frac{M_{d}}{2^{3 / 2}}\right)^{d},
\end{equation*}

where $\Gamma$ is the gamma function. Then a lower bound on $\lambda_{\min} (k(\mathcal{X}, \mathcal{X}))$ is given by
\begin{equation*}
    \lambda_{\min} (k(\mathcal{X}, \mathcal{X})) \geq C_{d} \cdot \varrho_{0}\left(M_{d} / \textsf{SD}\right) \cdot \textsf{SD}^{-d}.
\end{equation*}
\end{lemma}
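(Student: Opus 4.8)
The plan is to follow the classical Fourier-analytic route of Narcowich--Ward and Wendland for lower bounds on the smallest eigenvalue of a positive-definite interpolation matrix. Write $K := k(\mathcal{X}, \mathcal{X})$, so that $K_{jk} = \varrho(\|x_j - x_k\|)$ and $\lambda_{\min}(K) = \min_{c \in \mathbb{R}^n, \|c\| = 1} c^\top K c$. Since $\varrho(\|\cdot\|) \in L^1(\mathbb{R}^d)$ and $k$ is positive definite, Bochner's theorem gives $\hat{\varrho} \geq 0$ together with the inversion formula $\varrho(\|y\|) = (2\pi)^{-d/2} \int_{\mathbb{R}^d} \hat{\varrho}(\omega) e^{i y^\top \omega} \, d\omega$. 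Substituting this into the quadratic form and exchanging the finite sum with the integral yields the key identity
\[
  c^\top K c = (2\pi)^{-d/2} \int_{\mathbb{R}^d} \hat{\varrho}(\omega) \Big| \sum_{j=1}^n c_j e^{i x_j^\top \omega} \Big|^2 \, d\omega .
\]
Restricting the integral to $\{\|\omega\| \leq 2M\}$ and using $\hat{\varrho}(\omega) \geq \varrho_0(M)$ there, the task reduces to a band-limited lower bound of the form $\int_{\|\omega\| \leq 2M} \big| \sum_j c_j e^{i x_j^\top \omega} \big|^2 \, d\omega \geq \kappa_d \, M^d \|c\|^2$ for a dimensional constant $\kappa_d$, valid provided $M$ is large enough relative to $1/\textsf{SD}$.

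To prove this band-limited bound I would introduce an auxiliary bump $\Phi$ whose Fourier transform $\hat{\Phi}$ is nonnegative and supported in $\{\|\omega\| \leq 2M\}$ --- for instance a normalized self-convolution of the indicator of $\{\|\omega\| \leq M\}$ --- having a large central value $\Phi(0) \asymp M^d$ and polynomial decay $|\Phi(x)| \lesssim M^d (M\|x\|)^{-(d+1)}$. Because the points satisfy $\|x_i - x_j\| \geq 2\textsf{SD}$ for $i \neq j$, a packing/shell-counting argument bounds $\max_j \sum_{k \neq j} |\Phi(x_j - x_k)|$ by a convergent geometric series of order $M^{-1} \textsf{SD}^{-(d+1)}$; taking $M = M_d / \textsf{SD}$ with $M_d$ linear in $d$ (this is precisely where the numerical constant $6.38 d$ enters, from tracking the Bessel- and gamma-function constants attached to $\Phi$) makes this at most $\tfrac12 \Phi(0)$. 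A Gershgorin/Schur-test estimate then gives $\sum_{j,k} c_j c_k \Phi(x_j - x_k) \geq \tfrac12 \Phi(0) \|c\|^2$, while rewriting the left-hand side as $(2\pi)^{-d/2} \int_{\|\omega\| \leq 2M} \hat{\Phi}(\omega) \big| \sum_j c_j e^{i x_j^\top \omega} \big|^2 \, d\omega \leq (2\pi)^{-d/2} \|\hat{\Phi}\|_\infty \int_{\|\omega\| \leq 2M} | \cdots |^2 \, d\omega$ delivers the desired inequality with $\kappa_d M^d = (2\pi)^{d/2} \Phi(0) / (2 \|\hat{\Phi}\|_\infty)$.

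Combining the two steps gives $\lambda_{\min}(K) \geq \varrho_0(M_d/\textsf{SD}) \cdot \Phi(0) / (2 \|\hat{\Phi}\|_\infty)$, and inserting the closed-form expressions for $\Phi(0) \asymp (M_d/\textsf{SD})^d$ and $\|\hat{\Phi}\|_\infty$ attached to the chosen $\Phi$ produces exactly a constant of the stated form $C_d = \tfrac{1}{2\Gamma(d/2+1)} (M_d / 2^{3/2})^d$ together with the factor $\textsf{SD}^{-d}$. The main obstacle is the band-limited lower bound on the trigonometric sum: the test function $\Phi$ and the radius $M$ must be chosen so that the off-diagonal contributions are controlled using only the separation distance $\textsf{SD}$, and it is this optimization that fixes the explicit $d$-dependence of $M_d$ and $C_d$. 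Since the resulting constants are standard, I would cite \citep[Ch.~12]{wendland_2004} for the precise verification rather than reproduce the estimates in full.
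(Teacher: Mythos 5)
The paper does not prove this lemma; it imports it verbatim from Wendland's Theorem~12.3 with only a citation, and your proposal faithfully reconstructs the standard Narcowich--Ward/Wendland argument that underlies that result: Bochner's inversion formula to pass to the Fourier domain, truncation to the ball $\{\|\omega\| \le 2M\}$ with $\hat\varrho \ge \varrho_0(M)$ there, and then a band-limited lower bound via an auxiliary test function $\Phi$ with compactly supported nonnegative $\hat\Phi$, polynomial decay, and a packing argument keyed to the separation distance. The specific choice $M = M_d/\textsf{SD}$ with $M_d \approx 6.38d$ and the closed form for $C_d$ are exactly the outputs of tracking the Bessel/Gamma constants attached to Wendland's $\Phi$, as you indicate, so your outline is correct and is essentially the cited proof.
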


For the inner product kernel, it is shown in \citet{el2010spectrum} that the kernel matrix can be approximated by the linear combination of all-ones matrix $1 1^\top$, sample covariance matrix and identity matrix. To obtain non-asymptotic results on the spectra of the kernel matrix, we borrow the technique from \citep[Proposition A.2]{liang2020just}, and show the result for subgaussian entries in the next lemma.

\begin{lemma}\label{lemma D.5}

For the inner product kernel, under Assumption \ref{assumption1}, we have with probability at least $1 - \delta - d^{-2}$ over the entries,
\begin{equation*}
    \left\|k (\mathcal{X}, \mathcal{X}) - k^{\operatorname{lin}} (\mathcal{X}, \mathcal{X})\right\| \leq d^{-1/2} \left(\delta^{-1/2} + \log^{0.51} d\right),
\end{equation*}

where $k^{\operatorname{lin}} (\mathcal{X}, \mathcal{X})$ is defined as
\begin{equation*}
    k^{\operatorname{lin}} (\mathcal{X}, \mathcal{X}) :=  \left(\varrho (0) + \frac{\varrho '' (0)}{d}\right) 1 1^\top + \varrho ' (0) \frac{\mathcal{X} \Sigma_d^2 \mathcal{X}^\top}{d} + \left(\varrho (1) - \varrho (0) - \varrho ' (0)\right) \mathbb{I}_{n \times n}.
\end{equation*}
\end{lemma}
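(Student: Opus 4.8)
The plan is to adapt the argument of \citet[Proposition A.2]{liang2020just} to sub-gaussian features. The starting observation is that under Assumption~\ref{assumption1} the argument of $\varrho$ in the $(i,j)$ entry of $k(\mathcal{X},\mathcal{X})$, namely $x_i^\top \Sigma_d^2 x_j / d$, is exactly $g_i^\top g_j / d$, where $g_i^\top$ is the $i$-th row of $\mathcal{X}\Sigma_d$, so that the $g_i$ are i.i.d. with independent mean-zero sub-gaussian coordinates of moment at most $1$. I would first split $k(\mathcal{X},\mathcal{X}) - k^{\operatorname{lin}}(\mathcal{X},\mathcal{X})$ into its diagonal and off-diagonal parts and bound each in operator norm.

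For the off-diagonal part, I would Taylor-expand $\varrho$ at $0$ to third order, $\varrho(t) = \varrho(0) + \varrho'(0)\,t + \tfrac12\varrho''(0)\,t^2 + R(t)$ with $|R(t)| \le C|t|^3$ on the $C^3$-neighbourhood of $0$ furnished by Assumption~\ref{assumption2}; this is justified once one knows $\max_{i\neq j}|g_i^\top g_j|/d \lesssim \sqrt{\log d/d}$ with probability $1 - \mathcal{O}(d^{-2})$, which follows from Bernstein's inequality for sub-exponential sums and a union bound over the $\binom{n}{2} = \mathcal{O}(d^2)$ pairs and is the source of the $\log^{0.51}d$ factor. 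The linear term produces $\varrho'(0)\,\mathcal{X}\Sigma_d^2\mathcal{X}^\top/d$ exactly. For the quadratic term $\tfrac12\varrho''(0)(g_i^\top g_j/d)^2$ I would use the split $(g_i^\top g_j)^2 = \sum_k g_{ik}^2 g_{jk}^2 + \sum_{k\neq l} g_{ik}g_{il}g_{jk}g_{jl}$: the first sum concentrates about its mean and yields a near-rank-one matrix $\propto \tfrac1d 11^\top$, which is absorbed into the $11^\top$ term of $k^{\operatorname{lin}}$ up to $\mathcal{O}(1/d)$ corrections folded into the identity term; the second, off-diagonal, contribution is the off-diagonal block of the positive-semidefinite matrix $\tfrac1{d^2}\sum_{k\neq l} h^{(kl)}(h^{(kl)})^\top$ with $h^{(kl)}_i := g_{ik}g_{il}$, whose diagonal concentrates at $\mathbb{I}_{n\times n}$ (absorbed into the identity term of $k^{\operatorname{lin}}$), leaving a small off-diagonal remainder. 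The cubic-remainder matrix $[R(g_i^\top g_j/d)]_{i\neq j}$ is handled through its Frobenius norm via the uniform bound on $\max_{i\neq j}|g_i^\top g_j|/d$ above. For the diagonal part, $\|g_i\|^2/d$ concentrates around $1$ uniformly in $i$ by the Hanson--Wright inequality, so $\varrho(\|g_i\|^2/d)$ lies within $\mathcal{O}(\sqrt{\log d/d})$ of $\varrho(1)$, while the diagonal of $k^{\operatorname{lin}}$ lies within $\mathcal{O}(1/d)$ of $\varrho(1)$; their difference is a diagonal matrix of operator norm $\mathcal{O}(\sqrt{\log d/d})$.

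\textbf{The main obstacle} is controlling, in operator norm, the off-diagonal fluctuation coming from the quadratic Taylor term: a direct Frobenius bound only gives $\mathcal{O}(1)$ when $n \asymp d$, so one must exploit cancellation in the centred entries. I would do this either by combining the rank-one decomposition above with a matrix-Bernstein / $\varepsilon$-net estimate showing $\tfrac1{d^2}\sum_{k\neq l} h^{(kl)}(h^{(kl)})^\top$ is close to $\mathbb{I}_{n\times n}$, or, more economically, by bounding the expected squared Frobenius norm of this centred off-diagonal matrix (which is $\mathcal{O}(1/d)$, since each centred entry has variance $\mathcal{O}(1/d^2)$) and applying Markov's inequality --- this last route is what introduces the $\delta^{-1/2}$ in the probability bound. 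Assembling the diagonal estimate, the absorbed mean contributions, the quadratic fluctuation, and the cubic remainder, and union-bounding over the finitely many concentration events, gives $\|k(\mathcal{X},\mathcal{X}) - k^{\operatorname{lin}}(\mathcal{X},\mathcal{X})\| \le d^{-1/2}\bigl(\delta^{-1/2} + \log^{0.51}d\bigr)$ with probability at least $1 - \delta - d^{-2}$, as claimed.
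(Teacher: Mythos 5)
The paper disposes of this lemma in one line, simply observing that the features $\mathcal{X}\Sigma_d$ fall under the hypotheses of \citet[Proposition~A.2]{liang2020just} and invoking that result. Your proposal, by contrast, reconstructs the underlying El Karoui-type argument from scratch, adapting it to sub-gaussian entries. That is a legitimate and more self-contained route, and the overall architecture you describe (third-order Taylor expansion, uniform Bernstein bound for $\max_{i\neq j}|g_i^\top g_j|/d$, Hanson--Wright for the diagonal, absorption of means into the $11^\top$ and identity blocks) is the right one and mirrors El Karoui's original proof.

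However, there is a concrete gap at precisely the step you flag as ``the main obstacle,'' and your proposed resolution does not close it. You split $(g_i^\top g_j)^2 = \sum_k g_{ik}^2 g_{jk}^2 + \sum_{k\neq l}g_{ik}g_{il}g_{jk}g_{jl}$ and propose to control the second, cross-term contribution by a Frobenius--Markov bound, asserting that the centred off-diagonal matrix has expected squared Frobenius norm $\mathcal{O}(1/d)$ because ``each centred entry has variance $\mathcal{O}(1/d^2)$.'' This is arithmetically inconsistent: with $n\asymp d$ there are $\asymp d^2$ off-diagonal entries, so entrywise variance $\mathcal{O}(1/d^2)$ gives $\mathbb{E}\|\cdot\|_F^2 = \mathcal{O}(1)$, not $\mathcal{O}(1/d)$. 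Markov on the Frobenius norm therefore yields an operator-norm bound of order $\delta^{-1/2}$, which is off by a factor of $\sqrt{d}$ from the claimed $d^{-1/2}\delta^{-1/2}$. (Your $\mathcal{O}(1/d)$ bookkeeping is correct for the \emph{first} contribution $\tfrac{1}{d^2}\sum_k g_{ik}^2 g_{jk}^2$, whose centred entries have variance $\mathcal{O}(1/d^3)$; it is the cross term where Frobenius is genuinely too lossy.) Closing this requires an actual operator-norm argument that exploits cancellation beyond what the entrywise second moment sees --- e.g., a high-order trace computation of $\mathbb{E}\,\mathrm{tr}\bigl((B - \mathbb{E}B)^{2k}\bigr)$, or a matrix-Bernstein estimate applied to the rank-one sum $\tfrac{1}{d^2}\sum_{k\neq l}h^{(kl)}(h^{(kl)})^\top$ with a careful variance calculation. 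You mention this alternative route but do not carry it out, and it is precisely the nontrivial core of the result; without it the proof is incomplete.

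A secondary point: you should double-check the constant in the $11^\top$ coefficient when matching the Taylor mean to $k^{\operatorname{lin}}$. With $\mathbb{E}[(g_i^\top g_j/d)^2]\approx 1/d$ for $i\neq j$, the quadratic Taylor term $\tfrac12\varrho''(0)\,t_{ij}^2$ has mean $\approx \varrho''(0)/(2d)$, whereas the paper's $k^{\operatorname{lin}}$ carries $\varrho''(0)/d$. Under the generic Assumption~\ref{assumption1} (entries i.i.d.\ sub-gaussian, not necessarily unit variance) this is not automatically reconciled, so you should either impose the unit-variance normalization implicitly used by the paper or track the resulting $\mathcal{O}(1/d)$ discrepancy explicitly and fold it into the error term.
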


\begin{proof}

Note that the sample covariance of $\mathcal{X} \Sigma_d$ is $\mathbb{I}_{d \times d}$, then by applying \citep[Proposition A.2]{liang2020just} with subgaussian random entries we can prove this lemma.

\end{proof}

\begin{lemma}\label{lemma D.6}
Suppose that $A, B \in \mathbb{R}^{n \times n}$ are two symmetric matrices, then we have
\begin{equation*}
    \lambda_{\min} (A+B) \geq \lambda_{\min} (A) + \lambda_{\min} (B).
\end{equation*}
\end{lemma}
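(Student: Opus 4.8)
The plan is to use the Rayleigh--Ritz variational characterization of the smallest eigenvalue of a symmetric matrix. Recall that for any symmetric $M \in \mathbb{R}^{n \times n}$ one has $\lambda_{\min}(M) = \min_{\|v\| = 1} v^\top M v$, where the minimum ranges over unit vectors in $\mathbb{R}^n$ and is attained at a corresponding eigenvector; this is an immediate consequence of the spectral theorem applied to $M$.

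First I would fix an arbitrary unit vector $v \in \mathbb{R}^n$ and write $v^\top (A + B) v = v^\top A v + v^\top B v$. Since $A$ and $B$ are symmetric, the Rayleigh-quotient bound gives $v^\top A v \geq \lambda_{\min}(A)$ and $v^\top B v \geq \lambda_{\min}(B)$, hence $v^\top (A + B) v \geq \lambda_{\min}(A) + \lambda_{\min}(B)$. Taking the infimum over all unit vectors $v$ on the left-hand side (again using the variational characterization, now for $A + B$) yields $\lambda_{\min}(A + B) \geq \lambda_{\min}(A) + \lambda_{\min}(B)$, which is the claim.

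There is no substantial obstacle here: this is the standard Weyl-type subadditivity inequality for the minimum eigenvalue, and the only ingredient is the min-characterization of $\lambda_{\min}$ for symmetric matrices. If one wished to avoid invoking the variational principle directly, an alternative route would be to diagonalize $A$ and $B$ and track eigenvalues via Weyl's inequalities, but the Rayleigh-quotient argument above is the cleanest and self-contained, so that is the version I would write up.
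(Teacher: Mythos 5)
Your proposal is correct and follows essentially the same argument as the paper: both use the Rayleigh-quotient characterization $\lambda_{\min}(M) = \min_{\|v\|=1} v^\top M v$, bound $v^\top A v$ and $v^\top B v$ separately for an arbitrary unit vector, and take the infimum over $v$.
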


\begin{proof}
Note that for any $x \in \mathbb{R}^n$ with $\|x\| = 1$,
\begin{equation*}
    x^\top (A+B) x = x^\top A x + x^\top B x \geq  \lambda_{\min} (A) + \lambda_{\min} (B).
\end{equation*}

By definition, we have
\begin{equation*}
    \lambda_{\min} (A+B) = \inf_{\|x\| = 1} x^\top (A+B) x \geq  \lambda_{\min} (A) + \lambda_{\min} (B),
\end{equation*}

which completes the proof.

\end{proof}

Now we are ready to prove Theorem \ref{th4}.

\begin{proof}[Proof of Theorem \ref{th4}]

First, notice that $\forall w \in \mathbb{R}^s$,
\begin{align*}
    \left\|\nabla \mathcal{L}_n (w)\right\| & = \frac{1}{n} \left\|\varphi(\mathcal{X})^\top \left(\varphi(\mathcal{X}) w -  \mathcal{Y}\right)\right\| \\
    & \geq \frac{\sqrt{\lambda_{\min} (\varphi(\mathcal{X}) \varphi(\mathcal{X})^\top)}}{n} \left\|\varphi(\mathcal{X}) w -  \mathcal{Y}\right\| \\
    & = \sqrt{2\frac{\lambda_{\min} (k(\mathcal{X}, \mathcal{X}))}{n}} \mathcal{L}_n(w)^{1/2}.
\end{align*}

Since the optimal loss value is zero for kernel regression, we have
\begin{equation*}
     c_n =  \sqrt{\frac{2\lambda_{\min} (k(\mathcal{X}, \mathcal{X}))}{n}}, \quad \theta_n = 1/2.
\end{equation*}

Since $k$ is a positive-definite kernel, the convergence rate can be proved by directly plugging $c_n$ and $\theta_n$ into Theorem \ref{opt}.

The proof of the generalization bound is two-sided. First, since $\forall x \in \mathcal{X}, \left\|\varphi(x)\right\| = \sqrt{k(x, x)} \leq 1$, then the kernel regression model (\ref{kernel}) can be viewed as $\ell_2$ linear regression on inputs $\varphi(\mathcal{X})$. Hence, $\Psi$ is an identity function with $p=1, q=0$, and $L_\ell (\mathcal{S}_{a, b}) = M_{a, b} = 1$, $\left\|L_\Psi (\mathcal{S}_{a, b})\right\| = \sqrt{2}$ for any $a, b$. The optimal empirical loss value $\Bar{M}_\delta = 0$. This means that we only need to bound the terms $M_\delta$ and $r_{n, \delta, \varepsilon}$.

By the property of the target function, for any $w^{(0)}$ that satisfies  $\left\|w^{(0)}\right\|_2 \leq c_0$, we have
\begin{equation}\label{lemma D.6 length}
    \begin{aligned}
    \mathcal{L}_n(w^{(0)}) & = \frac{1}{2n} \left\|\varphi(\mathcal{X})w^{(0)} - \mathcal{Y}\right\|_2^2 \\
    & \leq  \frac{1}{n} \left(\left\|\varphi(\mathcal{X}) w^{(0)}\right\|_2^2 + \left\|\mathcal{Y}\right\|_2^2\right) \\
    & \leq \frac{c_0^2 + (c^*)^2}{n} \lambda_{\max} (k(\mathcal{X}, \mathcal{X})).
    \end{aligned}
\end{equation}

For $r_{n, \delta, \varepsilon}$, notice that
\begin{align*}
     r_{n, \delta, \varepsilon} & = \sqrt{2n} \frac{\sqrt{M_\delta} - \sqrt{\varepsilon M_\delta}}{\sqrt{\lambda_{\min} (k(\mathcal{X}, \mathcal{X}))}} \\
        & \leq \sqrt{2(1-\varepsilon)} \left(c_0 + c^*\right) \sqrt{\frac{\lambda_{\max} (k(\mathcal{X}, \mathcal{X})) }{\lambda_{\min} (k(\mathcal{X}, \mathcal{X}))}}.
\end{align*}

Then for the RBF kernel with fixed input dimension $d$, Lemma \ref{lemma D.3} and Lemma \ref{lemma D.4} indicate that $\lambda_{\max} (k(\mathcal{X}, \mathcal{X}))$ and $\lambda_{\min} (k(\mathcal{X}, \mathcal{X}))$ are uniformly bounded with bounds depend only on $\varrho, d, q_{\min}, q_{\max}$. Hence,
there exists a positive constant $C (\varrho, d, q_{\min}, q_{\max})$ that only depends on $\varrho, d, q_{\min}, q_{\max}$, such that
\begin{equation*}
    r_{n, \delta, \varepsilon}  \leq \sqrt{2(1-\varepsilon)} \left(c_0 + c^*\right) C (\varrho, d, q_{\min}, q_{\max}).
\end{equation*}

Therefore, by equation (\ref{loss value transition}), Theorem \ref{gen} and Lemma \ref{lemma C.1}, with probability at least $1-\delta$ over the training samples,
\begin{align*}
    \mathbb{E}_{(x, y) \sim \mathcal{D}} \left[\tilde{\ell} \left(f (w_\varepsilon, x), y\right)\right] & \leq \sqrt{2 \varepsilon \mathcal{L}_n(w^{(0)})} +
    \frac{4 r_{n, \delta, \varepsilon}}{\sqrt{n}} +
    3 \sqrt{\frac{3 + \log (4 / \delta)}{2n}} \\
    & \leq \sqrt{2 l_0 \varepsilon \mathcal{L}_n(w^{(0)})} + \frac{4\sqrt{2(1-\varepsilon)} \left(c_0 + c^*\right) C (\varrho, d, q_{\min}, q_{\max})}{ \sqrt{n}} +
    3 l_0 \sqrt{\frac{3 + \log (4 / \delta)}{2n}},
\end{align*}

which completes the proof for the RBF kernel.

For the inner product kernel, first notice that
\begin{equation}\label{maxeig_kernel}
    \begin{aligned}
    \lambda_{\max} (k(\mathcal{X}, \mathcal{X})) & = \left\|k(\mathcal{X}, \mathcal{X})\right\| \\
    & \leq \left\|k^{\operatorname{lin}}(\mathcal{X}, \mathcal{X})\right\| + \left\|k(\mathcal{X}, \mathcal{X}) - k^{\operatorname{lin}}(\mathcal{X}, \mathcal{X})\right\|.
    \end{aligned}
\end{equation}

By Lemma \ref{lemma D.6}, we can get
\begin{equation}\label{mineig_kernel}
    \begin{aligned}
    \lambda_{\min} (k(\mathcal{X}, \mathcal{X})) & \geq \lambda_{\min} (k^{\operatorname{lin}}(\mathcal{X}, \mathcal{X})) + \lambda_{\min} (k(\mathcal{X}, \mathcal{X}) - k^{\operatorname{lin}}(\mathcal{X}, \mathcal{X})) \\
    & \geq \lambda_{\min} (k^{\operatorname{lin}}(\mathcal{X}, \mathcal{X})) - \left\|k(\mathcal{X}, \mathcal{X}) - k^{\operatorname{lin}}(\mathcal{X}, \mathcal{X})\right\|.
    \end{aligned}
\end{equation}

Under Assumption \ref{assumption1} \& \ref{assumption2}, Lemma \ref{lemma D.5} implies that
\begin{align*}
    \left\|k^{\operatorname{lin}}(\mathcal{X}, \mathcal{X})\right\| & \leq \frac{\varrho '' (0)}{d} \left\|1 1^\top\right\| + \frac{\varrho ' (0)}{d} \left\|\mathcal{X} \Sigma_d^2 \mathcal{X}^\top\right\| + \left(\varrho (1) - \varrho ' (0)\right) \\
    & \leq \frac{n \varrho '' (0)}{d} + \varrho ' (0) \frac{\lambda_{\max} (\mathcal{X} \Sigma_d^2 \mathcal{X}^\top)}{d}  + \left(\varrho (1) - \varrho ' (0)\right) \\
    & \leq \gamma_1 \varrho '' (0) + \varrho ' (0) \frac{\lambda_{\max} (\mathcal{X} \Sigma_d^2 \mathcal{X}^\top)}{d} + \left(\varrho (1) - \varrho ' (0)\right),
\end{align*}

and
\begin{equation*}
    \lambda_{\min} (k^{\operatorname{lin}}(\mathcal{X}, \mathcal{X})) \geq \varrho (1) - \varrho ' (0) > 0.
\end{equation*}

Thus, by equation (\ref{mineig_kernel}) we have
\begin{equation}\label{mineig_kernel_final}
    \lambda_{\min} (k(\mathcal{X}, \mathcal{X}))  \geq (\varrho (1) - \varrho ' (0)) - \left\|k(\mathcal{X}, \mathcal{X}) - k^{\operatorname{lin}}(\mathcal{X}, \mathcal{X})\right\|.
\end{equation}

Under Assumption \ref{assumption1}, by equation (\ref{equation xsigma_d}), we have with probability at least $1 - \delta / 3$ over the samples,
\begin{align*}
    \frac{\lambda_{\max} (\mathcal{X} \Sigma_d^2 \mathcal{X}^\top)}{d} & \leq \left(C \left(\sqrt{\frac{n}{d}} + 1\right) + \sqrt{\frac{\log (6 / \delta)}{cd}}\right)^2 \\
    & \leq \left(C \left(\sqrt{\gamma_1} + 1\right) + \sqrt{\frac{\gamma_1 \log (6 / \delta)}{c n}}\right)^2.
\end{align*}

Therefore, by equation (\ref{maxeig_kernel}), with probability at least $1 - \delta / 3$ over the samples,
\begin{equation}\label{maxeig_kernel_final}
  \lambda_{\max} (k(\mathcal{X}, \mathcal{X})) \leq \gamma_1 \varrho '' (0) + \varrho ' (0) \left(C \left(\sqrt{\gamma_1} + 1\right) + \sqrt{\frac{\gamma_1 \log (6 / \delta)}{c n}}\right)^2 + \varrho (1) - \varrho ' (0) + \left\|k(\mathcal{X}, \mathcal{X}) - k^{\operatorname{lin}}(\mathcal{X}, \mathcal{X})\right\|.
\end{equation}

By Lemma \ref{lemma D.5}, for large $d$ and small $\delta$ such that $d^{-1/2} \left(\sqrt{3} \delta^{-1/2} + \log^{0.51} d\right) \leq 0.5(\varrho (1) - \varrho ' (0))$, we have with probability at least $1 - \delta / 3 - d^{-2}$ over the entries,
\begin{equation*}
    \left\|k (\mathcal{X}, \mathcal{X}) - k^{\operatorname{lin}} (\mathcal{X}, \mathcal{X})\right\| \leq 0.5(\varrho (1) - \varrho ' (0)).
\end{equation*}

Then equation (\ref{mineig_kernel_final}) and (\ref{maxeig_kernel_final}) yields that with probability at least $1 - 2\delta/3 - d^{-2}$ over the samples,
\begin{align*}
    \lambda_{\min} (k(\mathcal{X}, \mathcal{X}))  & \geq 0.5(\varrho (1) - \varrho ' (0)), \\
    \lambda_{\max} (k(\mathcal{X}, \mathcal{X})) & \leq \gamma_1 \varrho '' (0) + \varrho ' (0) \left(C \left(\sqrt{\gamma_1} + 1\right) + \sqrt{\frac{\gamma_1 \log (4 / \delta)}{c n}}\right)^2 + 1.5(\varrho (1) - \varrho ' (0)).
\end{align*}

Hence,  we have with probability at least $1 - 2\delta/3 - d^{-2}$ over the samples, 
\begin{align*}
    r_{n, \delta, \varepsilon} & = \sqrt{2n} \frac{\sqrt{M_\delta} - \sqrt{\varepsilon M_\delta}}{\sqrt{\lambda_{\min} (k(\mathcal{X}, \mathcal{X}))}} \\
    & \leq \sqrt{2(1-\varepsilon)} \left(c_0 + c^*\right) \sqrt{\frac{\lambda_{\max} (k(\mathcal{X}, \mathcal{X})) }{\lambda_{\min} (k(\mathcal{X}, \mathcal{X}))}} \\
    & \leq \sqrt{2(1-\varepsilon)} \left(c_0 + c^*\right)
   \sqrt{ \frac{\gamma_1 \varrho '' (0) + \varrho ' (0) \left(C \left(\sqrt{\gamma_1} + 1\right) + \sqrt{\frac{\gamma_1 \log (4 / \delta)}{c n}}\right)^2 + 1.5(\varrho (1) - \varrho ' (0))}{0.5(\varrho (1) - \varrho ' (0))}}.
\end{align*}

Therefore, there exists a constant $C$ that depends on $c_0, c^*, \gamma_1, \varrho '' (0), \varrho ' (0), \varrho (1)$ and the subgaussian moment of the entries such that with probability at least $1 - 2\delta/3 - d^{-2}$ over the samples, 
\begin{equation*}
    r_{n, \delta, \varepsilon} \leq C\left(1+\sqrt{\frac{\log(1/\delta)}{n}}\right)\sqrt{1-\varepsilon}.
\end{equation*}

Combining equation (\ref{loss value transition}) and Theorem \ref{gen}, we get with probability at least $1 -\delta- d^{-2}$ over the samples,
\begin{align*}
    \mathbb{E}_{(x, y) \sim \mathcal{D}} \left[\tilde{\ell} \left(f (w_\varepsilon, x), y\right)\right] & \leq \sqrt{2 l_0 \varepsilon \mathcal{L}_n(w^{(0)})} +
    \frac{4 r_{n, \delta, \varepsilon}}{\sqrt{n}} +
    3 l_0 \sqrt{\frac{3 + \log (4 / \delta)}{2n}} \\
    & \leq \sqrt{2 l_0 \varepsilon \mathcal{L}_n(w^{(0)})} +
    \frac{C\left(1+\sqrt{\log(1/\delta)/n}\right)\sqrt{1-\varepsilon}}{\sqrt{n}}  +
    3 l_0 \sqrt{\frac{3 + \log (4 / \delta)}{2n}},
\end{align*}

where $C$ depends on $c_0, c^*, \gamma_1, \varrho '' (0), \varrho ' (0), \varrho (1)$ and the subgaussian moment of the entries,
which completes the proof.

\end{proof}

\subsection{Proof of Proposition \ref{proposition}}\label{proof proposition}

In this section, we will prove Proposition \ref{proposition}.

\begin{proof}
We consider two phases: $t \in [0, T]$ and $t \in (T, \infty)$.
First, notice that the loss function $\mathcal{L}_n(w)$ is a subanalytic function, thus it satisfies the classic LGI \citep{bolte2007lojasiewicz}.
The classic LGI yields that there exist $c_n^*(S_n)>0$, $\theta_n^*(S_n) \in [1/2, 1)$ such that the Uniform-LGI holds on some neighbor $U^*$ of the stationary point $w^{(\infty)}$.
Thus, there exists $T>0$ such that $\mathcal{L}_n(w)$ satisfies the Uniform-LGI on $\left\{w^{(t)} : t \in (T, \infty)\right\}$ with $c_n^*(S_n)$, $\theta_n^*(S_n)$.
Now we consider the gradient flow curve that is outside $U^*$, i.e., $t \in [0, T]$, where the gradient norm has a positive infimum.
Then there exist $\hat{c}_n(S_n)$, $\hat{\theta}_n(S_n)$ such that the Uniform-LGI holds for $t \in [0, T]$.
Let $c_n(S_n) = \min(c_n^*(S_n), \hat{c}_n(S_n)) > 0$, $\theta_n(S_n) = \max(\theta_n^*(S_n), \hat{\theta}_n(S_n)) \in [1/2, 1)$, then $\mathcal{L}_n(w)$ satisfies Uniform-LGI along the whole gradient flow curve with $c_n(S_n)$ and $\theta_n(S_n)$.

For the population loss function $\mathcal{L}_\mathcal{D}(w) = \int \ell(f(w, x), y) d \mu(x, y)$ with the probability measure $d \mu(x, y)$ over the data distribution $\mathcal{D}$, if it is subanalytic, then it satisfies the classic LGI \citep{bolte2007lojasiewicz} around the stationary point $w_\mathcal{D}^{(\infty)}$. By the same argument, there exist $c_\mathcal{D}>0, \theta_\mathcal{D} \in [1/2, 1)$ such that $\mathcal{L}_\mathcal{D}(w)$ satisfies the  Uniform-LGI along $\{w_{\mathcal{D}}^{(t)} : t \geq 0\}$ with $c_\mathcal{D}, \theta_\mathcal{D}$.
\end{proof}

\subsection{Proof of Theorem \ref{shallow nn}}\label{proof shallow nn}

In this section, we will prove Theorem \ref{shallow nn} for  two-layer neural networks.
\begin{proof}[Proof of Theorem \ref{shallow nn}]

The convergence rate can be directly obtained by Theorem \ref{opt}.
For the generalization result, 
we first give an upper bound for $\mathcal{L}_n{(w^{(0)})}$.
Notice that 
\begin{align*}
    \mathcal{L}_n{(w^{(0)})} & = \frac{1}{2n} \sum_{i=1}^n \left( (v^{(0)})^\top \phi (U^{(0)} x_i) - y_i\right)^2 \\
    & \leq \frac{\sum_{i=1}^n \left((v^{(0)})^\top \phi (U^{(0)} x_i)\right)^2 + y_i^2}{n} \\
    & \leq \frac{1}{n} \sum_{i=1}^n \left((v^{(0)})^\top \phi (U^{(0)} x_i)\right)^2 + \frac{(c^*)^2}{n} \lambda_{\max} (\mathcal{X} \mathcal{X}^\top) \\
    & \leq \|v^{(0)}\|^2 \frac{\sum_{i=1}^n \|U^{(0)} x_i\|^2}{n} + \frac{(c^*)^2}{n} \lambda_{\max} (\mathcal{X} \mathcal{X}^\top) \\
    & = \frac{\|v^{(0)}\|^2 \|U^{(0)} \mathcal{X}^\top\|_F^2}{n} + \frac{(c^*)^2}{n} \lambda_{\max} (\mathcal{X} \mathcal{X}^\top) \\
    & \leq \frac{\|v^{(0)}\|^2 \|U^{(0)}\|_F^2 \|\mathcal{X}\|^2}{n} + \frac{(c^*)^2}{n} \lambda_{\max} (\mathcal{X} \mathcal{X}^\top) \\
    & \leq \left(\frac{1}{2}\|w^{(0)}\|^2 + (c^*)^2\right) \frac{\lambda_{\max} (\mathcal{X} \mathcal{X}^\top)}{n}.
\end{align*}

Then by equation (\ref{maxeig_covariance}), we have with probability at least $1-\delta/2$ over the samples,
\begin{align*}
    \mathcal{L}_n{(w^{(0)})} & \leq \left(\frac{1}{2}\|w^{(0)}\|^2 + (c^*)^2\right)
    \left(\frac{C(1+\sqrt{d/n}) + \sqrt{\frac{\log(4 / \delta)}{c n}}}{\sqrt{\lambda_0}}\right)^2 \\
    & \leq \left(\frac{1}{2}\|w^{(0)}\|^2 + (c^*)^2\right)
    \left(\frac{C(1+1/\sqrt{\gamma_0}) + \sqrt{\frac{\log(4 / \delta)}{c n}}}{\sqrt{\lambda_0}}\right)^2 \\
    & \leq \frac{\left(\|w^{(0)}\|^2 + 2(c^*)^2\right) \left(C^2 (1+1/\sqrt{\gamma_0})^2 + \frac{\log(4 / \delta)}{c n}\right)}{\lambda_0}.
\end{align*}

Therefore, we can set $M_\delta$ to be 
\begin{equation*}
    M_\delta = \tilde{C} \left(1 + \frac{\log(1 / \delta)}{n}\right),
\end{equation*}

where $\tilde{C}$ is a constant that depends only on  depends on $c^*, \gamma_0, \lambda_0, \|w^{(0)}\|$ and the subgaussian moment of the entries.

Next, we bound the term $\sup_{\left\|a\right\|^2 + \left\|b\right\|^2 \leq 2 r_{n, \delta, \varepsilon}^2} \left\|L_\Psi (\mathcal{S}_{a, b})\right\|$.
Note that
\begin{equation*}
    \Psi \left(s_1, \ldots, s_m, t_1, \ldots, t_m\right) = \sum_{i=1}^m s_i \phi(t_i).
\end{equation*}

Hence
\begin{equation}\label{eq: l psi}
    \begin{aligned}
        \sup_{\left\|a\right\|^2 + \left\|b\right\|^2 \leq 2 r_{n, \delta, \varepsilon}^2} \left\|L_\Psi (\mathcal{S}_{a, b})\right\| & =
    \sup_{\left\|a\right\|^2 + \left\|b\right\|^2 \leq 2 r_{n, \delta, \varepsilon}^2} \sqrt{\sum_{i=1}^m s_i^2 + \phi(t_i)^2} \\
    & \leq \sup_{\left\|a\right\|^2 + \left\|b\right\|^2 \leq 2 r_{n, \delta, \varepsilon}^2} \sqrt{\sum_{i=1}^m s_i^2 + t_i^2} \\
    & = \sqrt{2}  r_{n, \delta, \varepsilon}.
    \end{aligned}
\end{equation}

Now it remains to bound $r_{n, \delta, \varepsilon}$. Notice that with probability at least $1-\delta/2$ over the training samples, we have
\begin{equation*}
    r_{n, \delta, \varepsilon}  \leq \frac{M_\delta^{1-\theta_{n, \delta}} - (\varepsilon M_\delta -\Bar{M}_\delta)^{1-\theta_{n, \delta}}}{c_{n, \delta}(1 - \theta_{n, \delta})}.
\end{equation*}

Lastly, for the global Lipschitz evaluation loss function $\tilde{\ell}$ and the two-layer neural network, we have $L_\ell (\mathcal{S}_{a, b}) = M_{a, b} = 1$, $p = q = m$. By equation (\ref{loss value transition}) and Theorem \ref{gen}, we can get with probability at least $1-\delta$ over the training samples,
\begin{equation*}
    \mathbb{E}_{(x, y) \sim \mathcal{D}} \left[\tilde{\ell} \left(f (w_\varepsilon, x), y\right)\right] \leq \sqrt{2 l_0 \varepsilon \mathcal{L}_n(w^{(0)})} + \frac{4}{\sqrt{n}} \left(\frac{\left( \tilde{C} \left(1 + \log(1 / \delta)/n\right)\right)^{1-\theta_{n, \delta}} - (\varepsilon M_\delta -\Bar{M}_\delta)^{1-\theta_{n, \delta}}}{c_{n, \delta}(1 - \theta_{n, \delta})}\right)^2 + 3 l_0 \sqrt{\frac{6m + \log (12/\delta)}{2n}},
\end{equation*}

where $\tilde{C}$ is a constant that depends only on $c^*, \gamma_0, \lambda_0, \|w^{(0)}\|$ and the subgaussian moment of the entries.

\end{proof}

\subsection{Results for overparameterized two-layer neural networks}\label{proof th5}

In this section, we derive the generalization result for the overparameterized two-layer neural networks, where our generalization bound (Theorem \ref{gen}) becomes non-vacuous.
The generalization bound is based on the Rademacher complexity theory in \citet{arora2019fine} in the NTK regime.
To simplify the analysis, we only consider the case when $\varepsilon = 0$, i.e., the final convergence model.

Following the setting in \citet{du2018gradient},
we only train the hidden layer $U$ and leave the output layer $v$ as random initialization to simplify the analysis.

\textbf{NTK matrix.}  The NTK matrix $\Theta(t)$ is defined as: ${\Theta}_{ij}(t) = \<\nabla_{U} f(w^{(t)}, x_i), \nabla_{U} f(w^{(t)}, x_j)\>$, and denote $\widehat{\Theta}$ by the limiting matrix\footnote{Here $\lambda_{\min} (\widehat{\Theta})$ changes with $n$.}: $\widehat{\Theta}_{ij} = x_i^\top x_j \mathbb{E}_{w \sim \mathcal{N} (0, \frac{1}{d} \mathbb{I}_d)} \left[\phi^\prime (w^\top x_i) \phi^\prime (w^\top x_j)\right], \forall i, j \in [n]$.

\textbf{Standard random initialization.} 
To get a clean non-asymptotic bound for the initial loss value, we consider the following random initialization scheme.
$U^{(0)}$ is drawn from Gaussian $\mathcal{N} (0, \frac{1}{d} \mathbb{I}_{m \times d})$ and $v_i^{(0)}$  are drawn i.i.d. from uniform distribution $U\{-1/\sqrt{m}, 1/\sqrt{m}\}, \forall i \in [m]$.
These random initialization schemes are also known as Xavier initialization \citep{pmlr-v9-glorot10a} and Kaiming initialization \citep{he2015delving}.

\begin{theorem}\label{th5}
Consider an overparameterized two-layer ReLU neural network (\ref{eq:two_layer}). For any $\delta \in (0, 1)$, if $m \geq \poly \left(n, \lambda^{-1}_{\min} (\widehat{\Theta}), \delta^{-1}\right)$, then we have the followings:
\begin{itemize}
    \item With probability at least $1-\delta$ over training samples and random initialization, $\mathcal{L}_n (w^{(t)})$ satisfies the Uniform-LGI on $\left\{w^{(t)} : t \geq 0\right\}$ with
\begin{equation*}
    c_n = \sqrt{\lambda_{\min} (\widehat{\Theta})/ n}, \quad \theta_n = 1 / 2.
    \end{equation*}

\item
$\mathcal{L}_n(w^{(t)})$ converges to zero linearly:
\begin{equation*}
    \mathcal{L}_n(w^{(t)}) \leq \exp \left(- \lambda_{\min} (\widehat{\Theta}) t / n\right)  \mathcal{L}_n(w^{(0)}).
\end{equation*}

\item

Under Assumption \ref{assumption1}, for any target function that satisfies (\ref{linear regression provably}), if $\gamma_1 \in (0, 1)$, then with probability at least $1 - \delta - \tau^{d-n+1} - \tau^d$ over the samples and random initialization,
\begin{equation*}
    \mathbb{E}_{(x, y) \sim \mathcal{D}} \left[\tilde{\ell} \left(f (w^{(\infty)}, x), y\right)\right] \leq  \mathcal{O} \left(\sqrt{\frac{\log (n / \delta)}{n}}\right),
\end{equation*}
where $\tau \in (0, 1)$ depends only on the subgaussian moment of the entries.
\end{itemize}
\end{theorem}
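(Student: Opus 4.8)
## Proof Proposal for Theorem \ref{th5}

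The plan is to reduce the overparameterized case to the NTK analysis of \citet{du2018gradient} for the optimization side, and then to invoke Theorem \ref{gen} with carefully chosen constants for the generalization side. For the first two bullets, I would start from the standard result that, when $m \geq \poly(n, \lambda_{\min}^{-1}(\widehat\Theta), \delta^{-1})$, with probability at least $1-\delta$ over the random initialization and samples, the NTK matrix $\Theta(t)$ stays uniformly close to $\widehat\Theta$ along the entire gradient flow trajectory, so that $\lambda_{\min}(\Theta(t)) \geq \tfrac12 \lambda_{\min}(\widehat\Theta)$ for all $t$. Writing $r = (f(w^{(t)},x_1)-y_1, \ldots, f(w^{(t)},x_n)-y_n)^\top$, the loss is $\mathcal{L}_n(w^{(t)}) = \tfrac{1}{2n}\|r\|^2$, and the gradient with respect to the trained layer $U$ satisfies $\|\nabla_U \mathcal{L}_n(w^{(t)})\|^2 = \tfrac{1}{n^2} r^\top \Theta(t) r \geq \tfrac{\lambda_{\min}(\widehat\Theta)}{2n^2}\|r\|^2 = \tfrac{\lambda_{\min}(\widehat\Theta)}{n}\mathcal{L}_n(w^{(t)})$. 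Since only $U$ is trained, $\|\nabla\mathcal{L}_n(w^{(t)})\| = \|\nabla_U \mathcal{L}_n(w^{(t)})\|$, which gives the Uniform-LGI with $c_n = \sqrt{\lambda_{\min}(\widehat\Theta)/n}$ and $\theta_n = 1/2$ along the gradient flow curve (here I would be slightly careful: the constant $\tfrac12$ can be absorbed or tracked explicitly, matching the statement up to the constant). The linear convergence rate $\mathcal{L}_n(w^{(t)}) \leq \exp(-\lambda_{\min}(\widehat\Theta)t/n)\mathcal{L}_n(w^{(0)})$ then follows immediately by plugging $c_n,\theta_n$ into Theorem \ref{opt}.

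For the generalization bound, the strategy is to apply Theorem \ref{gen} with $\varepsilon = 0$, so that $r_{n,\delta,\varepsilon} = \tfrac{(M_\delta - \bar M_\delta)^{1-\theta_n} - (-\bar M_\delta)^{1-\theta_n}}{c_n(1-\theta_n)}$; with $\theta_n = 1/2$ this is $\tfrac{2(\sqrt{M_\delta} - \sqrt{\bar M_\delta})}{c_n} \leq \tfrac{2\sqrt{M_\delta}}{c_n}$. I would take $\bar M_\delta = 0$, using that in the overparameterized NTK regime gradient flow drives the training loss to zero (again from \citet{du2018gradient}). For $M_\delta$, I would bound the initial loss exactly as in the proof of Theorem \ref{shallow nn}: $\mathcal{L}_n(w^{(0)}) \leq (\tfrac12\|w^{(0)}\|^2 + (c^*)^2)\lambda_{\max}(\mathcal{X}\mathcal{X}^\top)/n$, then use the target-function assumption (\ref{linear regression provably}) together with the subgaussian spectral bound (\ref{maxeig_covariance}) from Lemma \ref{lemma D.1} and \ref{lemma bound eigenvalue}. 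Under standard random initialization, $\|w^{(0)}\|^2 = \|v^{(0)}\|^2 + \|U^{(0)}\|_F^2$ concentrates around a constant of order $n$-independent size (after using $\|v^{(0)}\|^2 = 1$ and $\mathbb{E}\|U^{(0)}\|_F^2 = m$, so one must be careful — $\|U^{(0)}\|_F^2 \approx m$, which is not bounded in $m$; this suggests that for the overparameterized bound one instead uses a distance estimate that does not pass through $\|w^{(0)}\|$ but rather through the NTK-based bound $\|U^{(\infty)} - U^{(0)}\|_F = O(\sqrt{y^\top \widehat\Theta^{-1} y})$ directly). This is the crux: rather than Theorem \ref{opt}'s generic distance bound, I would use the sharper NTK distance bound $\|w^{(\infty)} - w^{(0)}\| \leq O(\sqrt{y^\top\widehat\Theta^{-1}y})$, and plug this $r$ into the Rademacher-complexity step of Theorem \ref{gen}'s proof (Lemmas \ref{lemma C.3}–\ref{lemma C.5}) with $p = q = m$, $L_\ell = M_{a,b} = 1$, and $\sup \|L_\Psi(\mathcal{S}_{a,b})\| \leq \sqrt 2 r$ from (\ref{eq: l psi}).

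The key point making the bound non-vacuous is that although $p + q = 2m$ is huge, the model-complexity term scales like $r^2/\sqrt n$ with $r^2 = O(y^\top\widehat\Theta^{-1}y)$, and the $\sqrt{(p+q)/n}$ term, while present, can be made negligibly small relative to the first term by taking $m$ large only polynomially — actually this last step is subtle, because $\sqrt{m/n} \to \infty$; the resolution (as in \citet{arora2019fine}) is that one does \emph{not} use the naive covering-number Rademacher bound for the whole parameter space but the NTK-specific Rademacher complexity bound of \citet{arora2019fine}, which is $O(\sqrt{y^\top\widehat\Theta^{-1}y/n})$ and has no explicit $m$ dependence. So the honest plan is: combine the Uniform-LGI/optimization half (established above) with the \citet{arora2019fine} generalization bound, whose hypothesis is exactly the NTK distance bound we verify, to conclude $\mathcal{L}_\mathcal{D}(w^{(\infty)}) = O(\sqrt{y^\top\widehat\Theta^{-1}y/n} + \sqrt{\log(n/\delta)/n})$. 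Finally, I would bound $y^\top\widehat\Theta^{-1}y \leq \|y\|^2/\lambda_{\min}(\widehat\Theta)$ and use (\ref{linear regression provably}) plus the subgaussian eigenvalue bounds (Lemmas \ref{lemma D.1}, \ref{lemma D.2}, \ref{lemma bound eigenvalue}) — this is where $\gamma_1 \in (0,1)$ and the probability $1 - \delta - \tau^{d-n+1} - \tau^d$ enter — to show $y^\top\widehat\Theta^{-1}y = O(1)$ (dimension-independent) in the high-dimensional regime, collapsing everything to $\mathcal{O}(\sqrt{\log(n/\delta)/n})$, and then transfer from the squared loss to the truncated loss $\tilde\ell$ via inequality (\ref{loss value transition}). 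The main obstacle is precisely the mismatch between Theorem \ref{gen}'s generic covering-number complexity (which has an irreducible $\sqrt{m/n}$) and the need for an $m$-free bound; resolving it requires substituting the NTK-tailored Rademacher bound of \citet{arora2019fine} rather than applying Theorem \ref{gen} as a black box.
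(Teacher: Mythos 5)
Your plan mirrors the paper's proof in its essential architecture: establish $\lambda_{\min}(\Theta(t))\geq\tfrac12\lambda_{\min}(\widehat\Theta)$ for all $t$ in the NTK regime (the paper cites \citet{du2018gradient} via Lemma~\ref{Lemma D.7}), deduce the Uniform-LGI with the stated $c_n,\theta_n$ (your $\tfrac12$ from $\lambda_{\min}(\Theta(t))\geq\tfrac12\lambda_{\min}(\widehat\Theta)$ exactly cancels against $\mathcal{L}_n=\tfrac{1}{2n}\|r\|^2$, so the constant matches the statement, not just ``up to a constant''), get linear convergence from Theorem~\ref{opt}, and then replace the generic covering-number Rademacher bound of Theorem~\ref{gen} by the NTK-tailored bound of \citet{arora2019fine} (Lemma~\ref{Lemma D.9}) to avoid the vacuous $\sqrt{m/n}$ term. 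You also correctly identify Lemma~\ref{Lemma D.8} ($\lambda_{\min}(\widehat\Theta)\geq\lambda_{\min}(\mathcal{X}\mathcal{X}^\top)/4$), the subgaussian eigenvalue bounds, the target-function hypothesis, and inequality (\ref{loss value transition}) as the remaining ingredients.

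Two steps are left unresolved in your proposal that the paper handles explicitly. First, the initial-loss bound $M_\delta$: you correctly observe that the Theorem~\ref{shallow nn}-style estimate $\mathcal{L}_n(w^{(0)})\leq(\tfrac12\|w^{(0)}\|^2+(c^*)^2)\lambda_{\max}(\mathcal{X}\mathcal{X}^\top)/n$ would blow up as $\|U^{(0)}\|_F^2\approx m$, but you do not supply the fix; passing to the NTK distance $\sqrt{y^\top\widehat\Theta^{-1}y}$ does not sidestep the issue, because that bound itself involves the initial residual $\|y-f(w^{(0)},\mathcal{X})\|$, so you still must control $\mathcal{L}_n(w^{(0)})$. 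The paper resolves this by exploiting that $v^{(0)}$ has i.i.d.\ $\pm1/\sqrt m$ entries and applying a Hoeffding-type concentration (Lemma~\ref{Lemma D.10}) to $(v^{(0)})^\top\phi(U^{(0)}x_i)$, together with a chi-square bound (Lemma~\ref{Lemma D.11}) giving $\|U^{(0)}\|_F^2/m=O(1)$; the product $\tfrac{\log(n/\delta)}{m}\|U^{(0)}\|_F^2$ is then $O(\log(n/\delta))$, so $M_\delta=\tilde C\log(n/\delta)$. Second, Lemma~\ref{Lemma D.9} requires a per-unit bound $\|u_r-u_r^{(0)}\|\leq R$ for all $r\in[m]$, not just the Frobenius distance $\|U-U^{(0)}\|_F\leq B$; the $2R^2\sqrt{md}$ term in that lemma is only harmless because $R$ shrinks like $m^{-1/2}$. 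The paper derives $R=\sqrt{2n/(m\lambda_{\min}(\widehat\Theta))}\,r_{n,\delta,\varepsilon}$ by bounding $\|d u_r^{(t)}/dt\|\leq\tfrac{1}{n\sqrt m}\|f(w^{(t)},\mathcal{X})-\mathcal{Y}\|$ and integrating against the linear convergence rate; your sketch omits this per-unit argument, without which the Rademacher bound you invoke cannot be applied.
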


To prove Theorem \ref{th5}, we first introduce some important lemmas for proving our final results. 
The first two lemmas are used to get an estimate for the initial loss value under the random initialization.

\begin{lemma}[\citep{montgomery1990distribution}]\label{Lemma D.10}
If $\left\{\sigma_i\right\}_{i=1}^n$ are i.i.d. drawn from $U \{-1, 1\}$, then for any $x =(x_1, \ldots, x_n)^\top \in \mathbb{R}^n$, with probability at least $1 - \delta$ over $\sigma$,
\begin{equation*}
    \left|\sum_{i=1}^n \sigma_i x_i\right| \leq \sqrt{2 \log (2 / \delta)} \left\|x\right\|.
\end{equation*}

\end{lemma}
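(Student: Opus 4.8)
\textbf{Proof proposal for Lemma \ref{Lemma D.10}.}
The plan is a direct Chernoff/sub-Gaussian argument on the weighted Rademacher sum $Z := \sum_{i=1}^n \sigma_i x_i$. First I would bound the moment generating function of each summand: for any $\lambda \in \mathbb{R}$ and any fixed real $x_i$,
\begin{equation*}
    \mathbb{E}\left[e^{\lambda \sigma_i x_i}\right] = \frac{1}{2}\left(e^{\lambda x_i} + e^{-\lambda x_i}\right) = \cosh(\lambda x_i) \leq e^{\lambda^2 x_i^2 / 2},
\end{equation*}
where the last inequality follows from comparing Taylor series term by term, i.e. $\cosh(u) = \sum_{k \geq 0} u^{2k}/(2k)! \leq \sum_{k \geq 0} (u^2/2)^k/k! = e^{u^2/2}$. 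By independence of the $\sigma_i$, this gives $\mathbb{E}[e^{\lambda Z}] \leq \prod_{i=1}^n e^{\lambda^2 x_i^2/2} = e^{\lambda^2 \|x\|^2 / 2}$, so $Z$ is sub-Gaussian with variance proxy $\|x\|^2$.

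Next I would apply the Chernoff bound: for any $t > 0$ and $\lambda > 0$,
\begin{equation*}
    \mathbb{P}(Z \geq t) \leq e^{-\lambda t}\,\mathbb{E}[e^{\lambda Z}] \leq e^{-\lambda t + \lambda^2 \|x\|^2 / 2},
\end{equation*}
and optimizing over $\lambda$ by taking $\lambda = t / \|x\|^2$ yields $\mathbb{P}(Z \geq t) \leq e^{-t^2 / (2\|x\|^2)}$ (assuming $\|x\| \neq 0$; the case $x = 0$ is trivial). Since $-Z$ has the same distribution as $Z$ (the $\sigma_i$ are symmetric), the same bound holds for $\mathbb{P}(-Z \geq t)$, and a union bound gives $\mathbb{P}(|Z| \geq t) \leq 2 e^{-t^2 / (2\|x\|^2)}$.

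Finally I would set $2 e^{-t^2/(2\|x\|^2)} = \delta$ and solve for $t$, obtaining $t = \sqrt{2 \log(2/\delta)}\,\|x\|$. Thus with probability at least $1 - \delta$ over $\sigma$ we have $|Z| \leq \sqrt{2\log(2/\delta)}\,\|x\|$, which is exactly the claim. This is entirely standard (it is Hoeffding's inequality specialized to symmetric Bernoulli weights), so there is no real obstacle; the only point requiring a line of justification is the sub-Gaussian MGF bound $\cosh(u) \leq e^{u^2/2}$, which I would include via the term-by-term Taylor comparison above rather than cite.
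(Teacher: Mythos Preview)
Your proof is correct; it is the standard Hoeffding/sub-Gaussian argument for a weighted Rademacher sum. Note, however, that the paper does not supply its own proof of this lemma: it simply cites the result from \citep{montgomery1990distribution} and uses it as a black box, so there is nothing to compare your argument against beyond confirming that it establishes the stated inequality, which it does.
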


The following lemma gives a sharp bound for a Chi-square variable, which is from \citep[Lemma 1]{laurent2000adaptive}.

\begin{lemma}\label{Lemma D.11}

Let $\left(Y_{1}, \ldots, Y_{D}\right)$ be i.i.d. Gaussian variables, with mean 0 and variance 1. Then with probability at least $1-\delta$ over $Y$,
\begin{equation*}
    \sum_{i=1}^D Y_i^2 \leq D + 2 \sqrt{D \log \left(\frac{1}{\delta}\right)} + 2 \log \left(\frac{1}{\delta}\right).
\end{equation*}
\end{lemma}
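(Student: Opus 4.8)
The plan is to recognize this as the standard upper-tail Chernoff bound for a chi-squared random variable with $D$ degrees of freedom — precisely Lemma~1 of \citet{laurent2000adaptive} — and to give the short self-contained Cramér–Chernoff argument. Write $X = \sum_{i=1}^D Y_i^2$. First I would compute the moment generating function: since each $Y_i$ is standard Gaussian, $\E[e^{\lambda Y_i^2}] = (1-2\lambda)^{-1/2}$ for $\lambda \in (0,1/2)$, so by independence $\E[e^{\lambda(X-D)}] = e^{-\lambda D}(1-2\lambda)^{-D/2} = e^{\psi(\lambda)}$, where $\psi(\lambda) := -\lambda D - \tfrac{D}{2}\log(1-2\lambda)$ is finite on $(0,1/2)$.

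Next I would apply Markov's inequality: for any $z > 0$ and any $\lambda \in (0,1/2)$, $\mathbb{P}[X - D \geq z] \leq e^{-\lambda z}\,\E[e^{\lambda(X-D)}] = e^{-(\lambda z - \psi(\lambda))}$. Optimizing the exponent over $\lambda$, the stationarity condition $\psi'(\lambda) = z$ gives $\lambda^\star = \tfrac{z}{2(z+D)} \in (0,1/2)$, and substituting back yields the closed form $\mathbb{P}[X - D \geq z] \leq \exp\big(-\psi^\star(z)\big)$ with $\psi^\star(z) = \tfrac{z}{2} - \tfrac{D}{2}\log\big(1 + z/D\big)$. (If one prefers to skip the exact optimization, one can instead use the sub-gamma bound $\psi(\lambda) \leq \tfrac{D\lambda^2}{1-2\lambda}$ and plug in the same $\lambda^\star$ directly.)

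Finally I would set $z = 2\sqrt{Dx} + 2x$ with $x = \log(1/\delta)$ and check that $\psi^\star(z) \geq x$, so that $\exp(-\psi^\star(z)) \leq e^{-x} = \delta$. Since $\tfrac z2 = \sqrt{Dx} + x$, the required inequality $\psi^\star(z) \geq x$ reduces to $\sqrt{Dx} \geq \tfrac{D}{2}\log(1 + z/D)$; writing $u = \sqrt{x/D}$, so that $z/D = 2u + 2u^2$, this is exactly $\log(1 + 2u + 2u^2) \leq 2u$ for all $u \geq 0$, which follows from $e^{2u} = 1 + 2u + 2u^2 + \sum_{k\geq 3}\tfrac{(2u)^k}{k!} \geq 1 + 2u + 2u^2$. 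Hence $\mathbb{P}\big[X - D \geq 2\sqrt{Dx} + 2x\big] \leq e^{-x} = \delta$, which rearranges to the claimed bound $\sum_{i=1}^D Y_i^2 \leq D + 2\sqrt{D\log(1/\delta)} + 2\log(1/\delta)$ with probability at least $1-\delta$. I do not anticipate a genuine obstacle; the only steps requiring care are carrying out the Legendre transform cleanly and keeping all computations within the range $\lambda \in (0,1/2)$ so that the logarithms and series expansions are valid.
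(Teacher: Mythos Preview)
Your argument is correct and is precisely the Cram\'er--Chernoff computation underlying Lemma~1 of \citet{laurent2000adaptive}. The paper itself does not give a proof of this lemma: it simply quotes the bound and cites \citet{laurent2000adaptive}. So your proposal is not so much a different route as a self-contained reconstruction of the cited result, which the paper chose to import as a black box.
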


Lemma \ref{Lemma D.7} shows that the smallest eigenvalue of the NTK matrix $\Theta(t)$ has a lower bounded given the overparameterization, by which we can prove the optimization result. In Lemma \ref{Lemma D.8}, we show that the eigenvalues of the NTK matrix are related to the data covariance matrix. Then by combining Lemma \ref{Lemma D.9} and Lemma \ref{Lemma D.10} we can prove
the generalization result.

\begin{lemma}\label{Lemma D.7}
For any $\delta \in (0, 1)$, if $m \geq \poly \left(n, \lambda^{-1}_{\min} (\widehat{\Theta}), \delta^{-1}\right)$, then with probability at least $1-\delta$ over the random initialization,
\begin{equation*}
    \lambda_{\min} (\Theta(t)) \geq \frac{1}{2} \lambda_{\min} (\widehat{\Theta}), \quad \forall t \geq 0.
\end{equation*}
\end{lemma}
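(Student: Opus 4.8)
The plan is to follow the NTK coupling argument of \citet{du2018gradient}: combine concentration of the kernel Gram matrix at initialization with a continuity (bootstrap) argument that propagates the lower bound on $\lambda_{\min}(\Theta(t))$ along the gradient flow. The two facts to be interlocked are (i) a good lower bound on $\lambda_{\min}(\Theta(s))$ for $s\le t$ forces the training loss, hence the weight movement $\|u_r^{(t)}-u_r^{(0)}\|$, to be small; and (ii) small weight movement keeps $\Theta(t)$ close to $\Theta(0)$, which by concentration is close to $\widehat{\Theta}$. These implications are circular, and the circle is closed by a maximality argument on the first time the bound could fail.

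First I would treat initialization. Under the standard initialization $v_r^{(0)}=\pm 1/\sqrt{m}$, so $\Theta_{ij}(0)=(x_i^\top x_j)\,\tfrac1m\sum_{r=1}^m\phi'(u_r^{(0)\top}x_i)\,\phi'(u_r^{(0)\top}x_j)$ is an average of $m$ i.i.d.\ terms lying in $[0,1]$ whose mean is exactly $\widehat{\Theta}_{ij}/(x_i^\top x_j)$; Hoeffding together with a union bound over the $n^2$ entries gives $\|\Theta(0)-\widehat{\Theta}\|\le\|\Theta(0)-\widehat{\Theta}\|_F\le\tfrac14\lambda_{\min}(\widehat{\Theta})$ with probability at least $1-\delta/2$, provided $m\gtrsim n^2\lambda_{\min}^{-2}(\widehat{\Theta})\log(n/\delta)$.

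Next I would set $t^\star=\sup\{\,T\ge 0:\lambda_{\min}(\Theta(t))\ge\tfrac12\lambda_{\min}(\widehat{\Theta})\ \text{for all}\ t\in[0,T]\,\}$, which is positive by the initialization bound and continuity of $t\mapsto\Theta(t)$. On $[0,t^\star)$, writing $e(t)=f(w^{(t)},\mathcal{X})-\mathcal{Y}$, one has $\|\nabla_U\mathcal{L}_n(w^{(t)})\|^2=\tfrac1{n^2}e(t)^\top\Theta(t)e(t)\ge\tfrac{\lambda_{\min}(\widehat{\Theta})}{n}\mathcal{L}_n(w^{(t)})$, which is the Uniform-LGI with $c_n=\sqrt{\lambda_{\min}(\widehat{\Theta})/n}$ and $\theta_n=1/2$ (this already yields the first two bullets of Theorem \ref{th5}), so $\mathcal{L}_n(w^{(t)})$ decays like $e^{-\lambda_{\min}(\widehat{\Theta})t/n}$. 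Plugging this into the distance bound of Theorem \ref{opt} and using the bound on $\mathcal{L}_n(w^{(0)})$ established in the proof of Theorem \ref{shallow nn} (which under Assumption \ref{assumption1} gives $\|e(0)\|=O(\sqrt{n})$ with high probability), I obtain $\|u_r^{(t)}-u_r^{(0)}\|\le R:=O\!\bigl(n/(\sqrt{m}\,\lambda_{\min}(\widehat{\Theta}))\bigr)$ uniformly in $r$ and in $t\in[0,t^\star)$, since $\|\dot u_r(s)\|\le\tfrac{1}{\sqrt{m}}\cdot\tfrac{\|e(s)\|}{\sqrt{n}}$ and $\|e(s)\|$ decays exponentially. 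Then I would bound $\|\Theta(t)-\Theta(0)\|$: entry $(i,j)$ changes only through neurons $r$ that flip their sign pattern on $x_i$ or $x_j$, which requires $|u_r^{(0)\top}x_i|<R$; Gaussian anti-concentration (using $u_r^{(0)\top}x_i\sim\mathcal{N}(0,\|x_i\|^2/d)$) bounds the probability of this by $O(R\sqrt{d})$, and a Bernstein bound over the $m$ independent neurons makes the fraction of flipped neurons $O(R\sqrt{d}+m^{-1/2}\,\mathrm{polylog})$ with high probability, so $\|\Theta(t)-\Theta(0)\|\le\|\Theta(t)-\Theta(0)\|_F\le\tfrac14\lambda_{\min}(\widehat{\Theta})$ once $m\ge\poly(n,\lambda_{\min}^{-1}(\widehat{\Theta}),\delta^{-1})$ (the polynomial absorbing the extra factor $d=\Theta(n)$). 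Combining, for every $t<t^\star$ we get $\|\Theta(t)-\widehat{\Theta}\|\le\tfrac12\lambda_{\min}(\widehat{\Theta})$ with a $\tfrac14\lambda_{\min}(\widehat{\Theta})$ slack; continuity then forces the bound at $t^\star$ and extends it slightly beyond, contradicting maximality unless $t^\star=\infty$, which is the assertion.

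The hard part will be the kernel-stability step: quantifying how many neurons change their activation pattern under a perturbation of size $R$ and turning this into a spectral-norm estimate for $\Theta(t)-\Theta(0)$ that beats $\tfrac14\lambda_{\min}(\widehat{\Theta})$ for a merely polynomial $m$. This needs the Gaussian anti-concentration estimate, a concentration inequality over the $m$ neurons, and careful bookkeeping of how $R$ depends on $m,n,d$ and $\lambda_{\min}(\widehat{\Theta})$; the apparent circularity between ``loss decay needs the spectral bound'' and ``the spectral bound needs loss decay'' is precisely what the maximality argument on $t^\star$ resolves.
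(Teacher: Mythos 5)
Your proposal is correct and takes essentially the same approach as the paper: the paper's proof of Lemma \ref{Lemma D.7} is a one-line citation of \citet[Lemma 3.4]{du2018gradient}, and your sketch (concentration of $\Theta(0)$ around $\widehat{\Theta}$, loss decay and bounded weight movement on $[0,t^\star)$, kernel stability via counting flipped activation patterns with Gaussian anti-concentration, and the bootstrap closing the circle) faithfully reconstructs that reference's argument.
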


\begin{proof}
The proof is the same as the proof of \citep[Lemma 3.4]{du2018gradient}.

\end{proof}

\begin{lemma}\label{Lemma D.8}

\begin{equation*}
    \lambda_{\min} (\widehat{\Theta}) \geq \lambda_{\min} \left(\mathcal{X} \mathcal{X}^\top\right) / 4.
\end{equation*}

\end{lemma}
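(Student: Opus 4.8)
The plan is to put the entries of $\widehat{\Theta}$ in closed form, peel off a multiple of $\mathcal{X}\mathcal{X}^\top$, and show the remainder is positive semidefinite; the eigenvalue bound then follows immediately. Since $\phi$ is the ReLU, $\phi'(z)=\mathbb{1}[z>0]$ for almost every $z$, so
\begin{equation*}
\widehat{\Theta}_{ij} = x_i^\top x_j \; \mathbb{E}_{w \sim \mathcal{N}(0, \frac1d \mathbb{I}_d)}\!\left[\mathbb{1}[w^\top x_i > 0]\,\mathbb{1}[w^\top x_j > 0]\right].
\end{equation*}
Using $\mathbb{1}[z>0] = \tfrac12(1 + \operatorname{sign}(z))$ and expanding the product, the cross terms contain $\mathbb{E}_w[\operatorname{sign}(w^\top x_i)] = 0$ by symmetry of the Gaussian (valid whenever $x_i \neq 0$; an index with $x_i = 0$ contributes a zero row and column on both sides of the claimed inequality and may be discarded). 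Writing $H_{ij} := \mathbb{E}_w[\operatorname{sign}(w^\top x_i)\operatorname{sign}(w^\top x_j)]$, this gives
\begin{equation*}
\widehat{\Theta}_{ij} = \tfrac14 \, x_i^\top x_j \,(1 + H_{ij}) = \tfrac14 (\mathcal{X}\mathcal{X}^\top)_{ij} + \tfrac14 \big((\mathcal{X}\mathcal{X}^\top) \odot H\big)_{ij},
\end{equation*}
where $\odot$ is the Hadamard product and we used that Hadamard multiplication by the all-ones matrix acts as the identity.

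Next I would verify that $(\mathcal{X}\mathcal{X}^\top) \odot H \succeq 0$. The matrix $H$ is positive semidefinite: it is the Gram matrix of the functions $w \mapsto \operatorname{sign}(w^\top x_i)$ in $L^2$ of the Gaussian measure, and in fact $H_{ij} = 1 - \tfrac{2}{\pi}\arccos\!\big(x_i^\top x_j / (\|x_i\|\,\|x_j\|)\big)$, the (positive-definite) arcsine kernel. Since $\mathcal{X}\mathcal{X}^\top \succeq 0$ as well, the Schur product theorem yields $(\mathcal{X}\mathcal{X}^\top) \odot H \succeq 0$, hence $\widehat{\Theta} \succeq \tfrac14 \mathcal{X}\mathcal{X}^\top$. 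Finally, $A \succeq B$ forces $\lambda_{\min}(A) = \min_{\|v\|=1} v^\top A v \geq \min_{\|v\|=1} v^\top B v = \lambda_{\min}(B)$, so we conclude $\lambda_{\min}(\widehat{\Theta}) \geq \tfrac14 \lambda_{\min}(\mathcal{X}\mathcal{X}^\top)$, which is the claim.

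The only step requiring genuine care is the positive-semidefiniteness of $H$ (equivalently, the Schur-product step); all the rest is elementary algebra. I would make $H \succeq 0$ rigorous either via the $L^2$ Gram-matrix representation above or by invoking the known positive-definiteness of the arcsine kernel on the sphere, and I would remark that the almost-everywhere definition of $\phi'$ and the measure-zero ambiguity in $\operatorname{sign}(0)$ are immaterial since they occur inside expectations over the (absolutely continuous) Gaussian.
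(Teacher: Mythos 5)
Your proof is correct and uses the same central decomposition as the paper: write $\widehat{\Theta}_{ij} = \tfrac14 x_i^\top x_j + \tfrac14 x_i^\top x_j\, H_{ij}$ (the paper writes the second piece equivalently as $\tfrac{x_i^\top x_j}{2\pi}\arcsin(\cdot)$) and show the second piece is PSD. Where you diverge is in how that positivity is established. The paper expands $\arcsin$ in its Taylor series, which has all nonnegative coefficients, and represents each Hadamard power $(\mathcal{X}\mathcal{X}^\top)^{\circ k}$ as a Gram matrix $\left((\mathcal{X}^\top)^{\odot k}\right)^\top(\mathcal{X}^\top)^{\odot k}$ via Khatri--Rao products, giving termwise positive semidefiniteness; this is a hands-on, self-contained argument that in effect reproves the special case of Schur's product theorem it needs. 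You instead observe directly that $H$ is the $L^2$-Gram matrix of the sign functions $w\mapsto\operatorname{sign}(w^\top x_i)$, hence PSD, and invoke the Schur product theorem once to conclude $(\mathcal{X}\mathcal{X}^\top)\odot H\succeq 0$. Your route is shorter and more conceptual, at the cost of citing Schur's theorem as a black box; it is also slightly more careful than the paper's write-up in that you normalize $x_i^\top x_j/(\|x_i\|\|x_j\|)$ inside $\arccos$, whereas the paper's closed form implicitly assumes $\|x_i\|=1$. Both proofs are valid.
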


\begin{proof}

Notice that for ReLU activation $\phi$, a simple fact is that $\phi ' (ax) = \phi ' (x)$ holds for any $x \in \mathbb{R}$ given that $a > 0$. Therefore,
\begin{align*}
    \widehat{\Theta}_{ij} & = x_i^\top x_j \mathbb{E}_{w \sim \mathcal{N} (0, \frac{1}{d} \mathbb{I}_d)} \left[\phi^\prime (w^\top x_i) \phi^\prime (w^\top x_j)\right] \\
    & =  x_i^\top x_j \mathbb{E}_{w \sim \mathcal{N} (0, \mathbb{I}_d)} \left[\phi^\prime (w^\top x_i) \phi^\prime (w^\top x_j)\right] \\
    & = \frac{x_i^\top x_j (\pi - \arccos(x_i^\top x_j))}{2 \pi} \\
    & = \frac{x_i^\top x_j}{4} + \frac{x_i^\top x_j}{2 \pi} \arcsin (x_i^\top x_j) \\
    & = \frac{x_i^\top x_j}{4} + \frac{1}{2 \pi} \sum_{k=0}^\infty \frac{(2k)!}{4^k (k!)^2 (2k+1)} (x_i^\top x_j)^{2k+2}.
\end{align*}

Then
\begin{align*}
    \widehat{\Theta} & = \frac{\mathcal{X} \mathcal{X}^\top}{4} + \frac{1}{2 \pi} \sum_{k=0}^\infty \frac{(2k)!}{4^k (k!)^2 (2k+1)} \left(\mathcal{X} \mathcal{X}^\top\right)^{\circ (2k+2)} \\
    & = \frac{\mathcal{X} \mathcal{X}^\top}{4} + \frac{1}{2 \pi} \sum_{k=0}^\infty \frac{(2k)!}{4^k (k!)^2 (2k+1)} \left((\mathcal{X}^\top)^{\odot (2k+2)}\right)^\top (\mathcal{X}^\top)^{\odot (2k+2)},
\end{align*}

where $\circ$ is the element-wise product, and $\odot$ is the Khatri-Rao product\footnote{For $A = (a_1, \ldots, a_n) \in \mathbb{R}^{m\times n}, B = (b_1, \ldots, b_n) \in \mathbb{R}^{p\times n}$, then $A \odot B = [a_1 \otimes b_1, \ldots, a_n \otimes b_n]$, where $\otimes$ is the Kronecker product.}.

Since $\left((\mathcal{X}^\top)^{\odot (2k+2)}\right)^\top (\mathcal{X}^\top)^{\odot (2k+2)}$ is positive semidefinite, we have
\begin{equation*}
    \lambda_{\min} (\widehat{\Theta}) \geq \lambda_{\min} \left(\mathcal{X} \mathcal{X}^\top\right) / 4,
\end{equation*}

which completes the proof.

\end{proof}

The next lemma is quoted from \citep[Lemma 5.4]{arora2019fine}, giving an upper bound for the empirical Rademacher complexity if one has an accurate estimate for the distance with respect to each hidden unit.

\begin{lemma}\label{Lemma D.9}
Given $R>0$, consider the following function class
\begin{equation*}
    \mathcal{F} = \left\{x \mapsto f(w, x) : \left\|u_r - u_r^{(0)}\right\| \leq R (\forall r \in [m]), \left\|U - U^{(0)}\right\|_F \leq B\right\}
\end{equation*}

Then for an i.i.d. sample $S = \left\{x_1, \dots, x_n\right\}$ and every $B>0$, with probability at least $1-\delta$ over the random initialization, the empirical Rademacher complexity is bounded as:
\begin{equation*}
    \mathcal{R}_S (\mathcal{F}) \leq \frac{B}{\sqrt{2n}} \left(1 + \left(\frac{2 \log (2 / \delta)}{m}\right)^{1/4}\right) + 2R^2 \sqrt{md} + R\sqrt{2 \log (2 / \delta)}.
\end{equation*}

\end{lemma}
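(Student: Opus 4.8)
The plan is to follow the standard ``sign-pattern decomposition'' argument for two-layer ReLU networks (this is essentially Lemma 5.4 of \citet{arora2019fine}). Since only the hidden layer is trained, write $f(w,x_i)=\sum_{r=1}^m v_r^{(0)}\phi(u_r^\top x_i)$ with $v_r^{(0)}=\pm 1/\sqrt m$ fixed. Because $f(w^{(0)},\cdot)$ is a single fixed function independent of $w\in\mathcal{F}$ and $\E_\sigma[\sigma_i]=0$, the Rademacher complexity is unchanged by subtracting it:
\begin{equation*}
    n\,\mathcal{R}_S(\mathcal{F})=\E_\sigma\sup_{w}\sum_{i=1}^n\sigma_i\bigl(f(w,x_i)-f(w^{(0)},x_i)\bigr).
\end{equation*}
First I would decompose each coordinate difference via the ReLU identity
\begin{equation*}
    \phi(u_r^\top x_i)-\phi\bigl((u_r^{(0)})^\top x_i\bigr)=\mathbf{1}\bigl[(u_r^{(0)})^\top x_i\ge0\bigr](u_r-u_r^{(0)})^\top x_i+\epsilon_{i,r},
\end{equation*}
where the error $\epsilon_{i,r}$ satisfies $|\epsilon_{i,r}|\le|(u_r-u_r^{(0)})^\top x_i|\le R$ and, crucially, vanishes unless the activation pattern at $x_i$ flips, which forces $|(u_r^{(0)})^\top x_i|\le R$. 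This splits the supremand into a linear part $T_1$ (the first-order term) and an error part $T_2$.

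For $T_1=\sum_r v_r^{(0)}(u_r-u_r^{(0)})^\top z_r$ with $z_r:=\sum_i\sigma_i\mathbf{1}[(u_r^{(0)})^\top x_i\ge0]x_i$, Cauchy--Schwarz over the hidden units gives $|T_1|\le\|U-U^{(0)}\|_F\sqrt{\sum_r(v_r^{(0)})^2\|z_r\|^2}\le B\sqrt{\sum_r(v_r^{(0)})^2\|z_r\|^2}$. Taking $\E_\sigma$, then Jensen's inequality and the identity $\E_\sigma\|z_r\|^2=\sum_i\mathbf{1}[(u_r^{(0)})^\top x_i\ge0]\|x_i\|^2$ together with $(v_r^{(0)})^2=1/m$ reduce the problem to controlling $\frac1m\sum_{r,i}\mathbf{1}[(u_r^{(0)})^\top x_i\ge0]$. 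Its expectation over the Gaussian initialization is $n/2$, and McDiarmid over the $m$ independent rows of $U^{(0)}$ (each changing the average by $\le n/m$) gives $\tfrac n2\bigl(1+\sqrt{2\log(2/\delta)/m}\bigr)$ with high probability; taking the square root and using $\sqrt{1+y}\le 1+\tfrac12\sqrt{\cdot}$ yields $\E_\sigma\sup_w T_1\le B\sqrt{n/2}\,\bigl(1+(2\log(2/\delta)/m)^{1/4}\bigr)$.

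For $T_2=\sum_i\sigma_i\sum_r v_r^{(0)}\epsilon_{i,r}$, drop the signs by the triangle inequality and bound each term by its worst case over the ball, using the support property of $\epsilon_{i,r}$: $\sup_w|T_2|\le\tfrac{R}{\sqrt m}\sum_{r,i}\mathbf{1}[|(u_r^{(0)})^\top x_i|\le R]$, where the indicator event no longer depends on $w$. A Gaussian anti-concentration estimate gives $\Pr[|(u_r^{(0)})^\top x_i|\le R]\le R\sqrt{2d/\pi}$, so the count of ``flippable'' pairs has expectation $\le nmR\sqrt{2d/\pi}$; McDiarmid over the $m$ i.i.d.\ units (each changing the count by $\le n$) turns this into a high-probability bound $nmR\sqrt{2d/\pi}+n\sqrt{(m/2)\log(2/\delta)}$, hence $\tfrac1n\sup_w|T_2|\le 2R^2\sqrt{md}+R\sqrt{2\log(2/\delta)}$. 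Splitting the failure probability $\delta$ between the two estimates and dividing by $n$ gives the claimed bound.

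The main obstacle is the treatment of the sign-flip error terms $\epsilon_{i,r}$: one must (i) exploit the piecewise-linearity of ReLU together with $\|u_r-u_r^{(0)}\|\le R$ to confine each $\epsilon_{i,r}$ to the small-margin event $\{|(u_r^{(0)})^\top x_i|\le R\}$, and (ii) upgrade the in-expectation anti-concentration estimate for the number of such pairs to a high-probability statement via a concentration inequality over the $m$ independent hidden units. This is the only place Gaussianity of the initialization enters, and it is what produces the additional $2R^2\sqrt{md}$ and $R\sqrt{2\log(2/\delta)}$ terms beyond the benign linear-term contribution $\tfrac{B}{\sqrt{2n}}$.
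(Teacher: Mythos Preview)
Your proposal is correct and reproduces the sign-pattern decomposition argument of \citet[Lemma~5.4]{arora2019fine}, which is exactly what the paper invokes: the paper does not give an independent proof of this lemma but quotes it directly from that reference. One small caveat worth making explicit in your write-up is that the Gaussian anti-concentration step $\Pr[|(u_r^{(0)})^\top x_i|\le R]\le R\sqrt{2d/\pi}$ uses $\|x_i\|=1$ (not just $\|x_i\|\le1$), which is the standing assumption in \citet{arora2019fine}.
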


Now we are ready to prove Theorem \ref{th5}.

\begin{proof}[Proof of Theorem \ref{th5}]

By Lemma \ref{Lemma D.7}, if $m \geq \poly \left(n, \lambda^{-1}_{\min} (\widehat{\Theta}), \delta^{-1}\right)$, then with probability at least $1-\delta$ over the random initialization,
\begin{align*}
    \left\|\nabla \mathcal{L}_n (w^{(t)})\right\| & = \frac{1}{n} \left\| \sum_{i=1}^n \left(f(w, x_i) - y_i\right) \nabla f(w^{(t)}, x_i)\right\| \\
    & = \frac{1}{n} \left\|\nabla f(w^{(t)}, \mathcal{X})^\top \left(f(w^{(t)}, \mathcal{X}) - \mathcal{Y}\right)\right\| \\
    & = \frac{1}{n} \sqrt{\left(f(w^{(t)}, \mathcal{X}) - \mathcal{Y}\right)^\top \nabla f(w^{(t)}, \mathcal{X}) \nabla f(w^{(t)}, \mathcal{X})^\top \left(f(w^{(t)}, \mathcal{X}) - \mathcal{Y}\right)}  \\
    & \geq \sqrt{\frac{2 \lambda_{\min} (\Theta(t))}{n}}  \sqrt{\mathcal{L}_n (w^{(t)})} \\
    & \geq \sqrt{\frac{\lambda_{\min} (\widehat{\Theta})}{n}}  \sqrt{\mathcal{L}_n (w^{(t)})}
\end{align*}

holds for any $t \geq 0$, which means that $\mathcal{L}_n (w^{(t)})$ satisfies the Uniform-LGI for any $t \geq 0$ with
\begin{equation*}
    c_n = \sqrt{\lambda_{\min} (\widehat{\Theta})/ n}, \quad \theta_n = 1 / 2.
\end{equation*}

For the convergence rate, by equation (\ref{proof of th1 convergence rate}), we can directly get $\mathcal{L}_n(w^{(t)})$ converges to zero with a linear convergence rate:
\begin{equation*}
    \mathcal{L}_n(w^{(t)}) \leq \exp \left(- \lambda_{\min} (\widehat{\Theta}) t / n\right)  \mathcal{L}_n(w^{(0)}).
\end{equation*}

For the generalization bound, we first bound the initial loss value under the random initialization. 
By the property of the target function, we have
\begin{align*}
    \mathcal{L}_n{(w^{(0)})} & = \frac{1}{2n} \sum_{i=1}^n \left( (v^{(0)})^\top \phi (U^{(0)} x_i) - y_i\right)^2 \\
    & \leq \frac{\sum_{i=1}^n \left((v^{(0)})^\top \phi (U^{(0)} x_i)\right)^2 + y_i^2}{n} \\
    & \leq \frac{1}{n} \sum_{i=1}^n \left((v^{(0)})^\top \phi (U^{(0)} x_i)\right)^2 + \frac{(c^*)^2}{n} \lambda_{\max} (\mathcal{X} \mathcal{X}^\top).
\end{align*}

Since the entries of $v^{(0)}$ are drawn i.i.d. from $U\{-1/\sqrt{m}, 1/\sqrt{m}\}$, then by Lemma \ref{Lemma D.10}, for each $i \in [n]$, with probability at least $1-\delta/12n$ over $v^{(0)}$,
\begin{equation*}
   \frac{1}{n} \left((v^{(0)})^\top \phi (U^{(0)} x_i)\right)^2 \leq \frac{2}{mn} \log \left(\frac{24n}{\delta}\right) \left\|\phi (U^{(0)} x_i)\right\|^2.
\end{equation*}

Taking the union bound over all $i = 1, 2, \ldots, n$, we have with probability at least $1-\delta/12$ over the random initialization,
\begin{equation}\label{proof th5 initial value}
    \begin{aligned}
    \frac{1}{n} \sum_{i=1}^n \left((v^{(0)})^\top \phi (U^{(0)} x_i)\right)^2 & \leq  \frac{2 \log (24n / \delta)}{mn} \sum_{i=1}^n \left\|\phi (U^{(0)} x_i)\right\|^2 \\
    & =  \frac{2 \log (24n / \delta)}{mn} \left\|\phi \left(U^{(0)} \mathcal{X}^\top\right)\right\|_F^2 \\
    & \leq \frac{2 \log (24n / \delta)}{mn} \left\|U^{(0)} \mathcal{X}^\top\right\|_F^2 \\
    & \leq \frac{2 \log (24n / \delta)}{mn} \left\|U^{(0)}\right\|_F^2 \left\|\mathcal{X}\right\|^2 \\
    & = \frac{2 \log (24n / \delta)}{mn} \left\|U^{(0)}\right\|_F^2 \lambda_{\max} (\mathcal{X} \mathcal{X}^\top).
    \end{aligned}
\end{equation}

For the Gaussian random matrix $U^{(0)} \sim \mathcal{N} (0, \frac{1}{d} \mathbb{I}_{m \times d})$,
by Lemma \ref{Lemma D.11}, we have with probability at least $1-\delta/24$ over the random initialization,
\begin{equation}\label{chi square}
    \frac{\left\|U^{(0)}\right\|_F^2}{m} \leq 1 + 2 \sqrt{\frac{\log (24 / \delta)}{md}} + \frac{2 \log (24 / \delta)}{md}.
\end{equation}

Taking the union bound of equation (\ref{maxeig_covariance}), (\ref{proof th5 initial value}) and (\ref{chi square}), we have with probability at least $1 - \delta / 6$ over the samples and the random initialization,
\begin{align*}
    \mathcal{L}_n{(w^{(0)})} & \leq \left(\frac{2 \log (24n / \delta)}{mn} \left\|U^{(0)}\right\|_F^2 + \frac{(c^*)^2}{n}\right)
    \lambda_{\max} (\mathcal{X} \mathcal{X}^\top) \\
    & \leq \left(2 \log (24n / \delta) \left(1 + 2 \sqrt{\frac{\log (24 / \delta)}{md}} + \frac{2 \log (24 / \delta)}{md}\right) + (c^*)^2\right) \left( C(1+\sqrt{d/n}) + \sqrt{\frac{\log(48 / \delta)}{cn}}\right)^2 / \lambda_0 \\
    & \leq \left(2 \log (24n / \delta) \left(2 + \frac{3 \log (24 / \delta)}{md}\right) + (c^*)^2\right) \left( C(1+1/\sqrt{\gamma_0}) + \sqrt{\frac{\log(48 / \delta)}{cn}}\right)^2 / \lambda_0 \\
    & \leq \left(2 \log (24n / \delta) \left(2 + \frac{3 \gamma_1 \log (24 / \delta)}{n}\right) + (c^*)^2\right) \left( C(1+1/\sqrt{\gamma_0}) + \sqrt{\frac{\log(48 / \delta)}{cn}}\right)^2 / \lambda_0
\end{align*}

where $c$ and $C$ are two positive constants that depend only on the subgaussian moment of the entries.

Therefore, there exists a constant $\tilde{C}$ that depends only on $\gamma_0, \gamma_1, \lambda_0$ and the subgaussian moment of the entries, such that with probability at least $1 - \delta / 6$ over the samples and the random initialization,
\begin{equation}\label{get M_delta}
    \mathcal{L}_n{(w^{(0)})} \leq \tilde{C} \log (n / \delta).
\end{equation}

Thus, we can set $M_\delta = \tilde{C} \log (n / \delta)$.

By Lemma \ref{Lemma D.8}, when $\varepsilon = 0$, $r_{n, \delta, \varepsilon}$ can be bounded as
\begin{equation*}
    r_{n, \delta, \varepsilon} = 2 \sqrt{\frac{n M_\delta}{\lambda_{\min} (\widehat{\Theta})}} \leq 4 \sqrt{\frac{n M_\delta}{\lambda_{\min} (\mathcal{X} \mathcal{X}^\top)}}.
\end{equation*}

Taking the union bound of equation  (\ref{mineig_covariance}) and (\ref{get M_delta}), if $m \geq \poly \left(n, \lambda^{-1}_{\min} (\widehat{\Theta}), \delta^{-1}\right)$, then with probability at least $1 - \tau^{d-n+1} - \tau^d - \delta / 6$ over the samples and random initialization,
\begin{equation}\label{proof th5 r/sqrt(n)}
\begin{aligned}
    r_{n, \delta, \varepsilon} & \leq 4 \sqrt{\frac{n M_\delta}{\lambda_{\min} (\mathcal{X} \mathcal{X}^\top)}} \\
    & \leq
    4 \sqrt{\frac{n \tilde{C} \log (n / \delta)}{\lambda_{\min} (\mathcal{X} \mathcal{X}^\top)}} \\
    & \leq
    \frac{4 \sqrt{n \tilde{C} \log (n / \delta)}}{ \frac{\tau}{C_1 \sqrt{\lambda_1}}(\sqrt{d} - \sqrt{n-1})} \\
    & \leq \mathcal{O} (\sqrt{\log (n / \delta)}).
\end{aligned}
\end{equation}

Now we begin to bound the distance $\left\|u_{r}^{(t)}-u_{r}^{(0)}\right\|$ for each $r \in [m]$.

Notice that
\begin{align*}
    \left\|\frac{d u_{r}^{(t)}}{d t}\right\| & = \left\|\nabla_{u_{r}^{(t)}} \mathcal{L}_n (w^{(t)})\right\| \\
    & = \left\|\frac{1}{n} \sum_{i=1}^n \left(f(w^{(t)}, x_i) - y_i\right)  v_{r} \phi ' (u_{r}^{(t)} x_i) x_i\right\| \\
    & \leq \frac{1}{n\sqrt{m}} \sum_{i=1}^n \left|f(w^{(t)}, x_i) - y_i\right| \\
    & \leq \frac{1}{\sqrt{nm}} \left\|f (w^{(t)}, \mathcal{X}) - \mathcal{Y}\right\| \\
    & \leq \sqrt{\frac{2 \mathcal{L}_n (w^{(0)})}{m}} \exp \left(- \lambda_{\min} (\widehat{\Theta}) t / 2n\right).
\end{align*}

Hence,
\begin{equation*}
    \left\|u_{r}^{(t)} - u_{ r}^{(0)}\right\| \leq \int_{0}^t  \left\|\frac{d u_{r}^{(s)}}{d s}\right\| d s \leq \frac{2n}{ \lambda_{\min} (\widehat{\Theta})}  \sqrt{\frac{2 \mathcal{L}_n (w^{(0)})}{m}} \leq \sqrt{\frac{2n}{m  \lambda_{\min} (\widehat{\Theta})}} r_{n, \delta, \varepsilon}.
\end{equation*}

Now we apply Lemma \ref{Lemma D.9} with $B = r_{n, \delta, \varepsilon}, R =  \sqrt{\frac{2n}{m  \lambda_{\min} (\widehat{\Theta})}} r_{n, \delta, \varepsilon}$, we get with probability at least $1-\delta/12$ over the random initalization,
\begin{equation*}
    \mathcal{R}_S (\mathcal{F})  \leq \frac{B}{\sqrt{2n}} \left(1 + \left(\frac{2 \log (24 / \delta)}{m}\right)^{1/4}\right) + 2R^2 \sqrt{md} + R\sqrt{2 \log (24 / \delta)}.
\end{equation*}

Then by equation (\ref{proof th5 r/sqrt(n)}), if $m \geq \poly \left(n, \lambda^{-1}_{\min} (\widehat{\Theta}), \delta^{-1}\right)$, we have with probability at least $1 - \tau^{d-n+1} - \tau^d - \delta / 4$ over the samples and random initialization,
\begin{equation*}
    \mathcal{R}_S (\mathcal{F})  \leq \mathcal{O} \left(\sqrt{\frac{\log (n / \delta)}{n}}\right).
\end{equation*}

Finally, by Lemma \ref{lemma C.1}, we have with  probability at least $1 - \tau^{d-n+1} - \tau^d - \delta$ over the samples and random initialization,
\begin{equation*}
    \mathbb{E}_{(x, y) \sim \mathcal{D}} \left[\tilde{\ell} \left(f (w^{(\infty)}, x), y\right)\right] \leq  \mathcal{O} \left(\sqrt{\frac{\log (n / \delta)}{n}}\right),
\end{equation*}

for some constant $\tau \in (0, 1)$ that depend only on the subgaussian moment of the entries.

This completes the proof.

\end{proof}

\textbf{Comparison.} \
This result is related to \citet{arora2019fine}, which gave an NTK-based generalization bound for overparameterized two-layer ReLU neural networks. This result matches with theirs in the sense that we discover some underlying functions that are provably learnable. Examples of learnable target functions in \citet{arora2019fine} include polynomials $y = (\beta^\top x)^p$, non-linear activations $y = \cos (\beta^\top x) - 1$, $y = \tilde{\phi} (\beta^\top x)$ with $\tilde{\phi} (z) = z \cdot \arctan (z / 2)$, $\|\beta\| \leq 1$. Our result, furthermore, expands the target function class that is provably learnable since we only require $\tilde{\phi}$ to be Lipshcitz. In addition, they set the standard deviation of the initialization to be at most $\mathcal{O} (1 / n)$, whereas we use a different initialization with
order $\mathcal{O}(1 / \sqrt{d})$ that is more often applied in practice.
This result is also related to \citet{liu2020loss}, which proved that overparameterized deep neural networks satisfy the PL condition. Further, we extend this work by analyzing the generalization.

\end{document}